%% file: main.tex
\makeatletter
\expandafter\def\csname ver@everyshi.sty\endcsname{3000/12/31}
\makeatother
\PassOptionsToPackage{dvipsnames, table}{xcolor}
\documentclass[10pt,twocolumn,letterpaper]{article}

\usepackage{cvpr}              %

\input{preamble}

\definecolor{cvprblue}{rgb}{0.21,0.49,0.74}
\usepackage[pagebackref,breaklinks,colorlinks,allcolors=cvprblue]{hyperref}

\def\paperID{*****}
\def\confName{CVPR}
\def\confYear{2026}

\title{\methodname: Rotation-Invariant Non-Rigid Correspondences}

\author{Maolin Gao$^{1,3}$ ~~~~
Shao Jie Hu-Chen$^{1,3}$ ~~~~
Congyue Deng$^{2,4}$ \\ 
Riccardo Marin$^{1,3}$ ~~~~
Leonidas Guibas$^{2}$  ~~~~
Daniel Cremers$^{1,3}$ \\
$^1$ TUM
\quad
$^2$ Stanford University
\quad 
$^3$ MCML
\quad 
$^4$ MIT
}
\begin{document}

\twocolumn[{%
\renewcommand\twocolumn[1][]{#1}%
\maketitle
\begin{center} %
\vspace{-1.0cm}
    \captionsetup{type=figure}
    
    \makebox[\textwidth][c]{%
        \hspace{-0.0cm}%
        \includegraphics[width=2.4\columnwidth,trim=400 720 450 630, clip]{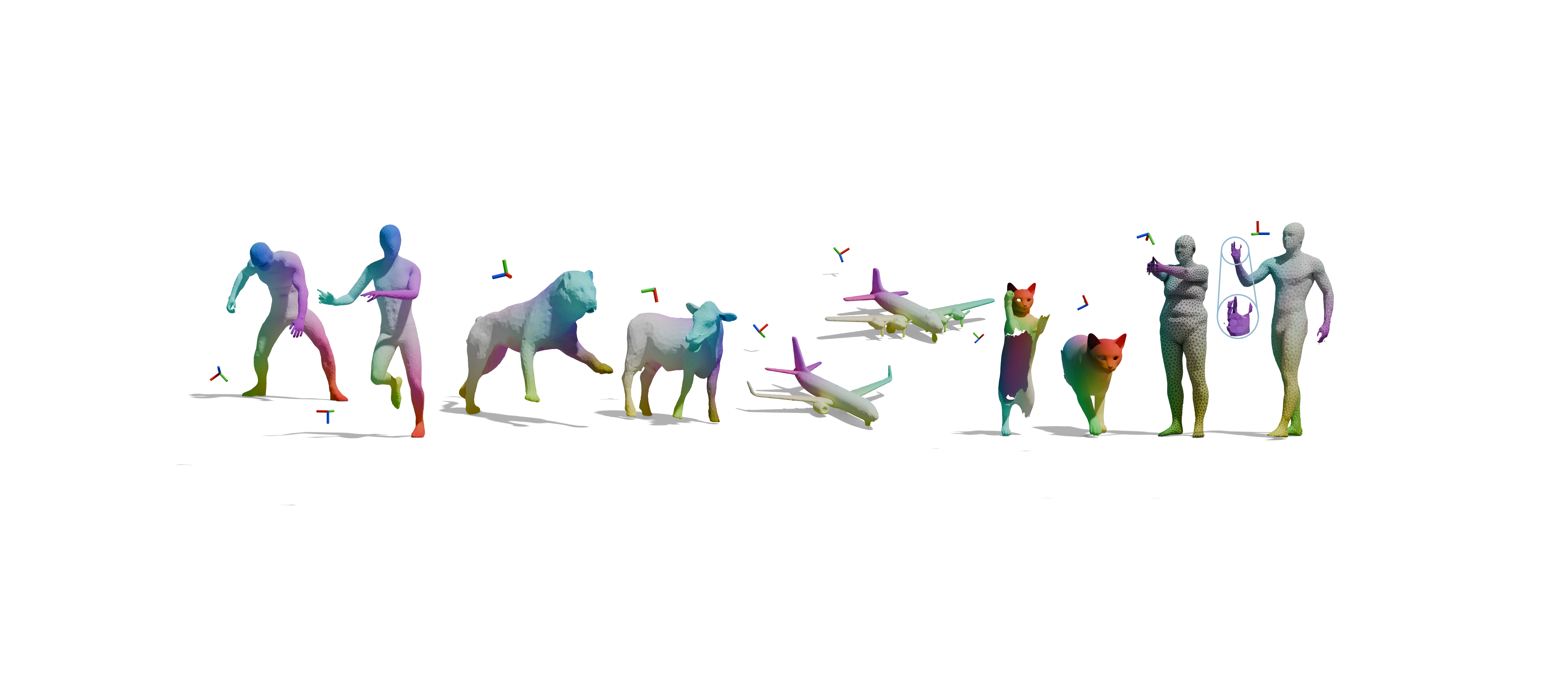}%
    }
    
    \caption{
    We propose \textbf{\methodname}, an unsupervised framework unifying rigid and non-rigid shape matching.
    Our end-to-end point feature extractor, the \textit{\netname}, is invariant to extrinsic rotations and robust to intrinsic symmetries, without relying on handcrafted descriptors (shape orientations indicated by RGB frames).
    It handles challenging non-rigid matching cases with arbitrary poses, non-isometry, partiality, non-manifoldness, and noise. From left to right we show pairs of humanoid, quadruped, airplane, partial and non-manifold shapes.
    }
    \label{fig:reb_teaser}
    
\vspace{0.2cm}
\end{center}%
}]

\begin{abstract}

Dense 3D shape correspondence remains a central challenge in computer vision and graphics as many deep learning approaches still rely on intermediate geometric features or handcrafted descriptors,  limiting their effectiveness under non-isometric deformations, partial data, and non-manifold inputs. To overcome these issues, we introduce \methodname, an unsupervised, rotation-invariant dense correspondence framework that effectively unifies rigid and non-rigid shape matching. The core of our method is the novel \netname, a feature extractor that integrates vector-based SO(3)-invariant learning with orientation-aware complex functional maps to extract robust features directly from raw geometry. This allows for a fully end-to-end, data-driven approach that bypasses the need for shape pre-alignment or handcrafted features. Extensive experiments show unprecedented performance of \methodname~across challenging non-rigid matching tasks, including arbitrary poses, non-isometry, partiality, non-manifoldness, and noise.

\end{abstract}

\input{sec/1_intro}

\input{sec/2_related_work_compact}

\input{sec/2_background}

\input{sec/3_network}

\input{sec/4_method}

\input{sec/5_experiment}

\input{sec/6_conclusion}

{
    \small
    \bibliographystyle{ieeenat_fullname}
    \bibliography{main}
}

\input{sec/X_suppl}

\end{document}

%% file: preamble.tex
\usepackage[accsupp]{axessibility}  %

\newcommand{\duofm}{DUOFM}
\newcommand{\consistfm}{CnsFM}

\newcommand{\ulrssm}{URSSM}

\newcommand{\sfm}{SmpFM}
\newcommand{\hybridfm}{HbrFM}
\newcommand{\sms}{SMS}
\newcommand{\echomatch}{EchoMatch}
\newcommand{\wormhole}{Wormhole}
\newcommand{\diffusionnet}{DiffusionNet}

\input{command_notation}

\usepackage{graphicx}
\usepackage{tabularx}
\usepackage{amsmath}
\usepackage{amsthm}
\newtheorem{theorem}{Theorem}%

\usepackage{mathtools}
\usepackage{changepage}
\usepackage{multirow}

\usepackage{soul} %
\usepackage[normalem]{ulem} %
\usepackage{adjustbox}

\usepackage[]{overpic}
\usepackage{stfloats}

\newcommand{\methodname}{RINO}
\newcommand{\netname}{RINONet}
\newcommand{\wks}{wks}
\newcommand{\xyz}{xyz}
\newcommand{\hks}{hks}
\newcommand{\shot}{shot}

\usepackage{pgfplots}
\usepgfplotslibrary{statistics}
\pgfplotsset{compat=1.17} 
\usepackage{pgf,tikz}

\definecolor{cPLOT1}{RGB}{80,150,80}
\definecolor{cPLOT3}{RGB}{247,179,43}

%% file: command_notation.tex
\DeclareMathOperator*{\argmin}{arg\,min}

\newcommand{\vb}{\mathbf{b}}
\newcommand{\vd}{\mathbf{d}}
\newcommand{\ve}{\mathbf{e}}

\newcommand{\vf}{\mathbf{f}}
\newcommand{\vg}{\mathbf{g}}
\newcommand{\vh}{\mathbf{h}}

\newcommand{\vw}{\mathbf{w}}
\newcommand{\vu}{\mathbf{u}}

\newcommand{\vz}{\mathbf{z}}

\newcommand{\mA}{\mathbf{A}}
\newcommand{\mC}{\mathbf{C}}

\newcommand{\mX}{\mathbf{X}}

\newcommand{\mV}{\mathbf{V}}
\newcommand{\mF}{\mathbf{F}}

\newcommand{\mR}{\mathbf{R}}
\newcommand{\mQ}{\mathbf{Q}}

\newcommand{\mI}{\mathbf{I}}

\newcommand{\mM}{\mathbf{M}}

\newcommand{\mPhi}{\mathbf{\Phi}}

\newcommand{\mPi}{\mathbf{\Pi}}

\newcommand{\cH}{\mathcal{H}}
\newcommand{\cG}{\mathcal{G}}
\newcommand{\cF}{\mathcal{F}}
\newcommand{\cT}{\mathcal{T}}
\newcommand{\cX}{\mathcal{X}}
\newcommand{\cY}{\mathcal{Y}}

\newcommand{\cL}{\mathcal{L}}

\newcommand{\rmSO}[1]{\mathrm{S}\mathrm{O}(#1)}

\newcommand{\nx}{n_{\cX}}
\newcommand{\ny}{n_{\cY}}

\newcommand{\mCxy}{\mC_{\cX\cY}}
\newcommand{\mCyx}{\mC_{\cY\cX}}

\newcommand{\mQxy}{\mQ_{\cX\cY}}
\newcommand{\mQyx}{\mQ_{\cY\cX}}
\newcommand{\mPhix}{\mPhi_{\cX}}
\newcommand{\mPhiy}{\mPhi_{\cY}}

\newcommand{\mPixx}{\mPi_{\cX\cX}}

\newcommand{\mPixy}{\mPi_{\cX\cY}}
\newcommand{\mPiyx}{\mPi_{\cY\cX}}
\newcommand{\mFx}{\mF_{\cX}}
\newcommand{\mFy}{\mF_{\cY}}

\newcommand{\mChxy}{\hat{\mC}_{\cX\cY}}
\newcommand{\mQhxy}{\hat{\mQ}_{\cX\cY}}
\newcommand{\mChxx}{\hat{\mC}_{\cX\cX}}
\newcommand{\mChyy}{\hat{\mC}_{\cY\cY}}

\newcommand{\diag}{\mathrm{diag}}
\newcommand{\re}{\mathrm{Re}}
\newcommand{\Edata}{\mathcal{E}_\mathrm{data}}
\newcommand{\Ereg}{\mathcal{E}_\mathrm{reg}}
\newcommand{\relu}{\mathrm{ReLU}}

\newcommand*{\VNINV}{\mathrm{VN}\textsf{-}\mathrm{invariant}}
\newcommand*{\VNLIN}{\mathrm{VN}\textsf{-}\mathrm{linear}}
\newcommand*{\VNMLP}{\mathrm{VN}\textsf{-}\mathrm{MLP}}
\newcommand{\Ltotal}{L_{\mathrm{total}}}
\newcommand{\Lbij}{L_{\mathrm{bij}}}
\newcommand{\Lorth}{L_{\mathrm{orth}}}
\newcommand{\Lstruct}{L_{\mathrm{struct}}}
\newcommand{\Lcouple}{L_{\mathrm{couple}}}
\newcommand{\Lcontrast}{L_{\mathrm{contr}}}
\newcommand{\Lpc}{L_{\mPi\mC}}
\newcommand{\Lpq}{L_{\mPi\mQ}}

\newcommand{\bbR}{\mathbb{R}}
\newcommand{\bbC}{\mathbb{C}}

%% file: sec/1_intro.tex
\vspace{-5pt}
\section{Introduction}
\label{sec:intro}

Establishing correspondence between 3D shapes is a foundational and challenging problem in computer vision and graphics.
While deep learning has enhanced classical shape matching by replacing handcrafted descriptors with learned features, most approaches still depend on intermediate geometric representations rather than operating directly on raw shape data \cite{litany2017deep,cao2023unsupervised,halimi2019unsupervised,attaiki2023clover, magnet2024memory, donati2022DeepCFMaps}.
This reliance on human-designed features, which are often specialized (e.g., for isometric shapes), fundamentally limits robustness and generalization across diverse, challenging conditions like non-isometric deformations, partiality, and non-manifold structures. 
Hence, a compelling alternative is to start learning directly from the raw data — namely, using the vertex coordinates as input. Such an approach appears appealing for several reasons: it does not require preprocessing, it does not rely on theoretical assumptions, and it has been the most successful approach in other domains (e.g., CNNs generally take RGB pixel values as input directly, LLMs start from words or sub-words tokens). In accordance with Sutton's bitter lesson \cite{Sutton2019BitterLesson}, we believe that the path toward robust shape correspondence lies in scalable end-to-end learning, with little or no human-encoded sophistication.

However, directly processing raw 3D geometry introduces the challenge of \textit{shape–pose entanglement}, which has traditionally led to splitting the problem into two separate subproblems: rigid matching, estimating global rotations and translations; and non-rigid matching, assuming pose alignment to infer dense correspondences.
This separation, however, is inherently ill-posed, as global transformations and local deformations are tightly coupled.
The distinction becomes especially ambiguous when canonical poses are hard to define, for example, between a running and a sitting human, or in cases of partial shape matching.

To eliminate the need for rigid pose alignment, fully intrinsic shape matching methods~\cite{donati2022DeepCFMaps, cao2023unsupervised} leverage handcrafted intrinsic features (e.g., $\text{\wks}$~\cite{aubry_wave_2011}) and intrinsic architectures (e.g., $\text{\diffusionnet}$~\cite{sharp2022diffusionnet}).
Yet, reliance solely on intrinsic information fundamentally limits their scope: it cannot theoretically distinguish shape symmetries, and it yields severely degraded and incompatible features for non-smooth or irregular shapes~\cite{liu2017dirac, ye2018dirac, ye2020dissertation, gao2023sigma, gao2021multi}.

In this work, we propose a paradigm that goes beyond the need of manual pose alignment and does not suffer from any of the previous limitations. 
Our insight stems from three properties which we believe coexist in the ideal shape descriptor: first, it should be robust to noise and artifacts; second, it incorporates the geometric prior provided by the underlying surface; finally, it characterizes shape vertices regardless of the extrinsic orientation of the shape. To our knowledge, no available methods respect all these points.
To this end, our idea is to develop a rotation-invariant architecture, which factors out the global shape orientation from the articulations' non-rigid deformation. 
In particular, we revise the components of the popular \diffusionnet~\cite{sharp2022diffusionnet} to be SO(3)-equivariant end-to-end, using a vector neuron representation. Such modifications are well-motivated by theoretical derivations that we report in the appendix. 
Our network can learn SO(3)-invariant, geometric features directly from raw 3D geometry. The feature gradients are further integrated with the orientation-aware Complex Functional Maps (CFMaps) \cite{donati2022CFMaps, donati2022DeepCFMaps} to map tangent bundles between shapes. 
The entire system is then trained using a novel unsupervised loss that unifies Functional Maps (FMaps), CFMaps, and pointwise maps, creating a unified framework that leverages their complementary strengths for achieving high-quality, robust features and correspondences (Fig.~\ref{fig:reb_teaser}-\ref{fig:SO3-inv}).
To summarize, our key contributions are:

\begin{itemize}     
\item We introduce the first unsupervised rotation-invariant dense correspondence method, \textbf{\methodname}, that unifies rigid and non-rigid matching.
\item Our novel feature extractor, \textbf{\netname}, achieves direct, end-to-end, and robust feature extraction from raw geometry, and discovers complex, scalable features without the need for handcrafted descriptors or shape pre-alignment.
\item We integrate SO(3)-invariant learning with orientation-aware complex FMaps on tangent vector fields, yielding a coherent network architecture and unsupervised loss. 
\item Extensive experiments demonstrate superior performance across challenging non-rigid matching scenarios, including arbitrary poses, non-isometric deformations, partiality, non-manifold structures, and noise perturbations.
\end{itemize}

%% file: sec/2_related_work_compact.tex
\vspace{-3pt}
\section{Related Works}
\label{sec:related_works}

Non-rigid shape matching aims to find correspondences between points on two shapes. We focus on the most relevant works below and refer to \cite{van2011survey, sahillioglu_recent_2020} for an extensive review.

\vspace{-5pt}
\paragraph{Learning Non-Rigid Correspondences.}
The Deep Functional Maps framework \cite{ovsjanikov2012functional, litany2017deep} enabled deep learning in non-rigid matching. It uses a learnable feature extractor to refine initial descriptors (e.g., \shot~\cite{salti2014shot}) before computing a functional map. This framework has since been extended to zero-shot \cite{attaiki2022ncp, attaiki2023snk}, weakly supervised \cite{sharma2020weakly} and unsupervised settings \cite{halimi2019unsupervised, cao2022unsupervised}, incorporating more advanced objectives \cite{donati2020deepGeoMaps, sun2023spatially, le2024optimal}. While some explored learning bases \cite{marin2020lie, jiang2023nie, gao2025coe}, the theoretical strength of the Laplace-Beltrami Operator (LBO) bases remains preferred.
As a feature extractor, initial works used ResNet with \shot~or PointNet++/KPConv with \xyz~\cite{sharma2020weakly, donati2020deepGeoMaps},  but lately \diffusionnet~\cite{sharp2022diffusionnet} has emerged as the standard for learning robust descriptors \cite{attaiki2021dpfm, cao2022unsupervised, donati2022DeepCFMaps, sun2023spatially, attaiki2023clover} among other networks~\cite{wiersma2022deltaconv, maesumi2025poissonnet}. Notably, the focus of these works has been mainly on designing losses. We instead propose a novel feature extractor that learns SO(3)-invariant features directly from raw 3D geometry.

\begin{figure}[t!] 
\vspace{-0.7cm}
    \centering
    \includegraphics[width=0.9\columnwidth,trim=150 265 270 20, clip]{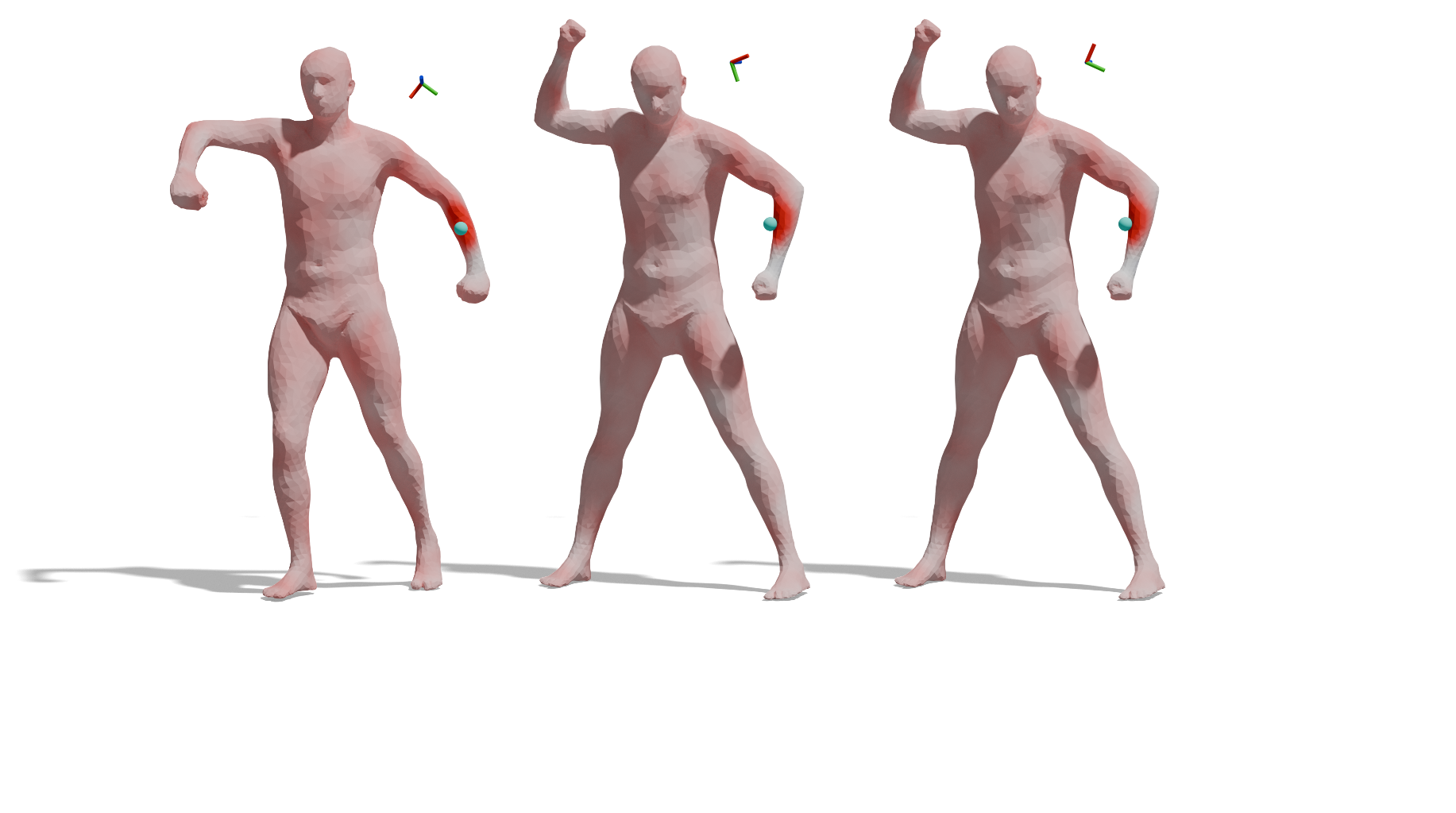}
    \caption{\textbf{Our learned SO(3)-invariant features.} We visualize the Euclidean distance to the blue surface point in the learned feature space (darker red means higher similarity). Semantic correspondences have a similar feature similarity pattern (left \& mid.). The learned feature is invariant to rotations (mid. \& right). Shape orientations are depicted by RGB frames.}
    \label{fig:feat}
\vspace{-0.3cm}
\end{figure}

\vspace{-5pt}
\paragraph{Shape Descriptors in 3D Matching.}

Historically, 3D shape analysis relied on handcrafted descriptors like \hks~\cite{sun_concise_2009}, \wks~\cite{aubry_wave_2011}, and \shot~\cite{salti2014shot} to establish canonical feature spaces, which modern deep learning models now use as refined inputs. However, \shot~is highly sensitive to discretization and mesh connectivity~\cite{attaiki2021dpfm}, while \wks~is limited by an isometric assumption (Sec.~\ref{sec:exp}) that fails under significant non-rigid deformation and cannot theoretically disambiguate intrinsic symmetry. This has led to an unprincipled reliance on pre-aligned \xyz~coordinates in challenging datasets~\cite{cao2023unsupervised, magnet2024memory, pierson2025diffumatch, huang2013consistent, bastian2023hybrid}. While recent foundational 2D features (e.g., DINOv3) offer high expressivity~\cite{dutt2024diff3f, chen2025dv, xiehm2025echomatch}, they lack geometric understanding of shape orientation, require upright alignment for multi-view aggregation, and struggle to resolve rotational equivariance from data alone. To address these gaps, we propose a principled SO(3)-invariant feature extractor that directly processes raw \xyz~coordinates, unifying the input feature choice and enabling genuine data-driven learning from 3D shapes.

\vspace{-5pt}
\paragraph{Intrinsic Shape Symmetry.}
Current approaches either use non-rigorous orientation-aware features \cite{ren2018continuous, gao2023sigma} or rely on impractical extrinsic pre-alignment \cite{sharma2020weakly, pierson2025diffumatch} to resolve symmetries. While \diffusionnet~combined with intrinsic descriptors attempts to learn anisotropic diffusion, it assumes consistently oriented normals, which rarely applies in real data.
This leads to a lack of performance guarantees, especially for partial shapes. The use of SO(3)-\emph{variant} \diffusionnet~with \xyz~further hinders symmetry disambiguation due to poor handling of shape rotations (cf. Sec.~\ref{sec:sym}). We build upon CFMaps \cite{donati2022CFMaps}, which theoretically encodes only orientation-preserving maps, and combine it with our learned SO(3)-invariant features from arbitrarily oriented raw geometry. This eliminates intrinsic symmetry flips without requiring extrinsic pre-alignment.

\vspace{-5pt}
\paragraph{Equivariant Learning in 3D Understanding.}
Equivariant neural networks (ENN) inject group symmetry into the learning process \cite{cohen2016gcnn, bronstein2017geometricDL,dehaan2021, poulenard2019effective}. Architectures like Equivariant Point Network \cite{chen2021equivariant} offer approximate SE(3)-equivariance, while Tensor Field Networks \cite{thomas2018tensor} and e3nn \cite{geiger2022e3nn} achieve strict continuous SO(3)-equivariance using steerable convolutions, often at a high computational cost. While successful in rigid tasks \cite{zhong2023multi, Park_2024_CVPR}, applying SO(3)-equivariant networks to fine-grained, non-rigid correspondence remains a less-explored challenge.

%% file: sec/2_background.tex
\section{Background}

\paragraph{Functional Maps \& Complex Functional Maps.}
The functional maps framework \cite{ovsjanikov2012functional} encodes shape matching as a compact function-to-function mapping $\mC$ between scalar-valued features $\mFx$ and $\mFy$ on two shapes $\cX$ and $\cY$. Using the LBO eigenfunctions $\mPhi_{\cdot}$ as a basis, the features are represented by coefficients $\mPhi_{\cdot}^{\dagger} \mF_{\cdot}$. The functional map $\mC$ is recovered by minimizing an energy that enforces feature preservation and structural regularity:

\vspace{-5pt}
\begin{align}
    \Edata(\mC) & = \| \mC \mPhix^{\dagger} \mFx - \mPhiy^{\dagger} \mFy \|_F^2, \\
    \mCxy & = \argmin_{\mC} \Edata(\mC) + \lambda \Ereg(\mC) 
    \label{eq:fmap}
\end{align}

\noindent where $\Edata$ promotes feature preservation, and $\Ereg$ encourages structural properties, implicitly promoting isometry \cite{ren2019structure, donati2022DeepCFMaps}. Subscripts $_{\cX/\cY}$ are omitted for general shapes.

The \emph{complex} functional maps $\mQ$ extend the FMaps concept to estimate correspondences between tangent vector fields (versus scalar functions) \cite{donati2022CFMaps}. The optimization structure remains similar to Eq.~\ref{eq:fmap}, but it uses vector-valued features (versus scalar-valued) and the complex eigenbasis of the Connection Laplacian (versus the LBO). 
The key benefit of $\mQ$ is its integration of surface orientation, making it suitable for disambiguating intrinsic shape symmetries. Both $\mC$ and $\mQ$ are small matrices, providing a highly compact representation compared to pointwise permutation matrices.

\paragraph{Equivariance and Vector Neurons.}
Equivariance is a fundamental property where a function $\cF$ maintains the structure of a transformation $\mR$ applied to its input $\vu$:
$\cF(\vu \mR) =  \cF(\vu) \mR,~~~\forall \mR \in \cT$,
where $\cT$ is a transformation group (e.g. 3D rotations). Our work focuses on $\rmSO{3}$-equivariance due to its relevance in 3D vision. The vector neuron (VN) framework \cite{deng2021vn} provides a general method for constructing SO(3)-equivariant networks. The core idea is to extend traditional scalar neurons to 3D vectors, enabling a simple, explicit mapping of SO(3) actions to the hidden feature space. This allows for straight-forward generalization of classic operations, such as the VN-Linear layer:
\begin{align*}
\VNLIN (\vu \mR) & = \mM (\vu \mR)  \\ &= (\mM \vu) \mR = \VNLIN (\vu) \mR, 
\end{align*}
where $\vu \in \bbR^{c\times 3}$ is the input with $c$ feature channels and $\mM \in \bbR^{c\times c}$ are the learnable parameters. 
The additional dimension of size $3$ is termed the \emph{VN dimension}. Since invariance is a specific case of equivariance where $\cF(\vu \mR) = \cF(\vu)$, our network learns $\mathrm{SO}(3)$-equivariant latent features that can be easily converted into invariant features to predict rotation-invariant correspondences.

\begin{figure}[t!]
    \centering
    \vspace{-0.7cm}
    \hspace*{0.3cm}
    \includegraphics[width=1.0\columnwidth,trim=120 170 30 7, clip]{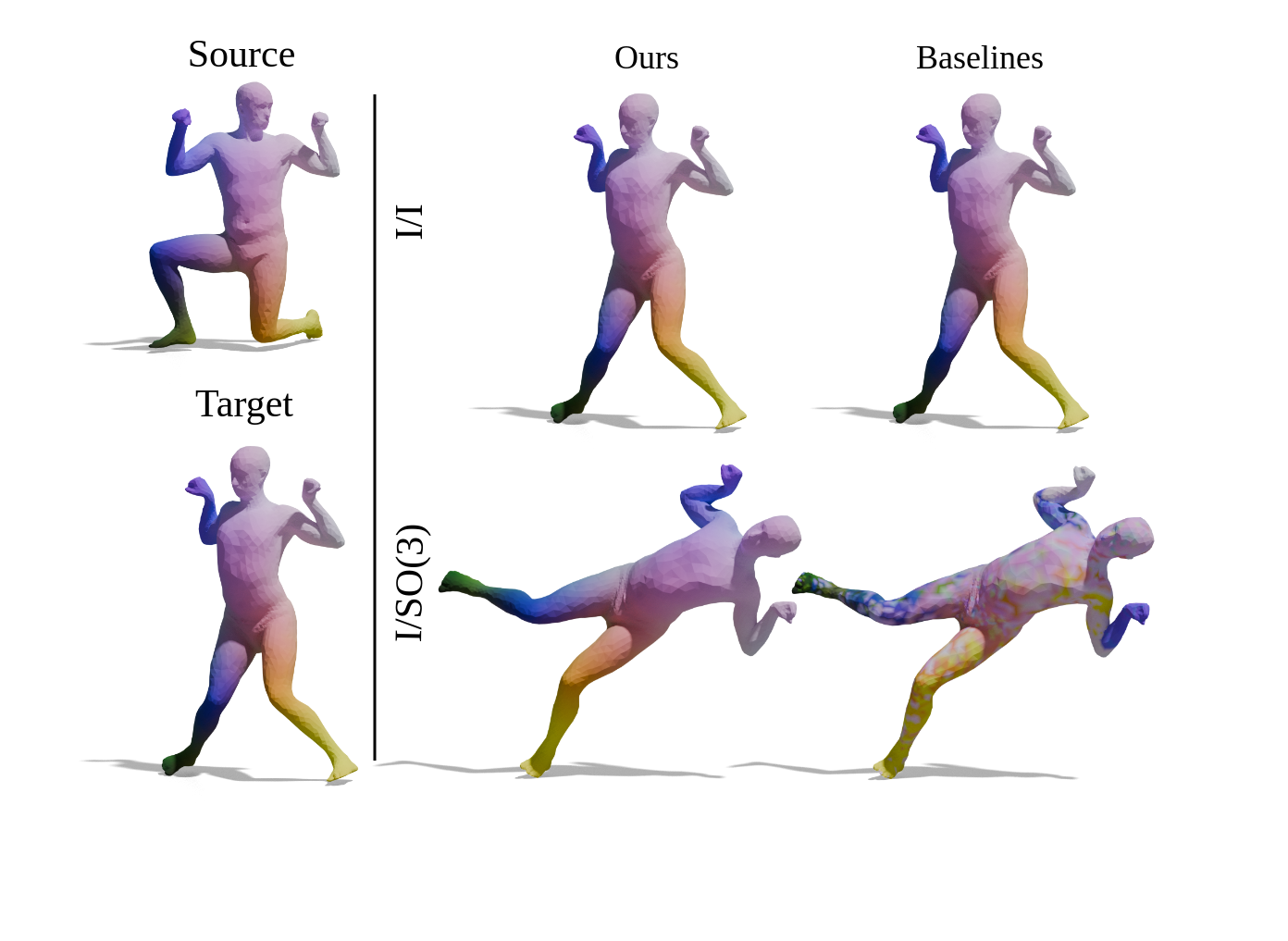}
    \caption{\textbf{SO(3)-invariant correspondences.} Our method is unaffected by shape orientations. In contrast, baselines (represented by \duofm) perform well only when train and test shapes are pre-aligned (I/I), failing dramatically on unseen rotations (I/SO(3)).}
    \label{fig:SO3-inv}
\end{figure}

\begin{figure*}[th!] 
    \centering
    \vspace{-0.9cm}
    \hspace*{-0.8cm}
    \includegraphics[width=2.2\columnwidth,trim=0 60 0 30, clip]{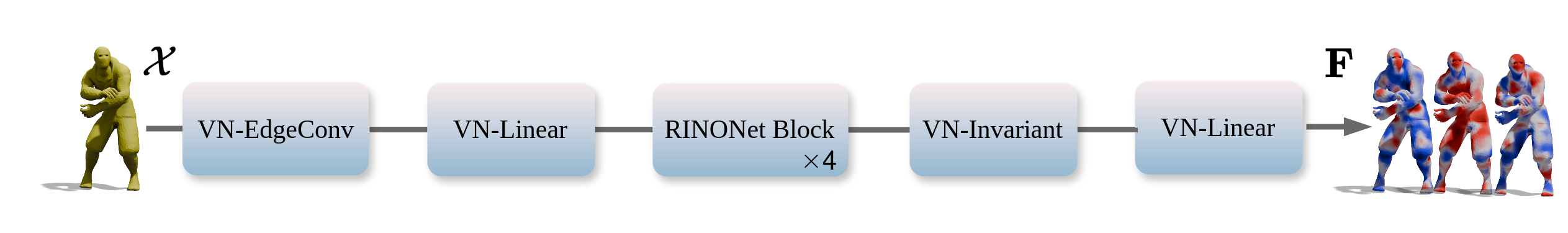}
    \caption{We propose \textbf{\netname}, which learns SO(3)-invariant features $\mF$ from input shape $\cX$. Our novel network inherits all nice properties from DiffusionNet~\cite{sharp2022diffusionnet}, and learns smooth, high-quality per-point features, while additionally retaining invariant to SO(3) actions applied to input shapes by employing \emph{vector neurons} as the feature representation in our hidden layers. The \netname~has a simple structure and contains four consecutive \textbf{\netname~Blocks} at its core, which is combined with VN-EdgeConv and VN-linear layers to achieve the desired I/O dimensionality. The VN-invariant layer is employed to convert SO(3)-equivariant features to invariant ones.}
    \label{fig:network}
    \vspace{-0.3cm}
\end{figure*}

%% file: sec/3_network.tex
\section{SO(3)-Invariant Manifold Learning}
\label{sec:vn-diffnet}

\paragraph{Motivation.}
We seek an ideal 3D shape descriptor characterized by three properties: invariance to extrinsic shape embedding; effective exploitation of surface geometric structure, and robustness to noise and topological variations.
The current standard, \diffusionnet~\cite{sharp2022diffusionnet}, fails to satisfy all these criteria simultaneously. Its reliance on the extrinsic \xyz~coordinates requires either pre-alignment or the use of manually designed (intrinsic) descriptors as input.
To overcome these limitations and realize a purely data-driven, geometric feature extractor, we propose \netname. Our approach maintains feature diffusion on the surface as in \cite{sharp2022diffusionnet}, but integrates VN representations to enforce SO(3)-equivariance (cf. Fig.~\ref{fig:feat}). This integration required a careful architectural redesign to ensure rotational equivariance across all spatial diffusion and gradient operations.

\vspace{-0.3cm}
\paragraph{Overview.}
\netname~starts with a VN-EdgeConv layer \cite{wang2019dgcnn, deng2021vn}, converting the input xyz coordinates $\mV \in \bbR^{n\times 3}$ into features $\bbR^{n\times c\times 3}$ which serves as an aggregation of neighborhood information.
The core structure is composed of four consecutive \netname~blocks, after which we append a VN-invariant layer (\cite[Sec.3.5]{deng2021vn}) to obtain $\rmSO{3}$-invariant features and, finally, a VN-linear layer to obtain output features with desired dimensionality.
Notice that internally our \netname~remains SO(3)-equivariant until the VN-invariant layer (cf. Fig.~\ref{fig:network}). %
In the next, we present the design insights of our novel \emph{\netname~block} and state its property of in-baked SO(3)-equivariance in Thm.~\ref{lemma:network_equivariance}. 
The full architecture is illustrated in Fig.~\ref{fig:network}.
For details please refer to the supp.mat..

\subsection{\netname~Block}
\label{subsec:vn-diffnetblock}
Our \netname~block consists of three main modules: a diffusion layer, a gradient layer, and an MLP (cf. Fig.~\ref{fig:vn-diffblock}). All modules are designed to be SO(3)-equivariant by harnessing the VN representation. The hidden states of our network, $\vu$ and $\vd$, become vector-valued with dimensionality $\bbR^{n\times c\times 3}$ (instead of scalar-valued $\bbR^{n \times c}$), with $n$ vertices and $c$ feature channels. Next, we explain these modules.

\vspace{-0.3cm}
\paragraph{VN-Diffusion Layer.}

The goal of this layer is to perform feature diffusion using a learned diffusion time $t$ for each feature channel, which governs the size of the spatial support of the resulting features. 
Naively, feature diffusion is analogous to applying a linear operator $\cH_t$ on an input feature $\vu$, yielding $\vh \coloneqq \cH_t(\vu)$. 
However, to ensure SO(3)-equivariance, we must apply the same diffusion time $t$ across all three VN dimensions of the input feature $\vu$. 
This crucial design, together with the linearity of the diffusion operator, guarantees the SO(3)-equivariance, namely $\cH_t(\vu \mR) = \cH_t(\vu) \mR$. 
Furthermore, to enhance the model's ability to adjust the size of its spatial support, the diffusion time $t$ is learned per feature channel. 
Note that although $t$ is a learnable parameter updated during back-propagation, it remains fixed during any single feed-forward pass.

\vspace{-0.3cm}
\paragraph{VN-Gradient Layer.}
The idea of this layer is to compute spatial gradient features $\ve$ from a diffused input feature $\vh$ to learn radially asymmetric filters, a crucial capability for expressive geometric representation. Similar to \cite{sharp2022diffusionnet}, we begin by computing the intrinsic spatial gradient $\vw \coloneqq \mathcal{G}(\vh)$, which is then expressed in the local frame of each vertex as complex numbers. The gradient is transformed by a learnable (complex) matrix $\mA \in \mathbb{C}^{c \times c}$, which is further aggregated by $\re(\overline{\vw} \odot \mA\vw)$. However, this naive approach breaks the SO(3)-equivariance, as the aggregation operation does not commute with rotation, meaning $(\overline{\vw}\mR \odot \mA\vw\mR) \neq (\overline{\vw} \odot \mA\vw)\mR$.
Our idea is an additional summation along the VN dimension to obtain the aggregated feature $\vf$:

\begin{equation}
    \label{eq:feat_grad_raw}
    \vf \coloneqq \mathrm{sum} (\re(\overline{\vw} \odot \mA\vw), \mathrm{dim}=1).
\end{equation}

where $\re$ extracts the real part of a complex number.
In the supp.mat., we demonstrate another equivalent formulation of Eq.~\ref{eq:feat_grad_raw} and prove the following result:

\begin{theorem}
\label{lemma:feat_grad_raw_invariance}
    The feature $\vf \in \bbR^{n \times c}$ computed as in Eq.~\ref{eq:feat_grad_raw} is $\rmSO{3}$-invariant.
\end{theorem}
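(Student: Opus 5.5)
The plan is to make the VN-dimension structure of $\vw$ explicit and reduce the invariance to $\mR^\top\mR=\mI$. First we fix notation. The diffused feature $\vh\in\bbR^{n\times c\times 3}$ is SO(3)-equivariant by the VN-Diffusion discussion, i.e.\ $\cH_t(\vu\mR)=\cH_t(\vu)\mR$. The intrinsic gradient operator $\cG$ acts only on the vertex dimension: it is a fixed linear map on functions over the mesh, expressed in per-vertex tangent frames. It is built from the surface geometry, which is itself rotation-invariant. To make this precise, the tangent frames rotate with the shape, so their coordinates relative to the surface are unchanged. Consequently, for each vertex $i$, channel $k$ and VN index $a\in\{1,2,3\}$, the complex gradient $w_{i,k,a}\in\bbC$ is a real-linear function of the scalar field $h_{\cdot,k,a}$ alone. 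Writing $\cG$ as a complex $n\times n$ matrix $\mathbf{G}$ acting on the vertex index, with real-valued input, we get $\vw=\mathbf{G}\vh$ applied independently to each $(k,a)$ slice. Since $\mR$ is real and acts on the $a$-index, linearity gives $\cG(\vh\mR)=\cG(\vh)\mR$: the complex gradient field is also VN-equivariant.

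Next comes the algebraic core. Fix a vertex $i$ and view $\vw_i\in\bbC^{c\times 3}$. Then $\mA\vw_i\in\bbC^{c\times 3}$, and the summed aggregation in Eq.~\ref{eq:feat_grad_raw} for channel $k$ reads
\begin{equation*}
f_{i,k}=\re\sum_{a=1}^{3}\overline{w_{i,k,a}}\,(\mA\vw_i)_{k,a}
=\re\,\bigl[(\mA\vw_i)\,\vw_i^{\ast}\bigr]_{kk}, \qquad \vw_i^{\ast}:=\overline{\vw_i}^{\top}.
\end{equation*}
This is the ``equivalent formulation'': $\vf$ is the real part of the diagonal of the $c\times c$ matrix $\mA\vw_i\vw_i^{\ast}$. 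Substituting $\vw_i\mapsto\vw_i\mR$ and using that $\mR$ is real, so that $\overline{\vw_i\mR}^{\top}=\mR^{\top}\vw_i^{\ast}$, we get
\begin{equation*}
\mA\vw_i\mR\,\mR^{\top}\vw_i^{\ast}=\mA\vw_i\vw_i^{\ast}.
\end{equation*}
Hence $\vf$ is unchanged for every $\mR\in\rmSO{3}$ (indeed for all of O(3)). Chaining this with the equivariance of the preceding diffusion layer shows that $\vf$, computed from input $\vu\mR$, equals $\vf$ computed from $\vu$.

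The main obstacle is the first step: justifying rigorously that $\cG$ commutes with the VN action. There is a potential conflation between two different ``rotations''. One is the extrinsic rotation of the shape, which also rotates the normals and tangent bases. The other is the VN action $\mR$ on the feature channels. I would handle this by noting two facts. First, $\cG$ depends only on intrinsic or rotation-covariant quantities, namely edge vectors expressed in the local frame, which are invariant when frames and positions rotate together. Second, $\cG$ never mixes the VN index $a$. After rotating the input shape, the operator is therefore literally the same matrix $\mathbf{G}$, and only the VN-index action remains. With that settled, the rest is the one-line cancellation $\mR\mR^{\top}=\mI$ above.
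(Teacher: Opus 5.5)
Your proof is correct and is essentially the paper's: both write $\vf$ at each vertex as the real part of the diagonal of a $c\times c$ matrix built from $\vw$ and $\mA\vw$, then cancel $\mR\mR^{\top}=\mI$, and your extra check that $\cG$ commutes with the VN action is a step the paper skips by applying $\mR$ directly to $\vw$. One caveat on your side remark about rotating the shape itself: if the tangent frames are seeded from fixed world axes, as in DiffusionNet, they do not rotate with the shape and $\cG$ is not literally the same matrix, but it changes only by a per-vertex phase $\vw_i\mapsto e^{i\theta_i}\vw_i$, which also cancels in $\overline{\vw_i}(\mA\vw_i)^{\top}$, so your conclusion still holds.
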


 After stabilizing training by passing $\vf$ through a $\tanh$ layer $\vg \coloneqq \tanh (\vf)$, we obtain the final equivariant features $\ve$ by an element-wise multiplication (with broadcasting) of the invariant feature $\vg \in \mathbb{R}^{n \times c}$ with the normalized equivariant feature $\vh \in \mathbb{R}^{n \times c \times 3}$: $\ve \coloneqq \vg \odot \vh$. This formulation maintains the network equivariance as much as possible.

\begin{figure}[t!] 
\vspace*{-0.3cm}
\hspace*{-1.6cm}  
    \centering
    \includegraphics[width=1.35\columnwidth,trim=0 0 0 0, clip]{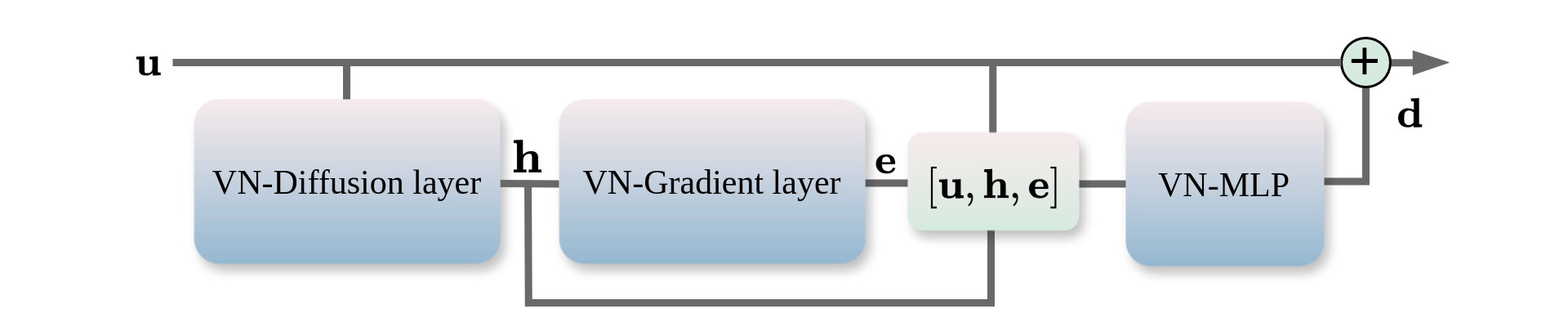}
    \caption{\textbf{Our \netname~block} is the core of our \netname, and it consists of three main modules: a VN-Diffusion layer, a VN-Gradient layer, and a VN-MLP. In contrast to the original \diffusionnet~block~\cite{sharp2022diffusionnet}, ours is SO(3)-equivariant by design.}
    \label{fig:vn-diffblock}
\vspace*{-0.3cm}
\end{figure}

\vspace{-0.3cm}
\paragraph{VN-MLP.}
An MLP consists of linear layers followed by $\relu$ non-linearities, we disable the bias terms in the linear layers, since $\mM(\vu\mR)+\vb \neq (\mM\vu + \vb)\mR$ and swap the $\relu$ with its VN counterpart, to enable equivariance.

\vspace{-0.3cm}
\paragraph{Remark.}
A \netname~block can be summarized as:

\begin{equation}
\label{eq:vn-diffnet_block}
    \vd = \VNMLP([\vu, \vh, \ve]) + \vu,
\end{equation}
where the input feature $\vu$, the equivariant diffusion feature $\vh$ and the equivariant gradient feature $\ve$ are concatenated together along the $c$ dimension before going through a $\VNMLP$.

\begin{theorem}
\label{lemma:network_equivariance}
    The \netname~block is $\rmSO{3}$-equivariant and the whole \netname~is $\rmSO{3}$-invariant.
\end{theorem}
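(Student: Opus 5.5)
The plan is to verify equivariance module by module and then use the fact that compositions of equivariant maps are equivariant. Throughout, a rotation $\mR\in\rmSO{3}$ acts on a hidden state $\vu\in\bbR^{n\times c\times 3}$ by right multiplication along the VN dimension, $\vu\mapsto\vu\mR$. The input coordinates transform the same way: $\mV\mapsto\mV\mR$. The key observation for the surface operators is that the mesh connectivity, the cotangent Laplacian, the mass matrix, and hence $\cH_t$ depend only on intrinsic quantities (edge lengths and angles), which are unchanged when the shape is rotated. The same holds for the intrinsic gradient $\cG$ once it is expressed in local tangent frames. Since these frames rotate rigidly with the shape, the complex coefficients of $\cG$ relative to them are unchanged. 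All of these operators act only on the vertex index (and, for $\mA$, on the channel index), never on the VN index, so they commute with right multiplication by $\mR$.

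\textbf{Block equivariance.} I check the three modules in order.

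\emph{VN-Diffusion.} $\cH_t$ acts linearly on the vertex dimension, with the same $t$ applied to all three VN components of a channel. Hence $\cH_t(\vu\mR)=\cH_t(\vu)\mR$, so $\vh\mapsto\vh\mR$. The normalization of $\vh$ is taken per vertex and channel over the VN dimension, and norms are rotation invariant, so it preserves this property.

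\emph{VN-Gradient.} Here I invoke Thm.~\ref{lemma:feat_grad_raw_invariance}: $\vf$, and therefore $\vg=\tanh(\vf)$, is invariant. Then $\ve=\vg\odot\vh\mapsto \vg\odot(\vh\mR)=(\vg\odot\vh)\mR$, because broadcasting multiplies each VN vector by a scalar, and scalar multiplication commutes with $\mR$.

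\emph{VN-MLP.} Concatenating $[\vu,\vh,\ve]$ along the channel dimension commutes with $\mR$. VN-linear layers without bias satisfy $\mM(\vu\mR)=(\mM\vu)\mR$, as recalled in the background section. The VN-ReLU is equivariant by the result of \cite{deng2021vn}: it projects onto a learned direction $\mathbf{k}=\mathbf{W}\vu$, which rotates with $\vu$, and the inner products $\langle \vu,\mathbf{k}\rangle$ are invariant. Finally, the residual sum $+\vu$ is linear, so $\vd\mapsto\vd\mR$ and the block is equivariant.

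\textbf{Whole network.} VN-EdgeConv is equivariant by \cite{deng2021vn}. Its edge features $\mV_j-\mV_i$ and $\mV_i$ rotate with $\mV$, and the neighborhoods are fixed: they come from the mesh, and even a kNN graph is unchanged because distances are invariant. The network then composes four equivariant blocks. The VN-invariant layer of \cite[Sec.3.5]{deng2021vn} forms $\vu\,\mathbf{T}(\vu)^\top$, where $\mathbf{T}(\vu\mR)=\mathbf{T}(\vu)\mR$, so $\mR\mR^\top=\mI$ cancels and the output is invariant. Any map applied after an invariant quantity, here the final linear layer on scalar features, stays invariant, so $\mF(\mV\mR)=\mF(\mV)$.

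\textbf{Main obstacle.} The delicate step is the claim that $\cG$ and the tangent frames behave correctly under rotation. The construction of the local frames must be covariant, meaning the basis vectors are chosen from intrinsic or rotated data such as rotated normals and edges, rather than from a fixed world axis. Otherwise the complex coordinates would change and Thm.~\ref{lemma:feat_grad_raw_invariance} could not be applied. I would state this as the standing assumption (the frames of \cite{sharp2022diffusionnet} are built from normals and a projected edge, so they satisfy it) and then cite Thm.~\ref{lemma:feat_grad_raw_invariance} for the rest.
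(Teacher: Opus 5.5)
Your proposal is correct and follows the paper's route. Diffusion $\cH_t$ is equivariant by linearity, $\vg=\tanh(\vf)$ is invariant by Theorem~\ref{lemma:feat_grad_raw_invariance} so $\ve=\vg\odot\vh$ is equivariant, the bias-free VN-MLP, concatenation and residual are equivariant, and composition ending in the VN-invariant layer gives invariance; you also spell out details the paper leaves implicit (normalization, VN-EdgeConv neighborhoods, intrinsic operators). The covariant-frame assumption you flag is stronger than needed: a change of tangent frame at a vertex multiplies every entry of $\vw$ there by one common unit complex number, which cancels in $\re(\overline{\vw}\odot\mA\vw)$ by complex linearity of $\mA$. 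So frames built by projecting a fixed world axis onto the tangent plane (which, as far as I recall, is what DiffusionNet's implementation does, rather than projecting an edge) work equally well, as long as the normals rotate with the shape.
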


\noindent Please refer to the supp.mat. for a proof.

As discussed in~\cite{sharp2022diffusionnet}, a complex matrix $\mA$ is beneficial for disambiguating intrinsic symmetry due to its awareness of the shape orientation. 
In our experiments, we found that symmetrically flipped correspondences still appear when the xyz coordinates of unaligned shapes are fed into the network (cf. Fig.~\ref{fig:sym-flip} \& Tab.~\ref{tab:sym}). Hence, we propose to exploit CFMaps, which can only encode orientation-preserving maps in theory, to fully eliminate intrinsic symmetry and regularize learned features.

\begin{figure}[t!]
    \vspace{-0.8cm}
    \hspace*{-0.1cm}
    \includegraphics[scale=0.15, trim=400 230 40 8, clip]{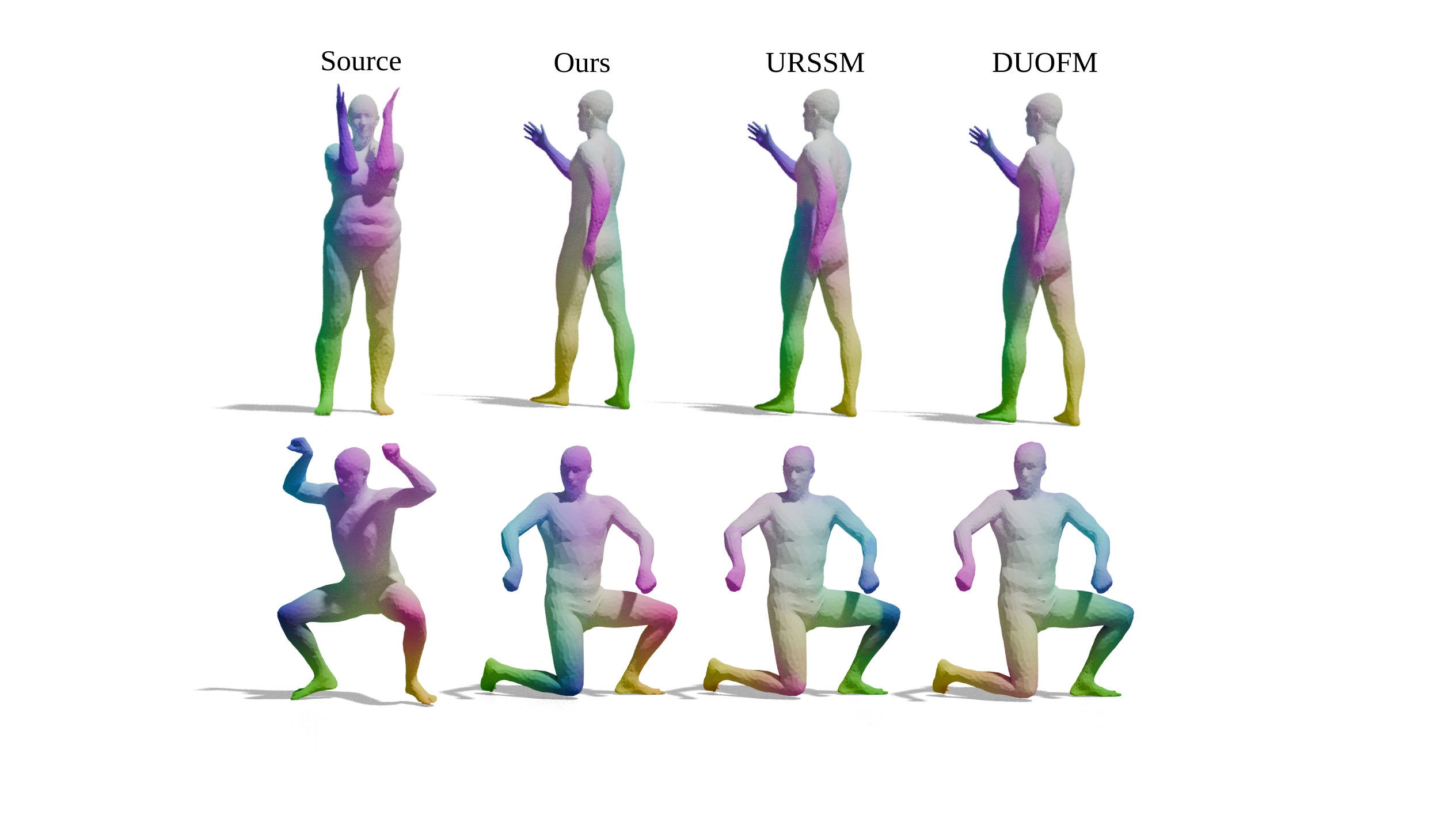}
    \caption{\textbf{Intrinsic symmetry.} Ours fully resolves symmetry, while baselines suffer. \textbf{Top:} partial left-right (legs) and front-back (belly-back) flips. \textbf{Bottom:} full left-right flips (arms and legs).}
    \label{fig:sym-flip}
\end{figure}

%% file: sec/4_method.tex
\section{Unsupervised SO(3)-Invariant Matching}
\label{sec:matching-method}

\begin{figure*}[h!t!] 
\vspace*{-0.8cm}  
\hspace*{-0.5cm}
    \centering
    \includegraphics[width=2.0\columnwidth,trim=0 60 0 0, clip]{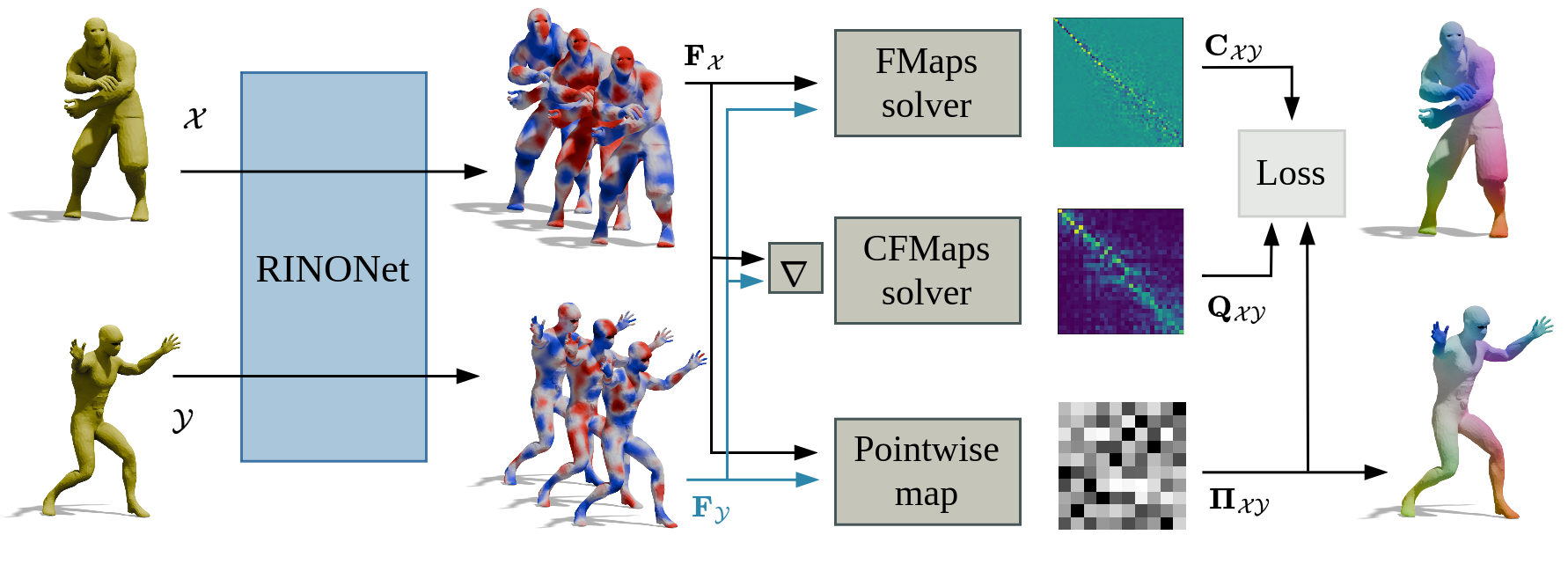}
    \caption{\textbf{Overview of \methodname.} Our novel \netname~is used to extract SO(3)-invariant features $\mFx$ and $\mFy$ from input shapes $\cX$ and $\cY$. The features are further processed in three non-learnable but differentiable blocks: the FMaps solver to compute the functional map $\mCxy$, the CFMaps solver to compute the complex functional map $\mQxy$ (based on feature gradients), and the pointwise map block to compute the soft pointwise map $\mPixy$ based on feature similarity. Finally, our unsupervised loss prompts structural properties of $\mCxy, \mQxy$ and consistency between different map representations.}
    \label{fig:pipeline}
\vspace*{-1em}  
\end{figure*}

\paragraph{Overview.} 
We integrate our \netname~into a novel triple branch matching framework and propose \methodname, the first unsupervised SO(3)-invariant approach for non-rigid shape matching. The process starts by passing two \mbox{input shapes}, $\cX$ and $\cY$ (with $\nx$ and $\ny$ vertices, respectively), through the network to obtain their (flattened) SO(3)-invariant features $\mFx \in \bbR^{\nx \times 3c}$ and $\mFy \in \bbR^{\ny \times 3c}$ in a Siamese fashion (cf. Fig.~\ref{fig:feat}). 
These expressive features are then leveraged to compute three distinct map representations for matching: a (soft) pointwise map $\mPixy$, an FMap $\mCxy$, and a CFMap $\mQxy$. All of them provide important structural properties for the matching that we aim to exploit with a careful coupling of the losses. During \emph{inference}, \methodname~can match shapes in a single forward pass by Euclidean nearest neighbor search in the learned feature space.
Our full pipeline is shown in Fig.~\ref{fig:pipeline}.

\begin{table}[t!]
\vspace{-0.3cm}
 \begin{adjustwidth}{-0.0cm}{}
\centering
    \small
\begin{tabular}{lrrrr}
\toprule
Train/Test & \textbf{I/I} & \textbf{I/SO(3)} & \textbf{SO(3)/SO(3)} & \textbf{Y/Y}\\
\midrule
\consistfm & 5.4 & 58.7 & 9.1 & \cellcolor{yellow!30} 5.4 \\
\duofm & 11.7 & 41.4 & 25.2 & 16.5 \\
\ulrssm & 4.8 & 62.1 & 24.8  & \cellcolor{yellow!30} 7.9 \\
\sms & \cellcolor{yellow!30} \textbf{4.6} & 58.5 & 57.6  & 39.3 \\
\sfm & 5.6 & 62.6 & 9.9  & \cellcolor{yellow!30} 7.3 \\
\hybridfm & 5.6 & 61.8 & 26.4  & \cellcolor{yellow!30} 5.3 \\
{Ours} & \textbf{4.6} & \textbf{4.6} & \textbf{4.6}  & \textbf{4.6}\\ 
\bottomrule
\end{tabular}
\caption{ \textbf{SO(3)-invariance on SMAL.} The rotation type is indicated by: {I} (aligned), {SO(3)} (fully random rotation), and {Y} (y-axis rotation). \emph{e.g.} {I/SO(3)} means aligned shapes for training and fully randomly rotated shapes for testing.  We report mGeoErr ($\downarrow$) and ours consistently outperforms baselines, especially on unseen rotations. Best results are \textbf{bold}, and \colorbox{yellow!30}{highlighted} cells indicate the experiment settings used in baseline papers.}
\label{tab:exp-so3}
\end{adjustwidth}

\vspace{-0.3cm}
\end{table}

\paragraph{Map Representations.}
Our method derives three complementary map representations from the $\mathrm{SO}(3)$-invariant features $\mFx$ and $\mFy$ generated by \netname. The soft pointwise map is computed via softmax on the feature similarity: $\mPixy = \mathrm{Softmax}(\mFx \mFy^T / \tau)$, where $\tau$ is a temperature parameter controlling the match entropy. Simultaneously, $\mFx$ and $\mFy$ are fed into two non-learnable, differentiable blocks: the \emph{FMaps} and the \emph{CFMaps Solver}. These blocks efficiently solve convex problems to estimate $\mCxy$ and $\mQxy$ (cf. Eq.~\ref{eq:fmap}). 
Our insight is twofold: first, $\mQxy$ inherently encodes only orientation-preserving maps, which is essential for disambiguating intrinsic shape symmetry \cite{donati2022CFMaps}. Second, while $\mCxy$ enforces feature consistency, $\mQxy$ additionally enforces consistency of the features' first-order derivatives (obtained via the gradient operator $\nabla$), leading to more accurate features and correspondences.%

\paragraph{Unsupervised Loss.}
\label{subsec:loss}

Our unsupervised objective $\Ltotal$ is composed of three terms: a novel coupling loss $\Lcouple$, a structural loss $\Lstruct$, and a contrastive loss $\Lcontrast$.
The terms $\Lstruct$ and $\Lcontrast$ are inspired by \cite{roufosse2019unsupervised, cao2024revisiting, donati2022CFMaps, donati2022DeepCFMaps}, which encourage key properties of $\mC$ and $\mQ$, and discriminative per-vertex features (see supp.mat. for details). 

Our contribution is the $\Lcouple$, which links the three map representations: $\mPi, \mC, \mQ$, derived from our invariant features. We enforce consistency between the soft pointwise map $\mPi$ and its functional pullbacks $\mC$ and $\mQ$ respectively. The coupling loss is defined as:
\begin{equation}
\label{eq:loss_couple_main}
    \Lcouple = \Lpc + \Lpq
\end{equation}
where $\Lpc$ is the consistency between the pointwise map and $\mC$, and $\Lpq$ is the proposed term enforcing consistency between the pointwise map and the orientation-aware complex functional map $\mQ$. This targeted coupling is crucial for simultaneously achieving highly accurate features and robust symmetry disambiguation without requiring excessive coupling that can jeopardize training. Note that although the coupling of $\mPi$ and $\mC$ has been explored in \cite{sun2023spatially, attaiki2023clover, cao2023unsupervised}, we are the first to study the coupling of all three map representations.
Following our novel pipeline and loss, our estimated $\mC$ is an orientation-preserving isometry akin to the results in~\cite{donati2022DeepCFMaps} (cf. supp.mat.).

%% file: sec/5_experiment.tex
\begin{table}
\vspace{-0.3cm}
\hspace*{-0.2cm}
    \setlength{\tabcolsep}{4pt}
    \small
    \centering
    \begin{tabular}{@{}lcccccc@{}}
    \toprule
      & \multicolumn{3}{c}{\textbf{\wks}} & \multicolumn{3}{c}{\textbf{\xyz}} \\ 
        \cmidrule(lr){2-4} \cmidrule(lr){5-7}
    \multicolumn{1}{l}{}  & \multicolumn{1}{c}{\textbf{SMAL}}   & \multicolumn{1}{c}{\textbf{DT4D}}  & \multicolumn{1}{c}{\textbf{FSCAN}}  & \multicolumn{1}{c}{\textbf{SMAL}}   & \multicolumn{1}{c}{\textbf{DT4D}}  & \multicolumn{1}{c}{\textbf{FSCAN}} \\ 
    \midrule
    \multicolumn{1}{l}{\consistfm}  & 23.4 &  \cellcolor{yellow!30} 9.3 & 3.6  & \multicolumn{1}{c}{{9.1}}& 7.4 & 50.5\\
    \multicolumn{1}{l}{\duofm}  & \colorbox{yellow!30}{\underline{6.7}}  & 15.8 & - & \multicolumn{1}{c}{25.2}  & 55.8  & - \\
    \multicolumn{1}{l}{\ulrssm}  & 29.6  & \cellcolor{yellow!30} 8.4 & 3.0 & 24.8 & 59.3 & 24.8  \\
    \multicolumn{1}{l}{\sms} & 30.6 & 58.6 & 12.1 & 57.6 & 64.9 & 54.0 \\
    \multicolumn{1}{l}{\sfm} & 29.9 & \cellcolor{yellow!30} 15.3 & 5.7 & 9.9 & {5.5} & 24.3 \\
    \multicolumn{1}{l}{\hybridfm} & 34.5 & \cellcolor{yellow!30} {\underline{5.4}} & 3.5 & 26.4 & 33.8 & {22.1} \\
    \multicolumn{1}{l}{Ours} & 28.7 & 9.8 & \underline{2.7} & \multicolumn{1}{c}{\textbf{4.6}} & \multicolumn{1}{c}{\textbf{5.3}}   & \textbf{2.5}\\
\bottomrule
    \end{tabular}
    
    \caption{\textbf{Non-isometric and raw scan matching.} We report mGeoErr ($\downarrow$) and highlight the \textbf{best} and \underline{2nd best} results. Ours outperforms all baselines regardless of the input types. \colorbox{yellow!30}{Highlighted} cells indicate the experiment settings used in baseline papers.} 
    \label{tab:non-isometry}
\vspace*{-1em}  
\end{table}

\section{Experiments}
\label{sec:exp}

\begin{figure*}[t!]
    \vspace{-0.2cm}
    \hspace{-0.5cm}
    \includegraphics[width=2.3\columnwidth,trim=320 400 40 75, clip]{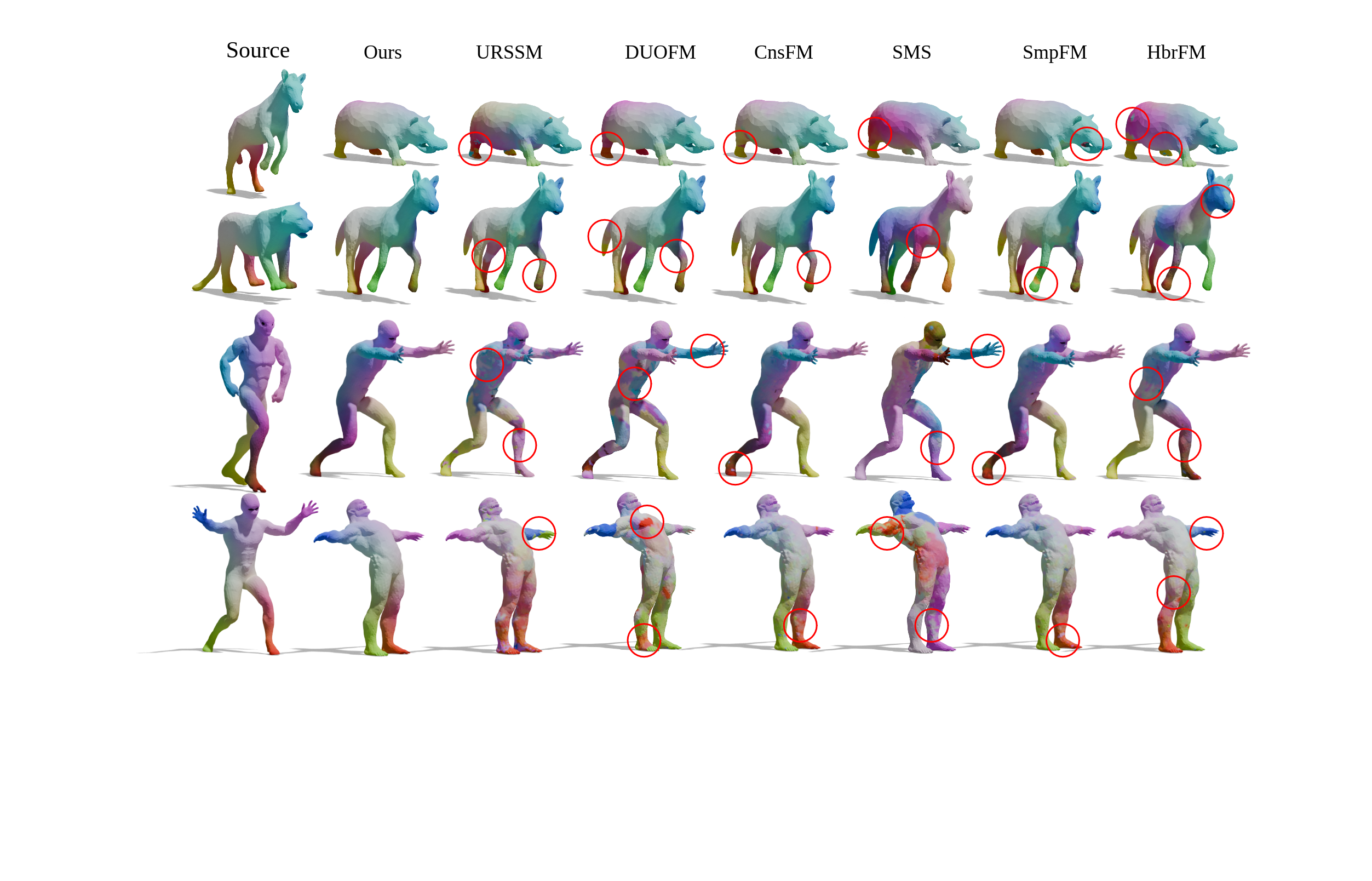}
    \caption{\textbf{Non-isometric matching on SMAL and DT4D.} Our method outperforms all baselines; it is invariant to extrinsic rigid poses (no pre-alignment) and robust to intrinsic shape symmetry by directly processing raw shape geometry.}
    \label{fig:non-iso}
    \vspace*{-1em}
\end{figure*}

\paragraph{Baselines and Metrics} 
The proposed method is extensively compared against state-of-the-art unsupervised shape matching techniques, including the deep complex functional maps method \duofm \cite{donati2022DeepCFMaps}, the deep map coupling method \ulrssm \cite{cao2023unsupervised}, the cycle-consistent multi-matching method \consistfm \cite{sun2023spatially}, the deep non-rigid ICP method \sms \cite{cao2024spectral}, the hybrid functional maps method \hybridfm \cite{bastian2023hybrid}, and the memory-efficient functional maps method \sfm \cite{magnet2024memory}.
The focus on competitive unsupervised baselines is due to their superior performance over axiomatic approaches and their practical relevance given the scarcity of 3D assets with ground truth annotations. For a fair evaluation, all shapes are subject to random rotation during both training and testing under \xyz~as input unless otherwise mentioned, and no post-processing \cite{melzi2019zoomout,cao2023unsupervised,vigano2025nam,vestner2017efficient}
is applied, as all compared methods can be post-processed.

For all experiments, we adhere to the Princeton protocol \cite{shilane2004princeton}, where the standard evaluation metric reported is the mean geodesic error (\textbf{mGeoErr}) with respect to the ground truth correspondence.

\begin{table}[]
\vspace{-0.3cm}
\hspace*{-0.2cm}
    \centering
    \begin{tabularx}{0.5\textwidth}{lXXXXXX}
    \toprule
      & \multicolumn{3}{c}{\textbf{FAUST}} & \multicolumn{3}{c}{\textbf{SCAPE}} \\ 
        \cmidrule(lr){2-4} \cmidrule(lr){5-7}
      & \textbf{E} & \textbf{ES}  & \textbf{\#Flips} & \textbf{E} & \textbf{ES}  & \textbf{\#Flips} \\ 
    \midrule
    \consistfm      & 4.4 &  2.8 &  25        &  {27.2} &  {7.4} &  {249}         \\
    \duofm          & 28.0 &  7.2 &  204      &  {28.3} &  {8.9} &  {232}        \\
    \ulrssm         &  30.0 &  7.8 &  240      &  {26.9} &  {7.7} &  {245}        \\
    \sfm            &  3.0  &  2.7 &  1        &  {29.9} &  {7.9} &  {63}            \\
    \hybridfm       &  29.9 &  7.2 &  227      &  {27.2} &  {8.1} &  {236}        \\
    Ours            &  \textbf{1.6} &  \textbf{1.5} &  \textbf{0}      &  \textbf{2.0} &  \textbf{2.0}  &  \textbf{0}            \\
    \bottomrule
    \end{tabularx}

    \caption{\textbf{Symmetry analysis}. We compute mGeoErr ($\downarrow$)  by allowing only symmetry-free correspondences (col. \mbox{\textbf{E}}) and additionally the symmetric ones (col. \textbf{ES}), and report the number of symmetrically flipped estimations out of 400 test pairs (col. \mbox{\textbf{\#Flips}}).}
    \label{tab:sym}
\vspace*{-1em}  
\end{table}

\subsection{SO(3)-Invariant Matching}

To assess SO(3)-invariant features, we utilized the remeshed SMAL dataset \cite{Zuffi:CVPR:2017, ren2018continuous} across four train/test configurations based on shape rotation: $\mathrm{I/I}$ (aligned), $\mathrm{Y/Y}$ (y-axis rotated), $\mathrm{SO}(3)/\mathrm{SO}(3)$ (fully randomly rotated), and $\mathrm{I/SO}(3)$ (aligned training, fully rotated testing). Due to the non-isometry presented in SMAL, all baselines (except \duofm) use \xyz~coordinates as input. 
As shown in Tab. \ref{tab:exp-so3}, while all baselines perform well in the $\mathrm{I/I}$ setting, their performance significantly degrades in the $\mathrm{Y/Y}$ and $\mathrm{SO}(3)/\mathrm{SO}(3)$ cases, and crucially, all baselines fail severely in the challenging $\mathrm{I/SO}(3)$ configuration (unseen rotation at test time). 
This confirms that existing methods lack robustness to arbitrary rotational changes and rely on shape alignment. While data augmentation can partially address this at the cost of training effort, our method achieves SO(3)-invariant features by design, thus completely eliminating the need for cumbersome and inefficient rotational augmentation. See Fig.~\ref{fig:SO3-inv} for visual examples.

\subsection{Non-Isometric Matching}
To evaluate our method's robustness to non-isometric deformations, we utilize the SMAL and DT4D \cite{Zuffi:CVPR:2017, magnet2022smooth, ren2018continuous} datasets, with quantitative results reported in Tab.~\ref{tab:non-isometry}. 
\methodname~outperforms all baselines, regardless of their input type. 
This superior performance directly addresses the known limitation of spectral descriptors like \wks, whose near-isometric dependency makes them unstable under non-isometric deformation. 
By leveraging SO(3)-invariance, \methodname~is able to directly process raw 3D geometry. 
Qualitative results in Fig.~\ref{fig:non-iso} further show that our approach successfully yields high-quality, flip-free correspondences, whereas baselines suffer from non-smoothness and intrinsic left-right symmetry. 
We also notice the effectiveness of \sfm~when using \xyz, which is undiscovered in its original paper. This is likely due to its effective architecture, which limits the use of SO(3)-variant features only in the differentiable Zoomout module, thereby avoiding detrimental coupling with low-quality pointwise maps. 
Methods like \duofm~and \ulrssm~struggle to learn meaningful correspondences under challenging random rotations. 
Additionally, results on the FAUSTSCAN \cite{bogo2014faust} (denoted as FSCAN) are reported, with further discussion in the supp.mat..

\subsection{Disambiguating Intrinsic Symmetry}
\label{sec:sym}

This experiment analyzes robustness against intrinsic symmetry using SCAPE and FAUST \cite{anguelov2005scape, bogo2014faust}, with results in Tab.~\ref{tab:sym}. Allowing for symmetrically flipped matches significantly improves baseline accuracy, indicating that a substantial portion of their correspondence error is due to symmetry confusion (symmetric flips in $\sim$200 of 400 test pairs). Our method inherently avoids this issue due to the integration of CFMaps. While \duofm~also uses CFMaps, it still confuses symmetry because its features are not SO(3)-invariant, making them dependent on extrinsic embeddings. Although \diffusionnet's spatial gradient features can offer some aid via anisotropic diffusion \cite[Fig.~5]{sharp2022diffusionnet}, we found it insufficient to fully resolve intrinsic symmetry, especially under random shape rotations.

\subsection{Partial Shape Matching}

The severe non-isometry inherent in partial shapes often forces baselines to switch their input features to \xyz~coordinates. Following the training strategy of \cite{cao2023unsupervised}, we first pretrain \methodname~on a collection of four full shape datasets (DT4D, SMAL, FAUST, SCAPE) for two epochs. The model is then fine-tuned on the SHREC16-Partiality (CUTS and HOLES)~\cite{cosmo2016shrec,ehm2023geometrically} for 500 epochs, without any post-processing for evaluation. 
As shown in Tab.~\ref{tab:partial}, \methodname~outperforms strong partial matching baselines, even the supervised method \echomatch. This superior performance is a strong indication that the ability to handle unaligned partial shapes, a feature built into our design, is crucial for successful partial shape matching (cf. Fig.~\ref{fig:partial}).

\begin{figure}
    \centering
    \vspace{-0.3cm}
    \includegraphics[trim={240, 290, 10, 240}, clip, width=0.7\textwidth]{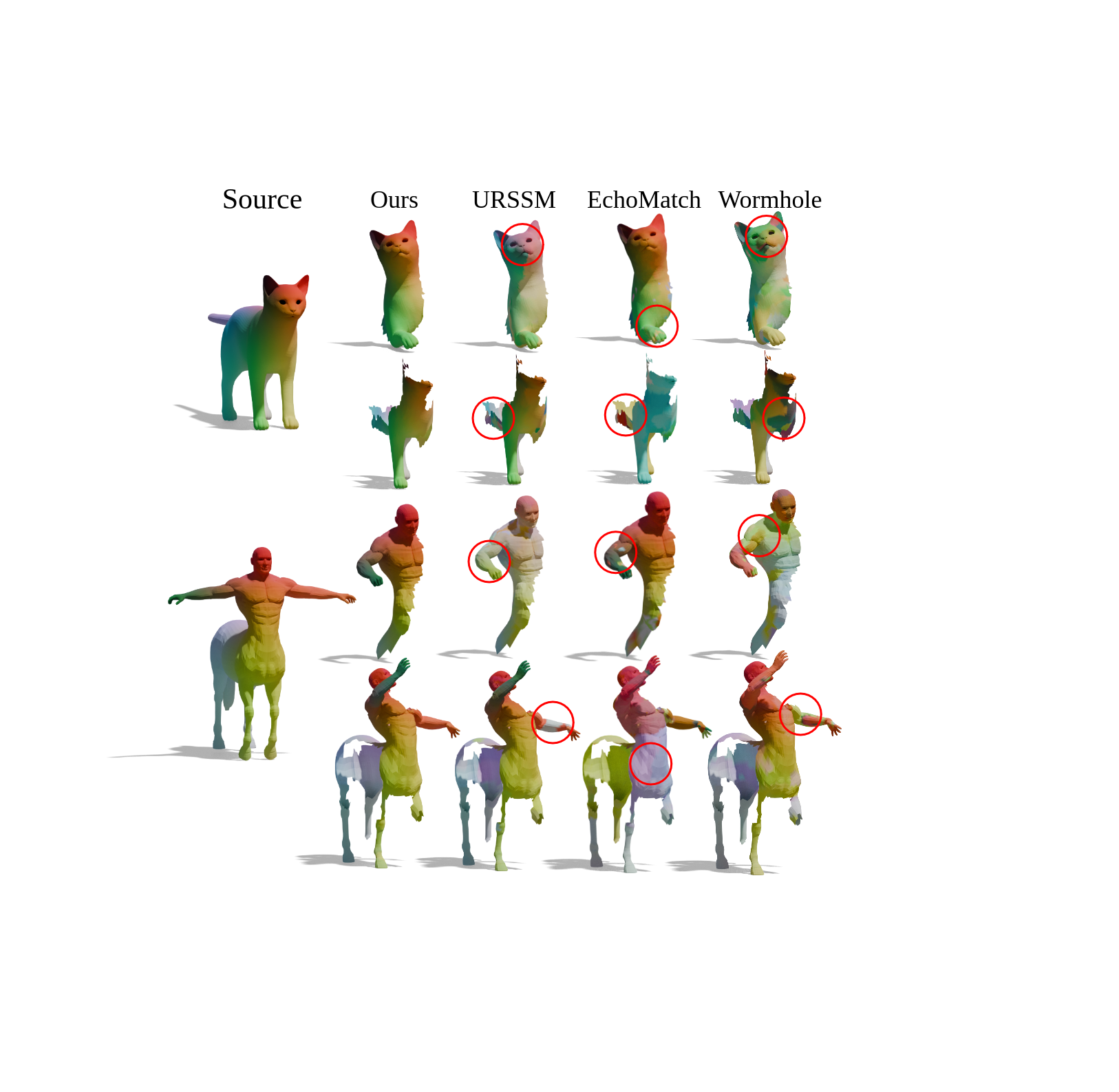}
    \caption{\textbf{Qualitative results on SHREC16-Partiality.}}
    \label{fig:partial}
\end{figure}

\begin{table}[t]
\centering
\begin{tabular}{lcccc}
\toprule
 & \ulrssm & \echomatch & \wormhole & Ours \\
\midrule
\textbf{CUTS} &  42.1 & 12.7 & 48.2 &  \textbf{6.7}\\
\textbf{HOLES} & 29.8 & 57.1 & 26.9 &  \textbf{12.09} \\
\bottomrule
\end{tabular}
\caption{\textbf{mGeoErr ($\downarrow$) on SHREC16-Partiality.} Ours outperforms all baselines, including the supervised method \echomatch.}
\label{tab:partial}
\vspace{-0.3cm}
\end{table}

\subsection{Robustness under Noise}
To analyze robustness to noise, we evaluate ours and baselines on ten SCAPE shapes corrupted by Gaussian noise with variance $\sigma \in [1\text{e-}3, 1\text{e-}2]$. 
As shown in Fig.~\ref{fig:noise-curve}, while all baselines perform acceptably up to $\sigma=4\text{e-}3$, ours demonstrates superior robustness, with performance noticeably degrading only after $\sigma=6\text{e-}3$, beyond which the degradation rate is approximately linear. 
At this high noise level, ours maintains accurate correspondences where baselines yield spurious results  (cf. Fig.~\ref{fig:noise-vis}). We attribute this advantage to two factors: 1) our design processes \xyz~coordinates directly, avoiding noise-sensitive shape descriptors like \wks; and 2) our \netname~has a significantly reduced number of trainable parameters, which acts as a powerful regularization effect, enhancing generalization to imperfect shapes.

\begin{figure}[t!]
    \includegraphics[trim={200 280 50 70}, clip, width=0.55\textwidth]{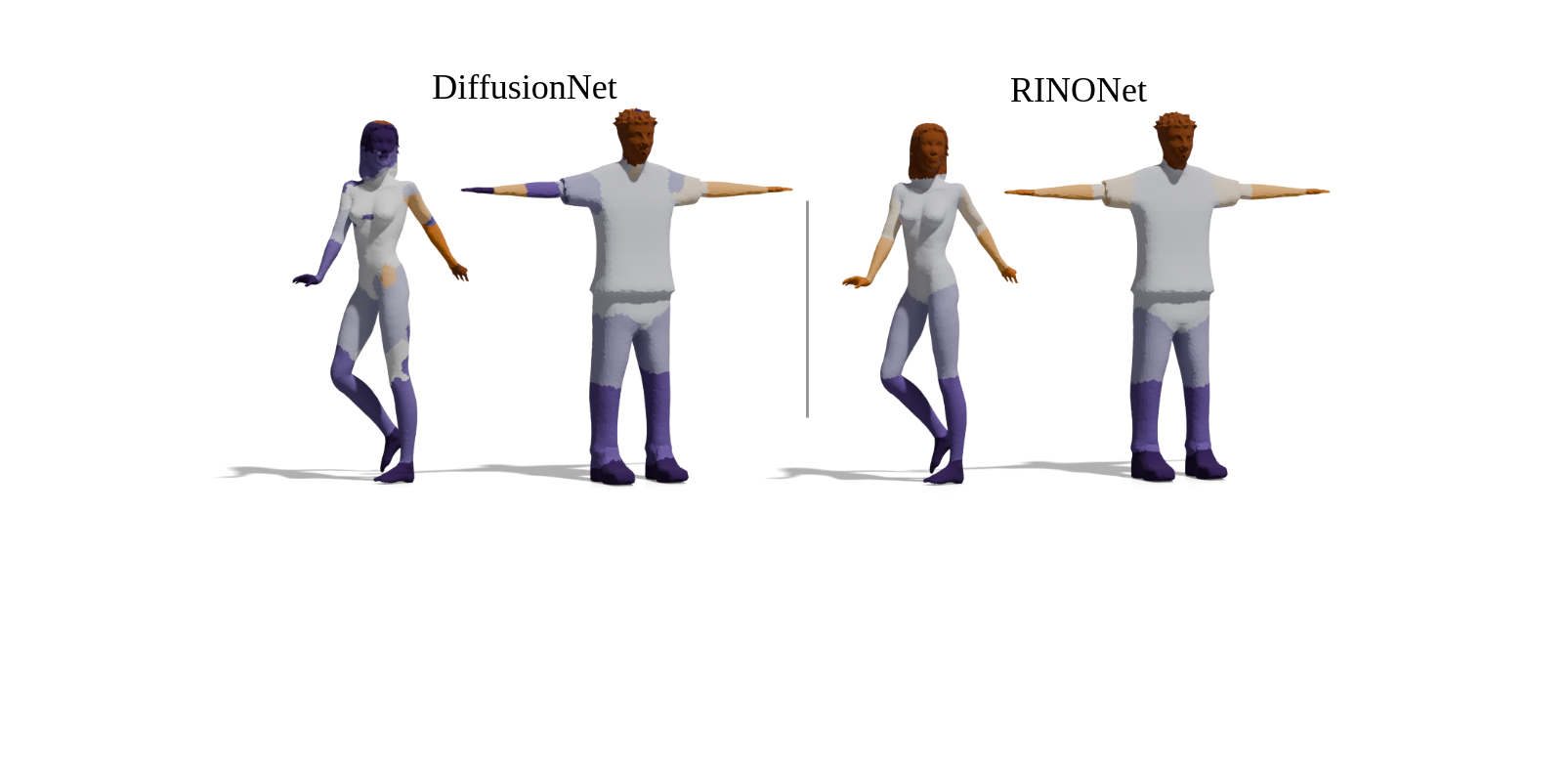}
    \caption{\textbf{Human Segmentation.} We show the segmentation results of \diffusionnet~and ours on the composite dataset\cite{maron2017cnn}. Our \netname~produces sharper segments.}
    \label{fig:seg}
    \vspace{-1em}
\end{figure}

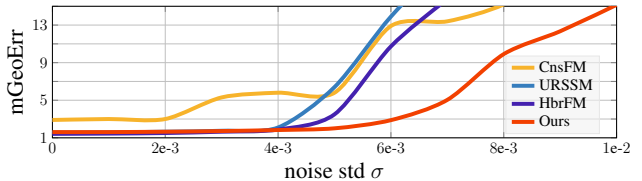
\begin{figure}[t!]
    \centering
    \hspace{-1.1cm}
    \input{figs/noise.tex}
    \vspace*{-14pt}
    \caption{\textbf{Robustness under noise perturbations.}}
    \label{fig:noise-curve}
\vspace*{-1em}  
\end{figure}

\subsection{Segmentation \& More}

As a proof-of-concept, we employ \netname~for human segmentation tasks using the composite dataset introduced in \cite{maron2017cnn}.
We compare against the SoTA \diffusionnet~\cite{sharp2022diffusionnet} and Fig.~\ref{fig:seg} shows the superior performance in the segmentation task, hence the promising potential of \netname~in a wider range of 3D understanding tasks.

\noindent Please refer to the supp.mat. for further studies on computational complexity, an ablation analysis of our loss and architecture, and experimental results on both isometric and non-manifold shapes.

%% file: figs/noise.tex
\newcommand{\pckLineWidth}{3pt}
\newcommand{\plotWidth}{\linewidth}
\newcommand{\plotHeight}{\linewidth}
\newcommand{\pckTitle}{\textsc{SMAL NOISE}}
\definecolor{cPLOT_purple}{RGB}{68, 33, 175}
\definecolor{cvprblue}{rgb}{0.21,0.49,0.74}
\definecolor{cPLOT_ours}{RGB}{242,64,0}
\definecolor{cPLOT_yello}{RGB}{247,179,43}

\pgfplotsset{%
    label style = {font=\LARGE},
    tick label style = {font=\large},
    title style =  {font=\LARGE},
    legend style={  fill= gray!10,
                    fill opacity=0.6, 
                    font=\large,
                    draw=gray!20, %
                    text opacity=1}
}
\begin{tikzpicture}[scale=0.5, transform shape]
	\begin{axis}[
		width=\plotWidth,
		height=\plotHeight,
        x=1.5cm,
        y=0.25cm,
		grid=major,
		legend style={
			at={(0.97,0.03)},
			anchor=south east,
			legend columns=1},
		legend cell align={left},
	ylabel={{\LARGE
        mGeoErr}},
        xmin=0,
        xmax=10,
        xlabel=
        \LARGE
        noise std $\sigma$,
        ylabel near ticks,
        xtick={0, 2, 4, 6, 8, 10},
        xticklabels={$0$, $2\text{e-}3$, $4\text{e-}3$, $6\text{e-}3$, $8\text{e-}3$, $1\text{e-}2$},
        ymin=1,
        ymax=15,
        ytick={1, 3, 5,  7, 9, 11, 13, 15},
        yticklabels={$1$, $~$, $5$, $~$, $9$, $~$, $13$, $~$},
	]
    
\addplot [color=cPLOT_yello, solid, smooth, line width=\pckLineWidth]
    table[row sep=crcr]{%
0.000 2.9 \\
1 3.0 \\
2 3.0 \\
3 5.3 \\
4 5.8 \\
5 5.8 \\
6 12.9 \\
7 13.4 \\
8 15.2 \\
9 18.9 \\
10 20.0 \\
    };
    \addlegendentry{\textcolor{black}{\consistfm}}
    
\addplot [color=cvprblue, solid, smooth, line width=\pckLineWidth]
    table[row sep=crcr]{%
0.000 1.56 \\
1 1.58 \\
2 1.65 \\
3 1.75 \\
4 2.1\\
5 6.4\\
6 13.9 \\
7 19.8 \\
8 22.6 \\
9 24.3 \\
10 23.4 \\
    };
    \addlegendentry{\textcolor{black}{\ulrssm}}
\addplot [color=cPLOT_purple, solid, smooth, line width=\pckLineWidth]
    table[row sep=crcr]{%
0.000 1.42 \\
1 1.45 \\
2 1.51 \\
3 1.65 \\
4 1.88\\
5 3.5\\
6 10.73 \\
7  15.75\\
8  20.04\\
9 22.38 \\
10 20.79 \\
    };
    \addlegendentry{\textcolor{black}{\hybridfm}}
\addplot [color=cPLOT_ours, smooth, line width=\pckLineWidth]
    table[row sep=crcr]{%
0.000 1.61 \\
1 1.61 \\
2 1.65 \\
3 1.72 \\
4 1.83\\
5 2.01\\
6 2.88 \\
7 5.04 \\
8 9.91 \\
9 12.34 \\
10 15.15 \\
    };
    \addlegendentry{\textcolor{black}{Ours}}
	\end{axis}
\end{tikzpicture}

%% file: sec/6_conclusion.tex
\section{Conclusion}
\label{sec:conclusion}

We introduce \textbf{\methodname}, the first unsupervised rotation-invariant dense correspondence method that unifies rigid and non-rigid matching. At its core is the novel feature extractor \textbf{\netname}, that enables end-to-end SO(3)-invariant feature learning on surfaces. 
The integration with orientation-aware CFMaps further ensures robustness to intrinsic shape symmetries. 
Our method takes a significant step toward shape matching methods that are applicable to real-world data and  achieves new SoTA across challenging non-rigid matching scenarios, including arbitrary poses, non-isometric deformations, partiality, non-manifold structures, and noise perturbations, thereby validating the feasibility and full potential of a genuinely data-driven non-rigid matching paradigm. 
As promising future work, we will explore matching methods that can directly consume the learned vector-valued features (avoiding feature flattening). We hope our work can inspire further advancements in rotation-invariant and data-driven 3D shape understanding.

\section*{Acknowledgment}
We thank Vladimir Golkov, Thomas Dag\`es, Viktoria Ehm, and Linus H\"arenstam-Nielsen for proofreading the early version of the manuscript and providing valuable feedback, which significantly improved the work. We are also grateful to Nicolas Donati and Simone Melzi for the fruitful and inspiring discussions.
We express our gratitude to the respective funding agencies. MG, SH, RM, and DC are supported by the ERC Advanced Grant SIMULACRON and the Munich Center for Machine Learning. MG also acknowledges support from the Bavarian Californian Technology Center. 
CD and LG gratefully acknowledge support from the Toyota Research Institute University 2.0 Program, a Vannevar Bush Faculty Fellowship, and a gift from the Flexiv Corporation. Additionally, CD is supported by a Tayebati Postdoctoral Fellowship.

%% file: sec/X_suppl.tex
\clearpage
\setcounter{page}{1}
\maketitlesupplementary

\section{\netname: Discussion and Proof}
\label{app: \netname}
\paragraph{VN-Gradient Layer.}
Following the notation in~\cite{sharp2022diffusionnet}, we introduce another formulation of our equivariant gradient layer. 
We first compute the spatial gradient $\vz \in \bbC^{n \times 3}$ of the diffused feature $\vh \in \bbR^{n \times 3}$ as $\vz = \cG(\vh)$, where $\cG$ is the intrinsic spatial gradient operator defined on the shape surface.
Then at each (single) vertex $v$, the local gradients of all feature channels are stacked to form $\vw \in \bbC^{c \times 3}$ and obtain real-valued features $\vf \in \bbR^{c}$ as:

\begin{equation}
\label{eq: feat_grad_math}
    \vf \coloneq \re (\diag ( \overline{\vw} (\mA \vw)^T ))
\end{equation}

where $\mA$ is a learnable square $\bbC^{c \times c}$ matrix, $\diag(\cdot)$ extracts the diagonal of an input (square) matrix and taking the real part by $\re(\cdot)$ after a complex conjugate $\overline{\vw}$ is a notational convenience for dot products between a pair of 2D vectors. 
The $i^{\mathrm{th}}$ entry of the output at vertex $v$ is given by the inner product
$\vf^i = \re ( \sum_{j=1}^c \langle  \overline{\vw}^i ,  \mA_{ij} \vw^{j} \rangle )$ with vectors $\overline{\vw}^i, \vw^j \in \bbC^{1 \times 3}$.

Note that Eq.~\ref{eq: feat_grad_math} is the same as Eq.~\ref{eq:feat_grad_raw} in the main paper by using a more compact notation, which will help to prove Theorem~\ref{lemma:feat_grad_raw_invariance} as shown below.
\paragraph{Proof of Theorem~\ref{lemma:feat_grad_raw_invariance}.}
\begin{proof}
    
Given a rotation matrix $\mR \in SO(3)$ applied on the input $\vw$, we leverage the property of rotation group and linear algebra tricks.

\begin{align}
\label{eq:proof_spatial_gradient}
    \vf^i (\vw \mR) &= \re ( \sum_{j=1}^c \langle \overline{\vw^i \mR},  \mA_{ij} (\vw^j\mR) \rangle) \\
    &= \re ( \sum_{j=1}^c \langle \overline{\vw}^i \mR,  \mA_{ij} \vw^j\mR \rangle) \\
    &= \re ( \sum_{j=1}^c \langle \overline{\vw}^i \mR  \mR^T,  \mA_{ij} \vw^{j} \rangle) \\
    &= \re ( \sum_{j=1}^c \langle \overline{\vw}^i, \mA_{ij} \vw^{j} \rangle) \\
    &= \vf^i (\vw) 
\end{align}

Hence, Eq.~\ref{eq: feat_grad_math} and $\vf$ is SO(3)-invariant.
\end{proof}

Note that applying $\tanh$ activation on invariant features does not change the invariance. Therefore, $\vg \coloneq \tanh (\vf)$ is also SO(3)-invariant.

\paragraph{Proof of Theorem~\ref{lemma:network_equivariance}.}
\begin{proof}
We first prove the equivariance of the \netname~block, which boils down to show that Eq.~\ref{eq:vn-diffnet_block} is equivariant. Since $\VNMLP$ is equivariant~\cite{deng2021vn}, it remains to show that the network input $[\vu, \vh, \ve]$ is equivariant. This is a direct consequence of the equivariance of heat diffusion $\cH_t(\cdot)$ and Theorem~\ref{lemma:feat_grad_raw_invariance}.
More specifically, if the input feature $\vu$ is rotated by $\mR \in SO(3)$, the corresponding diffusion feature $\vh$ and gradient feature $\ve$ will become $\vh \mR$ and $\ve \mR$ respectively. Hence their concatenation leads to $[\vu \mR, \vh \mR, \ve \mR] = [\vu, \vh, \ve] \mR$. Therefore Eq.~\ref{eq:vn-diffnet_block} is equivariant.

Based on the result above, we now prove the invariance of the whole \netname. As illustrated in Fig.~\ref{fig:network}, the \netname~is a linear composition of a chain of equivariant layers, followed by a $\VNINV$ layer and $\VNLIN$ layer. It is trivial to observe that:

\begin{itemize}
    \item Composition of equivariance functions keeps the equivariance.
    \item Composition of equivariant function with invariant function leads to invariance.
\end{itemize}
Hence, the whole \netname~is SO(3)-invariant.
\end{proof}

\section{Unsupervised SO(3)-Equivariant Matching}
\label{app: matching_method}

\subsection{Unsupervised Loss}
\label{app:loss_details}

Our unsupervised loss $\Ltotal = \Lstruct + \Lcouple + \Lcontrast$ guides the \netname~to learn features that are aware of the surface geometry and orientation, while being robust to shape artifacts such as partiality, non-manifoldness, and noise perturbation. Below, we provide the full formulation for each component.

\paragraph{Structural Loss.}
The structural loss ensures that the estimated functional maps exhibit desirable geometric properties. Following prior works \cite{roufosse2019unsupervised, donati2020deepGeoMaps}, we enforce near-orthogonality and bijectivity for the standard functional map $\mC$ and orthogonality for the complex functional map $\mQ$.
Although the orthogonality term penalizes deviation of both $\mC$ and $\mQ$ from being orthogonal, it corresponds to enforcing different properties: while the orthogonality of $\mC$ encourages local area-preservation, it is a necessary property for $\mQ$ to represent a valid push-forward \cite{donati2022CFMaps, donati2022DeepCFMaps}.

\begin{align}
    \Lorth(\mCxy) &= \| \mCxy^T \mCxy - \mI \| \\
    \Lorth(\mQxy) &= \| \mQxy^T \mQxy - \mI \| 
\end{align}

The bijectivity term enforces that the map from $\cX$ to $\cY$ and back to $\cX$ is the identity map, promoting cycle-consistency.

\begin{equation}
    \Lbij(\mCxy, \mCyx) = \| \mCxy \mCyx - \mI \|
\end{equation}

\noindent Our total structural loss, applied bi-directionally, is:
\begin{align}
\label{eq:loss_struct_full}
\Lstruct = &\lambda_{1} \Lorth(\mCxy) + \lambda_{1} \Lorth(\mCyx) \\
 + & \lambda_{2} \Lorth(\mQxy) + \lambda_{2} \Lorth(\mQyx) \nonumber \\
 + & \lambda_{3} \Lbij(\mCxy, \mCyx) + \lambda_{3} \Lbij(\mCyx, \mCxy) \nonumber
\end{align}
where $\lambda_{i}$ are Langragian weights.

\paragraph{Coupling Loss.}
The coupling loss ensures mutual consistency between the three map representations derived from the invariant features: the soft pointwise map $\mPi$, the functional map $\mC$, and the complex functional map $\mQ$.
Similar to previous work \cite{ren2021discrete, cao2023unsupervised, attaiki2023clover, sun2023spatially}, we couple the soft pointwise map $\mPiyx$ with $\mCxy$ by converting the pointwise map to its corresponding functional map $\mChxy = \mPhiy^{\dagger} \mPiyx \mPhix$:

\begin{equation}
    \Lpc = \| \mCxy - \mChxy \|_F^2
\end{equation}

\noindent Additionally, we propose to couple $\mPiyx$ with the orientation-aware complex functional map $\mQxy$. This is achieved by first converting $\mPiyx$ to $\mChxy$ and then converting $\mChxy$ to its complex functional map equivalent, $\mQhxy$, which can be done in closed-form using singular value decomposition (SVD) \cite[Appendix F]{donati2022CFMaps}:
\begin{equation}
    \Lpq = \| \mQxy - \mQhxy \|_F^2
\end{equation}
    
\noindent The total coupling loss is:
\begin{equation}
\label{eq:loss_couple_full}
 \Lcouple = \lambda_{4} (\Lpc^{\cX\cY} + \Lpc^{\cY\cX}) + \lambda_{5} (\Lpq^{\cX\cY} + \Lpq^{\cY\cX})
\end{equation}

We empirically found that directly coupling $\mC$ and $\mQ$ jeopardizes training convergence, especially during the early stages of training, when feature quality is still low, resulting in inferior performance (Tab.~\ref{tab:ablation-loss}). Our strategy of coupling through the more relaxed pointwise map provides a more stable regularization.

\paragraph{Contrastive Loss.}
This term encourages the learned invariant features to be distinct across the surface by ensuring bijective functional self-maps, $\mChxx$ and $\mChyy$ .
$$
\Lcontrast = \lambda_6 \left( \| \mChxx - \mI \|_F^2 + \| \mChyy - \mI \|_F^2 \right)
$$
where $\mChxx = \mPhix^{\dagger} \mPixx \mPhix$ and $\mPixx=\mathrm{Softmax}(\mFx\mFx^T/\tau)$ \cite{cao2024revisiting}, and similar for shape $\cY$.

\subsection{Orientation-Preserving Isometry}

The functional map $\mC$ is orientation-preserving, analogous to \cite{donati2022CFMaps}. Here, we provide the full proof for the ease of the reader.

\begin{theorem}
\label{lemma:compatibility_CQ} 
Let the functional map $\mC$ (possibly infinite-dimensional) estimated from features be an isometry and $\mQ$ the complex functional map estimated by the gradient of the features. Then the map $\mC$ is an orientation-preserving isometry.
\end{theorem}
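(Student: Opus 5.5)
The argument follows the compatibility result of \cite{donati2022CFMaps} between a scalar functional map and the complex functional map induced on tangent vector fields. Let $T:\cY\to\cX$ be the pointwise map whose pullback is $\mC$, that is $\mC f = f\circ T$. Since $\mC$ is an isometry, $T$ is an isometry between the two surfaces; this is the standard characterization that $\mC$ commutes with the Laplace--Beltrami operators (equivalently, is orthogonal and preserves the Dirichlet energy). Its differential $dT$ therefore acts on tangent planes as a linear isometry, and so at every point it is either a rotation or a reflection with respect to the chosen surface orientations. On a connected surface, continuity of $\det dT$, which takes values in $\{\pm1\}$, forces one of these two cases globally. The task is then to show that the existence of a consistent complex functional map $\mQ$ rules out the reflection case.

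First, the gradient relation. The CFMaps solver estimates $\mQ$ so that it transports the feature gradients: $\mQ(\nabla f_i) = \nabla(\mC f_i)$ for the feature functions $f_i$, where $\mQ$ is $\bbC$-linear on tangent fields written as complex numbers in local frames. For an isometry the chain rule gives $\nabla(f\circ T) = dT^{-1}(\nabla f\circ T)$, i.e.\ the push-forward $T_*$ (transpose equals inverse) applied to $\nabla f$. Since $\mC$ is an isometry, the scalar gradients in the range of the feature functions span the space of gradient fields in the infinite-dimensional setting. Hence $\mQ$ must agree with $T_*$ on these fields.

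Second, linearity. By the Hodge decomposition, any smooth vector field splits as $\nabla f + J\nabla g$ plus a harmonic part, where $J$ is rotation by $\pi/2$ (multiplication by $i$). Because $\mQ$ is complex linear, $\mQ(J v) = J\,\mQ(v)$. Consequently $T_*$ must commute with $J$ on these fields: $dT\,J = J\,dT$ pointwise, at least where gradients are nonzero, and then everywhere by continuity and density.

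Third, orientation. A $2\times2$ orthogonal matrix commutes with the rotation $J$ if and only if it is itself a rotation. A reflection $S$ instead satisfies $SJ = -JS$, i.e.\ it is conjugate-linear in the complex frame. Therefore $\det dT = +1$, and $T$ (hence $\mC$) is an orientation-preserving isometry.

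The main obstacle is the rigor of the second step: deducing $dT\,J = J\,dT$ from the fact that $\mQ$ matches gradients and is complex linear. This requires that the feature gradients, together with their $J$-rotations, be rich enough that the $\bbC$-span of $\{\nabla f_i\}$ separates tangent directions at almost every point. I would first argue this directly: at a point where some $\nabla f_i\neq 0$, the vectors $\nabla f_i$ and $J\nabla f_i$ form a real basis of the tangent plane. If $dT$ were a reflection, then $\mQ(J\nabla f_i)=J\,dT\nabla f_i \neq dT(J\nabla f_i)$. This contradicts locality of $\mQ$ as a push-forward, using the fact that orthogonality of $\mQ$ (enforced by $\Lorth$) characterizes valid push-forwards \cite{donati2022CFMaps}. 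Failing that, I would simply assume, as in \cite{donati2022CFMaps}, that the features generate all gradient fields.
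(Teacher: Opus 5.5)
\textbf{Gap in your second step.} The problem is the step ``Because $\mQ$ is complex linear \dots\ consequently $T_*$ must commute with $J$'', and this is exactly where orientation has to enter. Write $P$ for the push-forward you call $T_*$, so that $P\nabla f=\nabla(\mC f)$. Knowing $\mQ=P$ on gradient fields, complex linearity only gives $\mQ(J\nabla g)=J\,P\nabla g$. To conclude $PJ=JP$ you need $\mQ(J\nabla g)=P(J\nabla g)$, which is what you are trying to prove.

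Your hypotheses cannot supply this, even under your fallback assumption that the features generate \emph{all} gradient fields. Suppose $T$ reverses orientation, so $PJ=-JP$. On the Hodge decomposition set $\mQ(\nabla a+J\nabla b+h):=P\nabla a+J\,P\nabla b+Uh$, with $U$ any complex-linear unitary map between the harmonic spaces. This $\mQ$ is complex linear. It is orthogonal, since it equals $P$ on gradients, $-P$ on co-gradients and $U$ on harmonic fields, and so respects the $L^2$-orthogonal Hodge splitting. It also transports every gradient exactly as the feature relation requires.

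Your last paragraph does not repair this, for three reasons. First, $\mQ(J\nabla f_i)=J\,dT\nabla f_i\neq dT(J\nabla f_i)$ is a contradiction only if $\mQ$ is already known to equal $dT$ on $J\nabla f_i$. Second, orthogonality is merely a \emph{necessary} condition for a push-forward (as the paper itself notes), and the example above is orthogonal but non-local. Third, even granting locality, one nonzero gradient at $p$ is not enough: multiplication by a unit complex number agrees with a reflection on any single vector, so you would need two $\bbR$-independent feature gradients at $p$. Separately, the claim that feature gradients span all gradient fields does not follow from $\mC$ being an isometry; it is an extra richness assumption.

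\textbf{What is missing, and the paper's route.} You need a condition that forces $\mQ$ to act pointwise on \emph{every} tangent field. That condition is the compatibility relation $\langle \mX,\nabla_{\cX}(f\circ\varphi)\rangle_{T_p\cX}=\langle \mQ(\mX),\nabla_{\cY} f\rangle_{T_{\varphi(p)}\cY}$ for all $\mX$, $f$ and $p$. Since the vectors $\nabla_{\cY}f(\varphi(p))$ span $T_{\varphi(p)}\cY$, it gives $\mQ=d\varphi$ on all fields, and your third step then finishes.

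This is the paper's route. It derives the relation from $\mQ(\nabla_{\cX}\mFx)=\nabla_{\cY}\mFy$, $\mC\mFy=\mFx$, preservation of the metric by $d\varphi^{-1}$, and commutation of pullback with gradient. It then cites Theorem~2 of \cite{donati2022DeepCFMaps} rather than arguing rotation versus reflection by hand.

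Note that the paper's chain of equalities exhibits the relation only for $\mX=\nabla_{\cX}\mFx$. The extension to non-gradient fields, which is exactly where your argument breaks, is left implicit in that citation. A complete proof must get it from an additional hypothesis, not from complex linearity. One such hypothesis is that $\mQ$ acts pointwise, $\mQ(\mX)(\varphi(p))=Q_p\mX(p)$, with the feature gradients spanning each tangent plane.
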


\begin{proof}
Idea: we aim to prove that $\mC$ and $\mQ$ satisfy the following equation (cf. Eq.10 in~\cite{donati2022DeepCFMaps}):
\begin{equation}
\label{eq: relation_CQ}
    \langle \mX, \nabla_{\cX} (f \circ \varphi ) \rangle_{T_p\cX} = \langle \mQ(\mX), \nabla_{\cY} f \rangle_{T_{\varphi(p)}\cY}
\end{equation}

where $\varphi: \cX \rightarrow \cY$ is the isometric pointwise map, and $\mC: \cL^2(\cY) \rightarrow \cL^2(\cX)$ is the corresponding pullback, which transfers $\rmSO{3}$-invariant features, namely $\mC(\mFy) = \mFx$. 
Furthermore, the complex functional map, also known as the pushforward $\mQ: T\cX \rightarrow T\cY$ transfers the gradient of the invariant features $\mQ(\nabla_{\cX} \mFx) = \nabla_{\cY} \mFy$ and is complex-linear.
Given an arbitrary feature function $f: \cY \rightarrow \bbR$, a tangent vector field $\mX \in T\cX$ and a point $p \in \cX$,
we have the following equalities:
\begin{align}
\label{eq: proof_method_orientation}
    &\langle \mQ(\nabla_{\cX} \mFx), \nabla_{\cY} f \rangle_{T{\cY}} = \langle \nabla_{\cY} \mFy, \nabla_{\cY} f \rangle_{T{\cY}} \\
    & = \langle d\varphi^{-1}(\nabla_{\cY} \mFy),  d\varphi^{-1}(\nabla_{\cY} f) \rangle_{T{\cX}} \\
    & =  \langle \nabla_{\cX} (\mC \mFy), \nabla_{\cX} (\mC f) \rangle_{T{\cX}} \\
    & =  \langle \nabla_{\cX} \mFx, \nabla_{\cX} (\mC f) \rangle_{T{\cX}}
\end{align}

The first equality follows by applying the definition $\mQ(\nabla_{\cX} \mFx) = \nabla_{\cY} \mFy$.
The second equality holds since the metric is preserved by the pullback $d\varphi^{-1}$. (This assumes the invertibility of $\varphi$, which means exact correspondences.)
The third equality utilizes the fact that the pullback commutes with the gradient.
And the final equality follows due to the definition $\mC(\mFy) = \mFx$.

Let $\mX \coloneq \nabla_{\cX} \mFx $, we arrive at the Eq.~\ref{eq: relation_CQ} that we initially want to prove.
According to Theorem 2 in~\cite{donati2022DeepCFMaps}, the functional map $\mC$ is an orientation-preserving isometry.
\end{proof}
In particular, this theorem also holds for invariant (resp. equivariant) features estimated by \netname.

\section{Experimental Details}

\begin{table*}
\hspace{-1.cm}
\centering
    \begin{tabular}{lccccccc}
        \toprule
        {\small{\hspace{-0.2cm} }}& Ours & \ulrssm & \duofm & \consistfm  & \sms  & \sfm  & \hybridfm \\\cline{1-1}
        \hline
        \#params(k, $\downarrow$)&  \textbf{327.7} &  510.3 &  477.8 &   477.8 &  1,266.2 &  1,848.4 &  494.3 \\
        runtime(s, $\downarrow$) &  0.065 &  \textbf{0.023} &  0.071 &  0.055 &  15.311 &  0.171 &  0.043 \\
        mGeoErr ($\downarrow$) &  \textbf{4.6} &  24.8 &  25.2 &  9.1 &  57.6 &  9.9 &  26.4 \\
        \bottomrule
    \end{tabular}
\caption{\textbf{Comparison of runtime, \#params and mGeoErr on SMAL.}}
\label{tab:runtime}
\end{table*}

\begin{figure*}
    \centering
    \hspace{-3em}
    \begin{tabular}{cc}
        \includegraphics[width=0.5\linewidth]{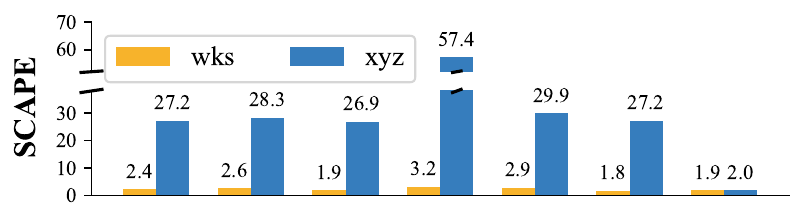}  & \includegraphics[width=0.5\linewidth]{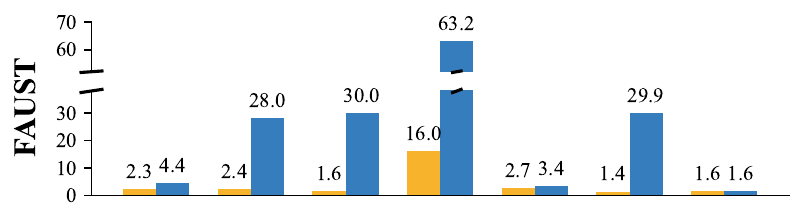}  \\
        \includegraphics[width=0.5\linewidth]{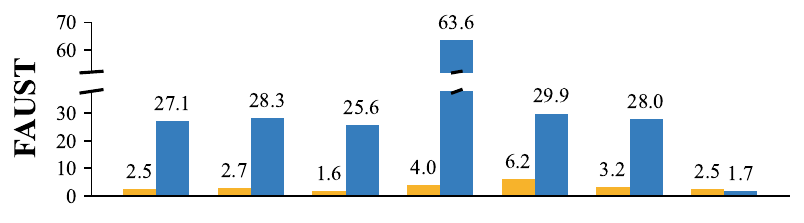} & \includegraphics[width=0.5\linewidth]{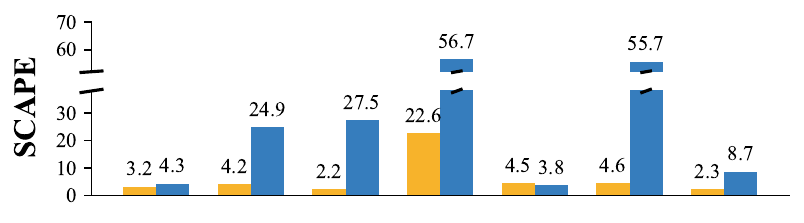} \\
        \includegraphics[width=0.5\linewidth]{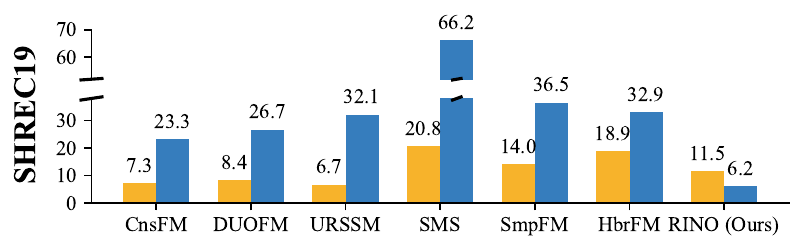} & \includegraphics[width=0.5\linewidth]{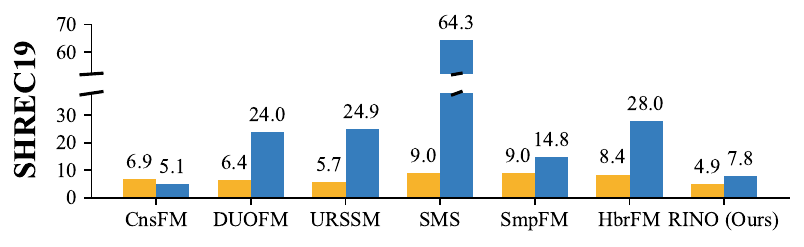} 
    \end{tabular}
    \caption{\textbf{MGeoErr($\downarrow$) on SCAPE, FAUST, SCHREC19.} (Left) trained on SCAPE; (Right) trained on FAUST. Y-axis dataset labels indicate the test dataset. MGeoErr is annotated on top of each bar. All baselines significantly deteriorate when changing from \wks~to \xyz, whereas ours is more robust to these changes. To our best knowledge, \methodname~is the first method, which learns competitive features and correspondences directly from raw 3D geometry in a purely data-driven manner.}
    \label{fig:scape_bar}
\end{figure*}

\paragraph{Implementation details.}
Our \netname~takes the xyz coordinate of the mass-centered 3D shape as input.
To facilitate a fair comparison, we chose the internal feature dimension in our \netname~block as $c=42$ such that the flattened feature has $126(=42 \times 3)$ dimensions, which is similar to the original \diffusionnet, whose hidden feature dimension is 128.
Note that our network has only $327,714$ training parameters, which is approximately $2/3$ of the \diffusionnet~($510,336$ trainable parameters), owing to the vector neuron representation.
The dimension of the final output features is set to $256$, same as baselines.
We chose $200$ eigenbases of the LBO and $30$ eigenbases of the connection Laplacian for computing $\mC$ and $\mQ$ respectively.
All hyper-parameters $\lambda$ in the total loss are set as $1.0$, except $\lambda_{2}$ and $\lambda_{5}$ to be $0.1$.

\paragraph{Datasets.} 
To study the performance under near-isometric deformation, we use
SCAPE~\cite{anguelov2005scape}, FAUST~\cite{bogo2014faust} and SHREC19~\cite{melzi2019shrec}.
The SCAPE dataset contains a single person in 71 different poses, where the train/test split is 51/20.
The FAUST dataset contains 10 persons in 10 different poses, leading to a total of 100 shapes, which are split into 80/20 for training and testing.
The challenging SHREC19 dataset contains 44 human shapes with different mesh connectivity and is only used as a test set.

For non-isometric cases, we consider the SMAL~\cite{Zuffi:CVPR:2017} and DT4D~\cite{magnet2022smooth} datasets. SMAL contains 49 four-legged animal shapes of 8 species, from which 5 are used for training and 3 unseen species for testing, resulting in a 29/20 split~\cite{donati2022DeepCFMaps}. 
The DT4D dataset contains challenging humanoid shapes with large non-isometric deformation across different shape classes. Following~\cite{donati2022DeepCFMaps, cao2023unsupervised}, we employ 9 different classes of humanoid shapes, resulting in a split of 198/95 shapes.

For real-world scan, we utilize the FAUSTSCAN~\cite{bogo2014faust} training dataset, which is the raw scan to create the registered FAUST dataset.  The FAUSTSCAN training dataset contains a total of 100 shapes with 10 people in 10 different poses, which is split in the same way for train/test as FAUST. Since it is real-world captured human scans, it has the typical artifacts of missing parts due to occlusion or lack of sensor coverage and scan noise. We subsample the shapes to a resolution of $\!\sim\!$~5k vertices for all evaluated methods.

We further stress test our method using a customized non-manifold version of FAUST, which is a dedicated non-manifold test set derived from the original test partition (20 test shapes), where half the shapes are decimated (\#vertex reduced from 5k to 3k). 

For partial shape matching, we employ SHREC16-Partiality \cite{cosmo2016shrec, ehm2024partial}. It has two sub-datasets, namely CUTS and HOLES, representing different partial modalities. It contains 120 training and 153 testing shapes for CUTS; 80 training and 200 testing shapes for HOLES, respectively.

\paragraph{Computational complexity.}

The performance metrics, including the number of trainable parameters and mean runtime on the SMAL test set, are detailed in Tab.~\ref{tab:runtime}. For a fair comparison, we employ the same computation unit with four Intel Xeon E5-2697 CPU and one NVIDIA TITAN Xp GPU  consistently for all methods.
Our model establishes a new frontier by being both significantly smaller (approximately 30\% fewer parameters than the next-best method) and substantially more accurate (half of the error of the closest competitor). 
While the current implementation demonstrates a marginal increase in runtime, it operates within the competitive range of all other methods, requiring less than 0.1 seconds to process a shape pair with over 5k vertices. 
This performance gap could be effectively mitigated through the development of a CUDA-accelerated implementation of the VN-layers.

\section{Additional Experiments}

\subsection{\xyz~versus \wks}

Existing methods, including \ulrssm~and \hybridfm, demonstrate a selective preference for shape descriptors. 
Specifically, they employ \xyz~coordinates for the SMAL and SHREC16-Partiality datasets, but utilize \wks~for others, such as SCAPE and FAUST. 
This differentiation is likely necessitated by the significant non-isometries inherent in SMAL and SHREC16-Partiality, which inherently reduce the stability of \wks. 
To rigorously analyze this effect, we calculated the mean values of the feature channels across semantically corresponding points for all shapes in both the SMAL and SCAPE datasets. 
The empirical evidence in Fig.~\ref{fig:stat_wks_xyz} confirms that \wks~possesses a higher variance within \textcolor{cPLOT3}{SMAL} than within \textcolor{cPLOT1}{SCAPE}, whereas the \xyz~coordinate shows greater consistency. 
The inherent ability of \xyz~to represent raw shape geometries makes it a compelling choice for purely data-driven 3D learning in both research and practical applications.

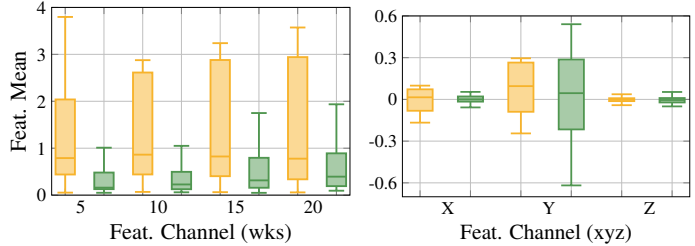
\begin{figure}
    \centering
    \begin{tabular}[th!]{cc}
    \hspace{-1.4cm}
    \resizebox{0.632\columnwidth}{!}{%
    \input{figs/wks/boxplot_wks}
    }
    &
    \hspace{-1cm}
    \resizebox{0.6\columnwidth}{!}{%
    \input{figs/wks/boxplot_xyz}
    }
    \vspace{-0.7em}
    \end{tabular}
    \caption{\textbf{Statistics of \wks~vs. \xyz.} The box plot illustrates the statistical distribution of the mean feature channel values at semantically corresponding points across the entire shape collections of the \textcolor{cPLOT3}{SMAL} and \textcolor{cPLOT1}{SCAPE} datasets. The methodology involved aggregating the channel-wise mean of features across corresponding points for every shape, yielding a feature matrix (with dimensions \#correspondences $\times$ \#feature channels), where each column's distribution is plotted as a box. The statistics shown for the 5th, 10th, 15th, and 20th \wks~channels demonstrate a pronounced difference in behavior between the datasets. Specifically, the \wks~statistics for \textcolor{cPLOT3}{SMAL} display a substantially larger variance than those for \textcolor{cPLOT1}{SCAPE}. This higher dispersion is a direct consequence of the non-isometries present in the SMAL dataset, which contrasts sharply with the stability observed in the \xyz~descriptor statistics.}
    \label{fig:stat_wks_xyz}
\end{figure}

\subsection{Near-Isometric Matching}
We compare performance on the near-isometric datasets SCAPE, FAUST and SHREC19 using both \xyz~and \wks~as input features. Specifically, we train on SCAPE and FAUST, and test across all three datasets, with quantitative results shown in Fig. \ref{fig:scape_bar}. 
While all methods perform well when utilizing \wks, ours is uniquely effective when relying solely on \xyz, underscoring its extraordinary robustness to input type and its ability to learn salient shape information directly from raw data. This represents, to our knowledge, the first instance where \xyz~has achieved competitive results, if not better, against \wks~for near-isometric shapes. The high performance of baselines using \wks~also suggests that the near-isometric matching task has been addressed very well, hinting that the community should shift focus to more challenging non-isometric and real-world cases.
The experiments also show that \methodname~generalized well across different datasets, suggesting that equivariance/invariance does not hinder generalization. 
Furthermore, the shapes in SHREC19 have all different discretizations, ours performs well in this challenging scenario under both \xyz~and \wks~as input features. Despite the discretization-dependent VN-EdgeConv operation at the beginning of \methodname, the subsequent \netname~block learn adaptive diffusion time accordingly, rendering the entire \netname~robust to varying discretizations.

\subsection{Matching Non-manifold Shapes}

\begin{figure}[t!]
    \centering
    \hspace*{-0.5cm}
    \includegraphics[trim={300 530 450 30}, clip, width=0.55\textwidth]{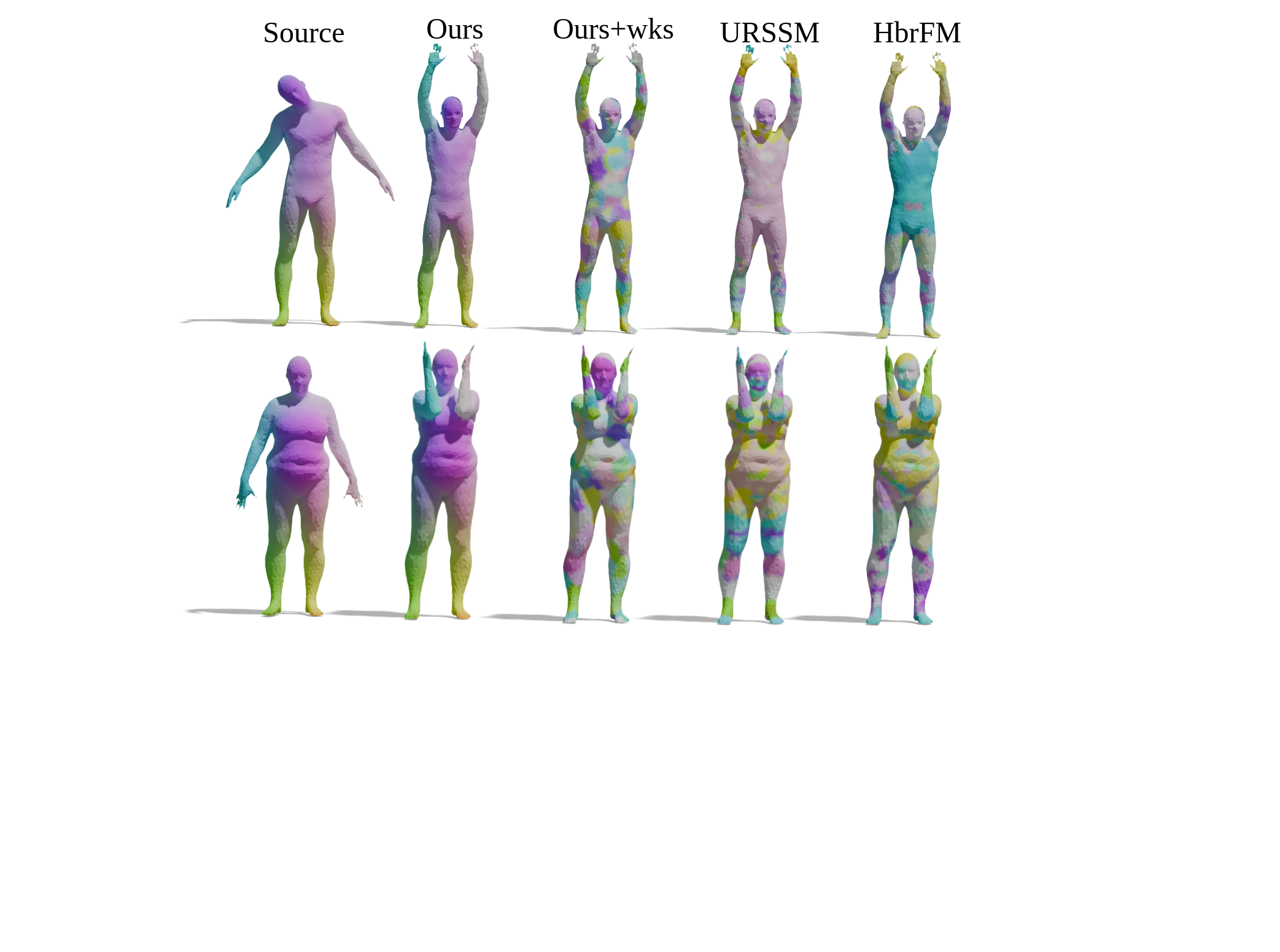}
    \caption{\textbf{Non-manifold shape matching.} Methods employing the \wks~as input exhibit a significant performance deterioration, even for SoTA methods such as \ulrssm~and \hybridfm, an observation that also applies to our proposed method (Ours+\wks). Fortunately, our SO(3)-invariance network allows for effective learning directly from the raw \xyz~coordinates (Ours). This approach offers robustness against the difficulties posed by non-manifoldness, bypassing the fragility observed when using \wks. }
    \label{fig:non-manifold}
\end{figure}

\begin{table}[t]
\centering
\begin{tabular}{lrr}
\toprule
& \textbf{FAUST} & \textbf{FAUST non-manifold}\\
\midrule
\ulrssm(\wks)  & 1.6 & 17.6  \\
\hybridfm(\wks)  & 1.4 & 16.6  \\
Ours(\wks) & 1.6 & 15.6 \\
Ours(\xyz) & 1.6 & 2.7 \\
\bottomrule
\end{tabular}
\caption{\textbf{MGeoErr($\downarrow$) on FAUST non-manifold shapes.}}
\label{tab:non-manifold}
\end{table}

\begin{figure*}[t!]
    \vspace{-0.5cm}
    \includegraphics[trim={200 280 50 270}, clip, width=1.1\textwidth]{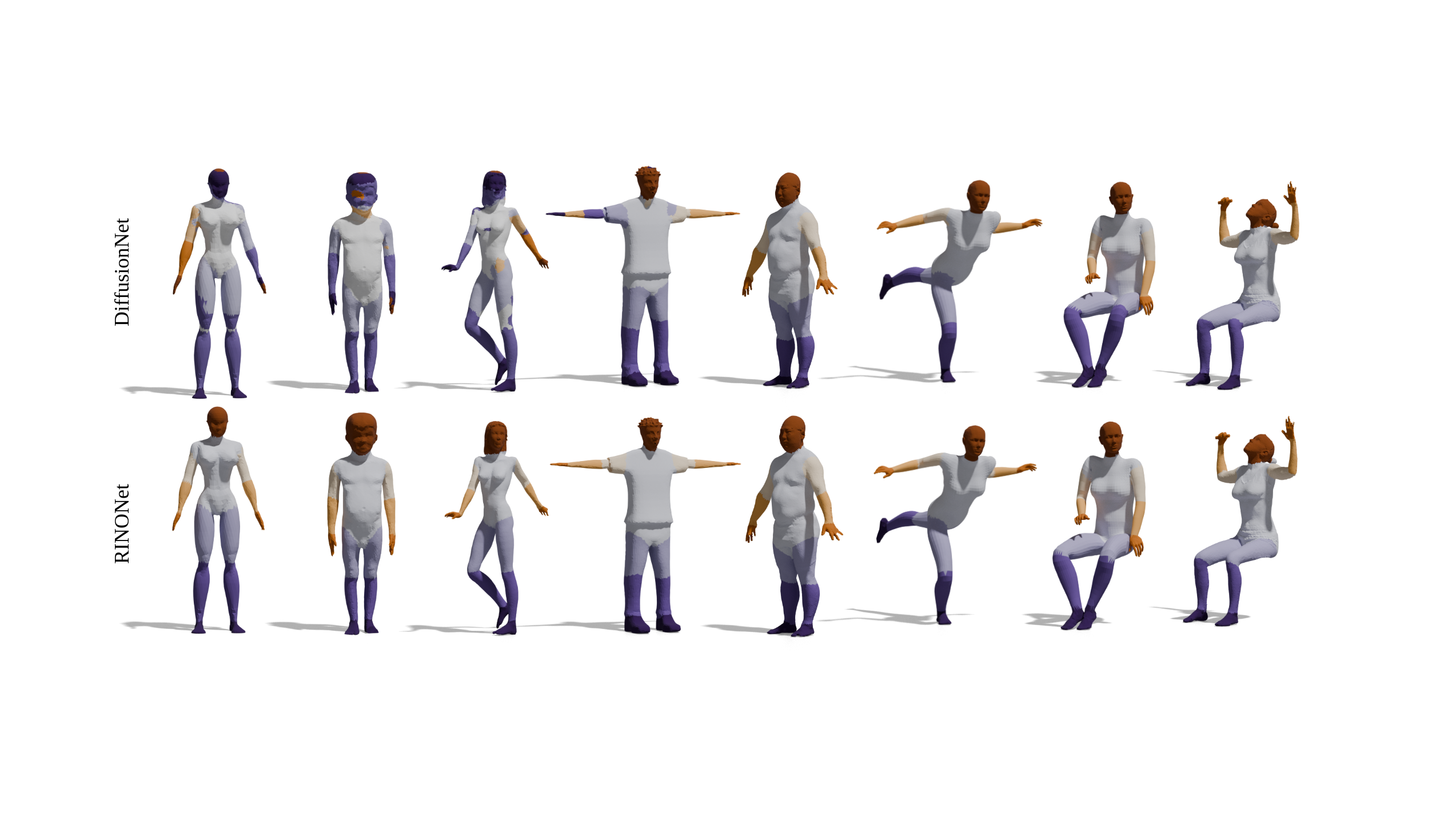}
    \caption{\textbf{Human Segmentation.} We show the segmentation results of \diffusionnet~and our \netname~on the composite dataset~\cite{maron2017cnn}. Our \netname~produces sharper and more semantically meaningful segments.}
    \label{fig:seg-supp}
    \vspace{18pt}
\end{figure*}

To underscore the advantage of directly processing raw geometry, we evaluate models pre-trained on FAUST against our customized FAUST non-manifold test set. 
Note that this setup violates the ``nice mesh" assumption required by intrinsic descriptors like \wks. Consequently, as shown in Tab.~\ref{tab:non-manifold}, methods relying on \wks~as input suffer a significant decline in matching performance. Crucially, our proposed method, when directly utilizing the raw \xyz~coordinates, exhibits remarkable robustness with only a negligible performance degradation. This result highlights the enhanced robustness conferred by \xyz~input when confronting the topological complexities of non-manifold and highly discrepant data. See Fig.~\ref{fig:non-manifold} for visual examples.

\subsection{Real-World Scans}

To evaluate all methods under real-world scans, we utilize the FAUSTSCAN training dataset, which contains typical artifacts of missing parts due to occlusion or lack of sensor coverage and scan noise.
We reduce scan vertices to 5k, due to the computation of LBO and its eigendecomposition, and no further preprocessing is performed to preserve the noise, holes and artifacts in the original scans.
The quantitative results are reported in Tab.~\ref{tab:non-isometry} (col. FSCAN). Ours performs all baselines regardless of input features and qualitative results are illustrated in Fig.~\ref{fig:faustscan}.
This strong performance suggests a big step towards shape matching methods for real-world applications.

\subsection{Human Segmentation}

\begin{table}[t]
\centering
\begin{tabular}{lrr}
\toprule
\textbf{Method}& \textbf{Human dataset~\cite{maron2017cnn}}\\
\midrule
\diffusionnet(hks, w/o aug)  & 91.3\%  \\
Ours(hks, w/o aug)  & 91.0\%  \\
\midrule
\diffusionnet(\xyz, w/ aug)  & 90.3\%  \\
\diffusionnet(\xyz, w/o aug)  & 83.6\%  \\
Ours(\xyz, w/ aug)  & 90.6\%  \\
Ours(\xyz, w/o aug)  & 90.1\%  \\
\bottomrule
\end{tabular}
\caption{\textbf{Segmentation accuracy ($\uparrow$) on human dataset~\cite{maron2017cnn}.}}
\label{tab:seg}
\end{table}

We compare against the SoTA \diffusionnet~\cite{sharp2022diffusionnet} across two feature inputs: \hks~and \xyz. The results in Tab.~\ref{tab:seg} show that our proposed method achieves comparable accuracy to \diffusionnet~when using \hks. The key distinction arises with raw \xyz~input: \netname~exhibits complete stability, maintaining its performance with or without SO(3)-augmentation during training, whereas \diffusionnet~suffers a substantial degradation when augmentation is removed. This robust, stable performance stems directly from the in-built SO(3)-invariance of our architecture, which simplifies the training pipeline and translates to enhanced training efficiency. In Fig.~\ref{fig:seg-supp} we present additional qualitative results.

\begin{figure}
    \centering
    \includegraphics[width=\linewidth]{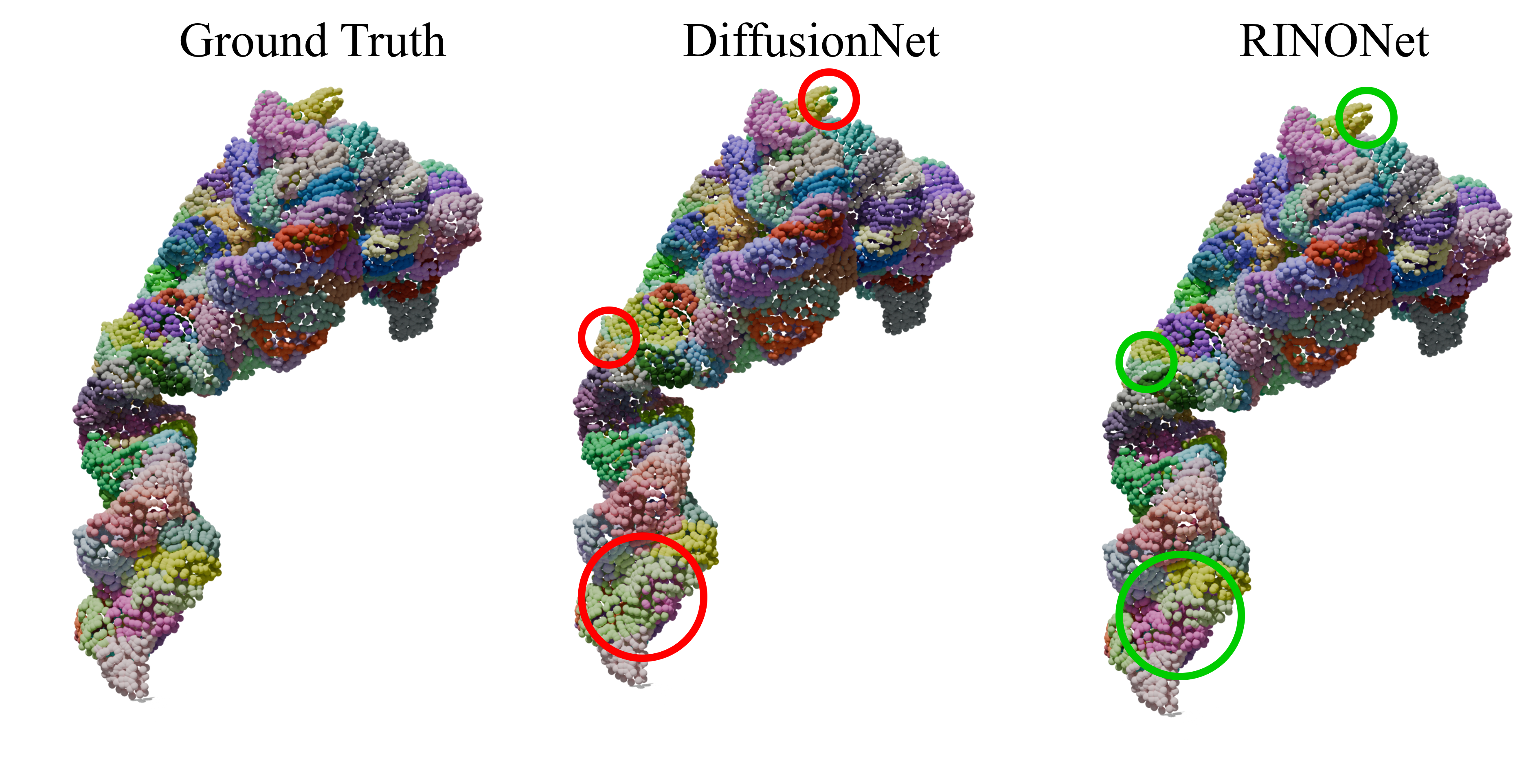}
    \caption{RNA segmentation}
    \label{fig:rna}
\end{figure}

\subsection{RNA Segmentation}
To test \methodname~on other diverse data modalities, we train it on RNA dataset~\cite{poulenard2019effective} with the same training protocol as DiffusionNet (i.e. the I/I setting). \netname~achieves $85.8\%$ accuracy, versus $72.1\%$ of DiffusionNet (cf. Fig.~\ref{fig:rna}).

\subsection{Rigid Registration}
As a proof-of-concept, we train \methodname~on the Stanford Bunny dataset~\cite{turk1994bunny}, predicting per-point SO(3)-invariant features. 
More specifically, we take two bunnies of different resolution and randomly rotate one of them. 
Then, a correspondence search is conducted in the feature space, and the rigid registration is computed using Procrustes Analysis. 
Our final mean geodesic error of correspondences is $2.4$, and the Chamfer distance is $0.018$. A visual result is shown in Fig.~\ref{fig:rigid_reg}.

\begin{figure}
    \centering
    \includegraphics[width=\linewidth]{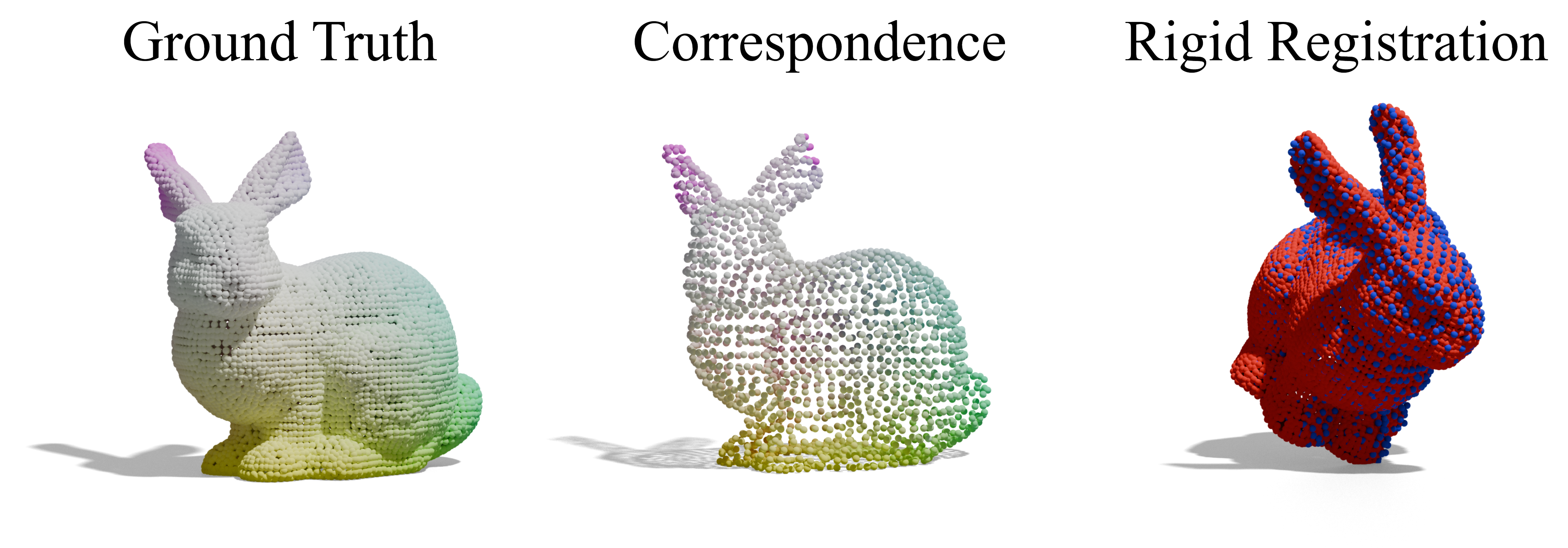}
    \caption{3D rigid registration on Stanford dataset (bunny)}
    \label{fig:rigid_reg}
\end{figure}

\begin{table}[t]
\centering
\begin{tabular}{llr}
\toprule
\textbf{No.} & \textbf{Ablation Setting} & \textbf{SMAL} \\
\midrule
1& w\textbackslash o $\Lstruct$  &  7.3 \\
2& w\textbackslash o $\Lcouple$   &  7.6 \\
3& w\textbackslash o $\Lcontrast$ &  5.7 \\
4& w\textbackslash o the Qmaps branch &   5.1\\
5& w\textbackslash o $\Lpq$  &  5.0 \\
6& Ours + coupling $\mC,\mQ$  &  26.9 \\
7& Ours & \textbf{4.6} \\
\bottomrule
\end{tabular}
\caption{\textbf{Ablation study on SMAL.}}
\label{tab:ablation-loss}
\end{table}

\subsection{Ablation}
We ablate the design of both our architecture and loss terms, and report the results on the challenging non-isometric SMAL dataset in Tab.~\ref{tab:ablation-loss}. 
To this end, we compare each row with the last row (our full model) to assess the importance of individual terms.
The performance degrades most when $\Lstruct$ or $\Lcouple$ is removed (first 2 rows), indicating the central role of structural property and consistency of different map representations. 
Interestingly, this finding echoes with previous works~\cite{roufosse2019unsupervised, ren2021discrete, cao2023unsupervised}.
Removing the $\Lcontrast$ term has a moderate impact on the final results, since it helps to learn more distinctive rotation-invariant features.
Looking at the fourth and fifth rows, we conclude that the integration of complex functional maps encourages the first-order consistency of rotation-invariant features, which in turn helps to improve the accuracy of estimated correspondences.
Finally, comparing the last two rows shows that excessive coupling can hinder learning performance.

To study the relative performance gain of invariant features and CFMaps, we ablate further in Tab.~\ref{tab:rino_vs_diffusion} by two experiments under the SO(3)/SO(3) setup. 
(i) We take baselines and swap in our \netname~ as the feature extractor and experiment on SMAL; \netname~ always improves over DiffusionNet, highlighting the benefit of SO(3)-invariant features. (cf. Fig.~\ref{fig:rinonet_diffusionnet_baseline})
(ii) We compare DiffusionNet and \netname~ without CFMaps, and \netname~ with CFMaps, training them on four different datasets, showing that the largest performance gain comes from SO(3)-invariant features, which can be further improved by CFMaps.

\begin{figure}[t!]
    \centering
    \includegraphics[width=1.02\linewidth,trim=13cm 9.8cm 16cm 20.2cm, clip]{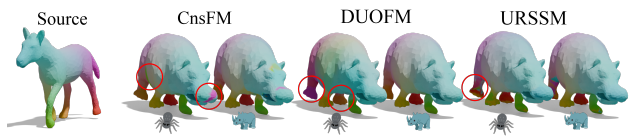}
    \vspace*{-1.5em}
    \captionlistentry{Description for List of Figures}
    \caption{Backbone comparison. Spider emoji denotes DiffusionNet and rhinoceros emoji denotes RINONet.}
    \label{fig:rinonet_diffusionnet_baseline}
\end{figure}

 \begin{table}[t!]
 
     \hspace*{-11pt}
     \footnotesize
     \begin{tabular}{clcccc}
     \hline
       & mGeoErr ($\downarrow$) & \duofm & \consistfm & \ulrssm & \hybridfm  \\ \cline{2-6}
       (i)   & {DiffusionNet} & 25.2 & 9.1 & 24.8 & 26.4  \\
         & {\netname~} & \textbf{5.7} & \textbf{7.3} & \textbf{5.1} & \textbf{7.4} \\
         \hline
           \multirow{4}{*}{(ii)}  &  & SCAPE & FAUST & SMAL & DT4D \\ \cline{2-6}
          & {DiffusionNet} & 26.9 & 30.0 & 24.8 & 59.3 \\
         & {\netname~} & \textbf{2.0} & 1.6 &  5.1 & 5.9 \\ 
         & {\netname~ + CFMaps} & \textbf{2.0} & \textbf{1.5} &  \textbf{4.6} & \textbf{5.3} \\
         \hline
     \end{tabular}
     \caption{Ablation: SO(3)-invariant features \& CFMaps}
     \vspace*{-20pt}
     \label{tab:rino_vs_diffusion}
 \end{table}

\begin{figure*}[b]
    \centering
     \includegraphics[trim={40 50 130 35}, clip, width=0.7\textwidth]{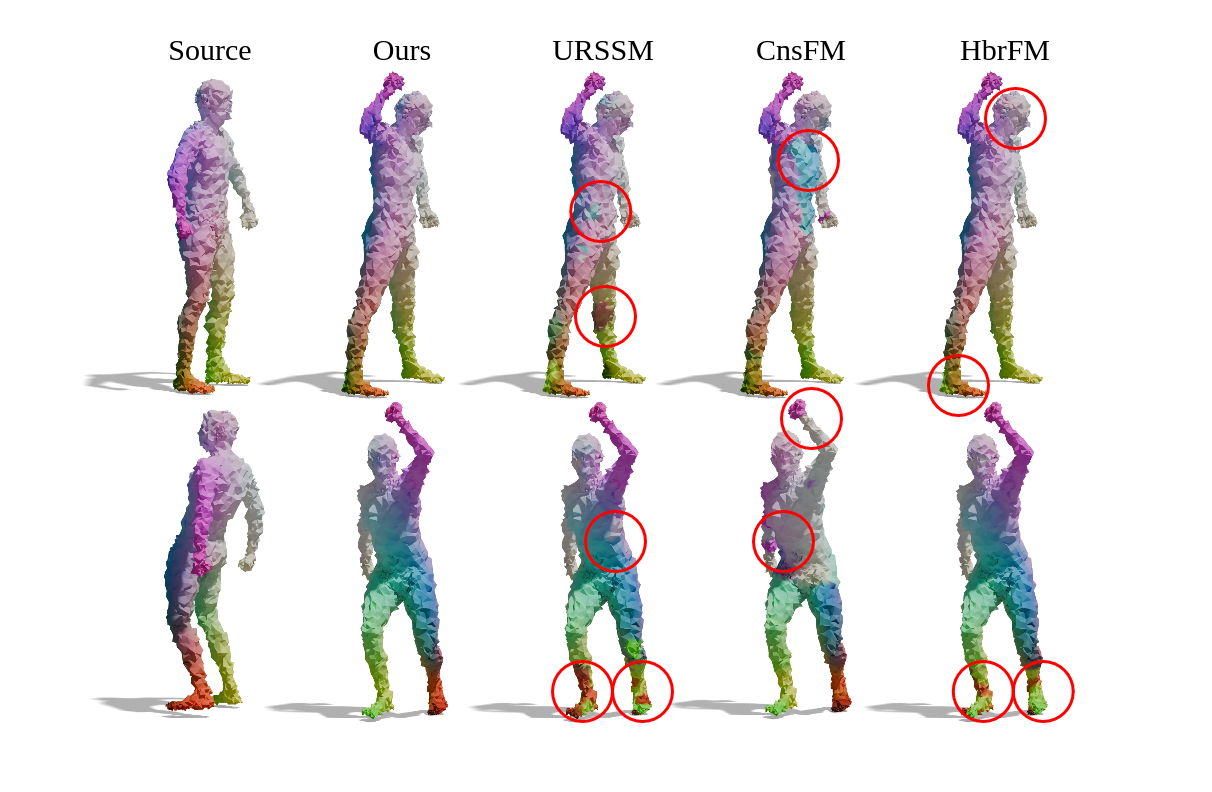}
    \caption{\textbf{Matching shapes with Gaussian perturbation.}}
    \label{fig:noise-vis}
\end{figure*}

\subsection{More Qualitative Results}
We show more matching results of \methodname~on the challenging noisy shapes in Fig.~\ref{fig:noise-vis}, and ours is the only method that can produce reliable correspondences under the noise level $\sigma=6\text{e-}3$, where baselines already start to produce spurious correspondences.

We also present additional qualitative results on the SHREC16-Partiality datasets. Here we show all partial shapes of all categories in Fig.~\ref{fig:cuts-cat}-\ref{fig:holes-michael}, and \methodname~manifests strong robustness under this challenging incomplete shape by directly consuming the raw 3D geometry. The highly accurate correspondences suggest remarkable quality of the learned rotation-invariant features.

\subsection{Failure Cases}

We notice that for highly non-isometric pairs, our estimated correspondences are sometimes locally non-smooth. As shown in Fig.~\ref{fig:failure}, \methodname~confuses when one humanoid misses completely his lower arm. This leads to wrong correspondences in the region of missing parts in the other shape.
We hypothesize that incorporating further intrinsic prior during training or post-processing could solve these artifacts.

\begin{figure}[t!]
    \centering
    \vspace*{-25em}
    \includegraphics[trim={600 550 300 200}, clip, width=0.5\textwidth]{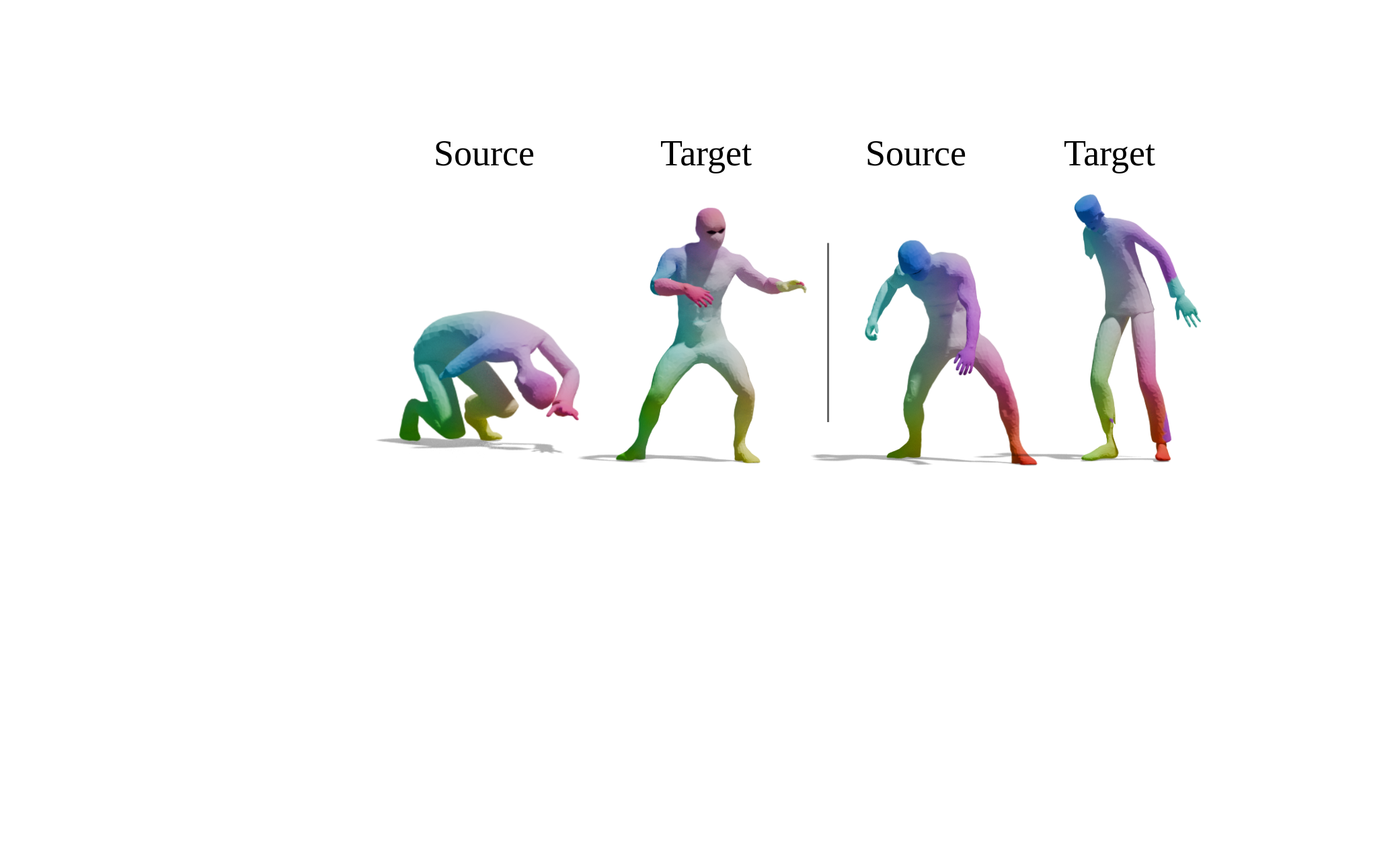}
    \caption{\textbf{Failure cases.}}
    \label{fig:failure}
\end{figure}

\begin{figure*}
    \centering
    \hspace{-0.5cm}
    \includegraphics[trim={400 530 450 30}, clip, width=1.0\textwidth]{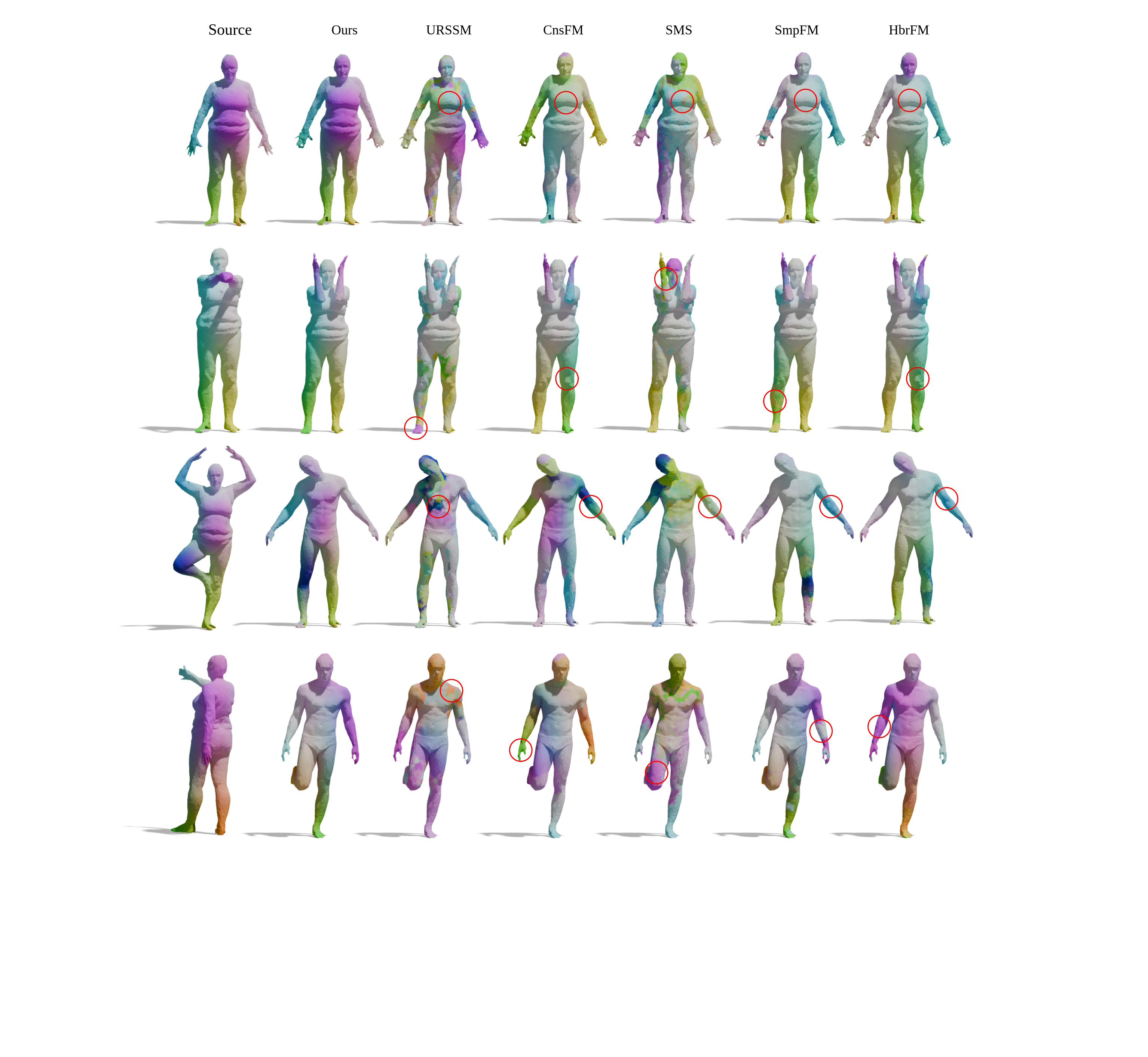}
    \caption{\textbf{Qualitative results on the real-world FAUSTSCAN.}}
    \label{fig:faustscan}
\end{figure*}

\begin{figure*}[b]
    \centering
    \hspace*{-4em}
    \includegraphics[trim={100 300 0 300}, clip, width=1.2\linewidth]{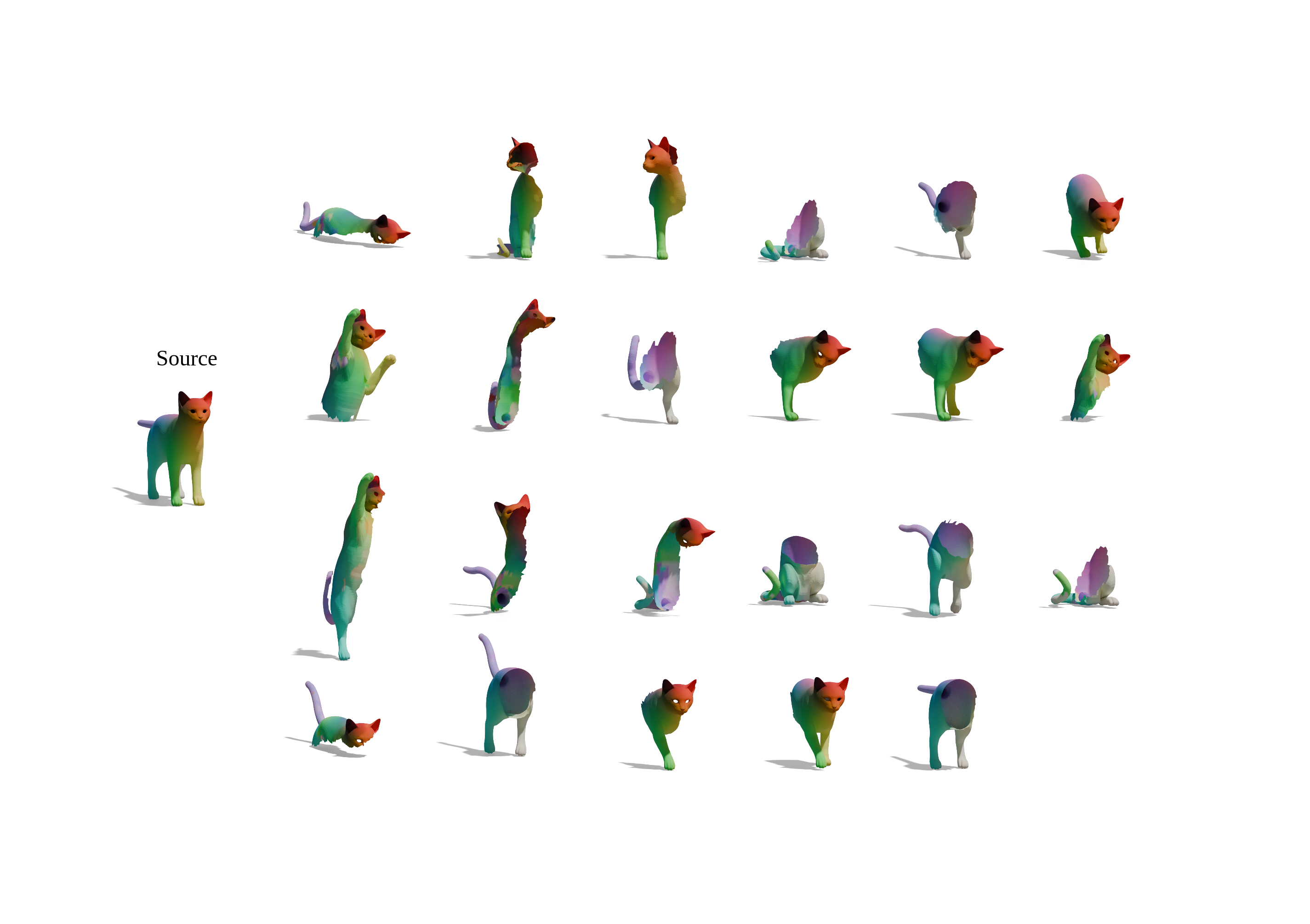}
    \caption{SHREC16-Partiality CUTS cat}
    \label{fig:cuts-cat}
\end{figure*}

\begin{figure*}
    \centering
    \hspace*{-4em}
    \includegraphics[trim={0 50 0 100}, clip, width=1.2\linewidth]{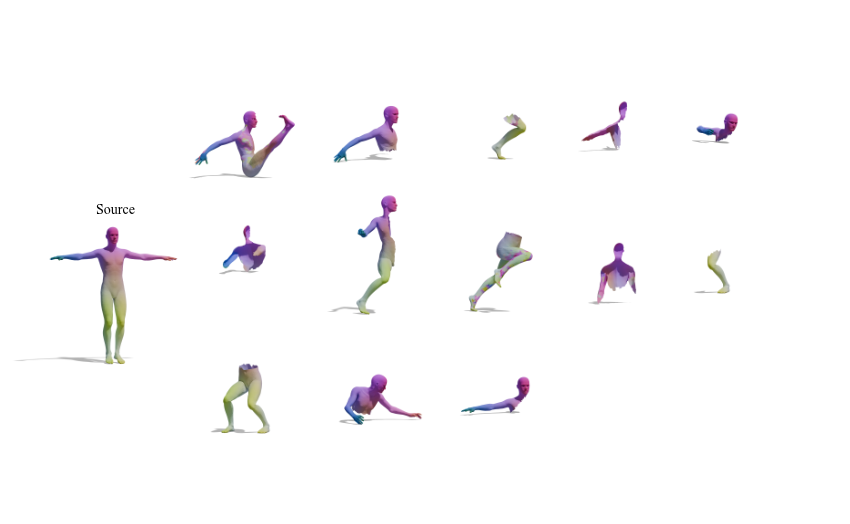}
    \caption{SHREC16-Partiality CUTS david}
    \label{fig:cuts-david}
\end{figure*}

\begin{figure*}
    \centering
    \hspace*{-1em}
    \includegraphics[width=0.95\linewidth]{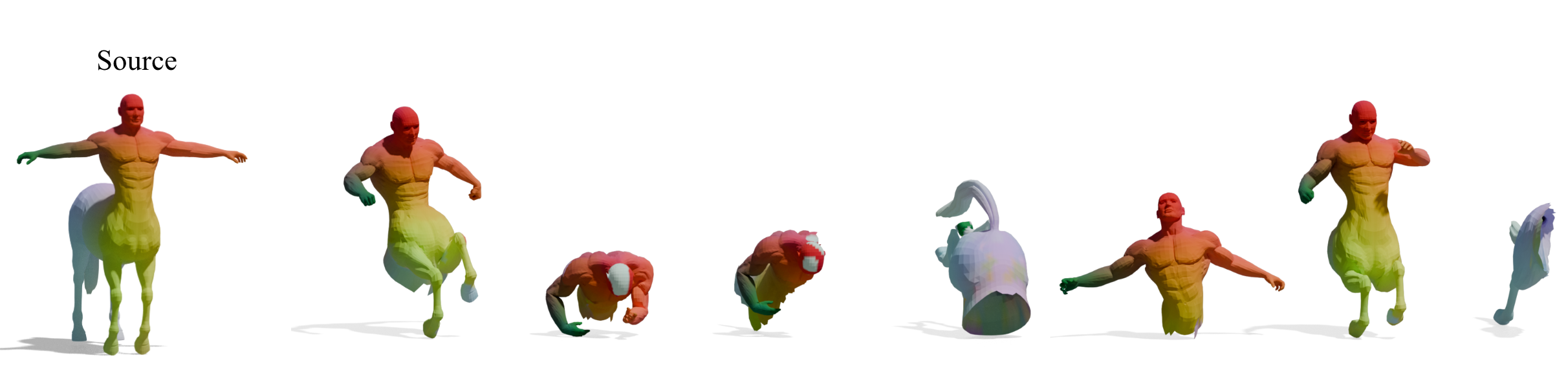}
    \caption{SHREC16-Partiality CUTS centaur}
    \label{fig:cuts-centaur}
\end{figure*}

\begin{figure*}
    \centering
    \includegraphics[width=\linewidth]{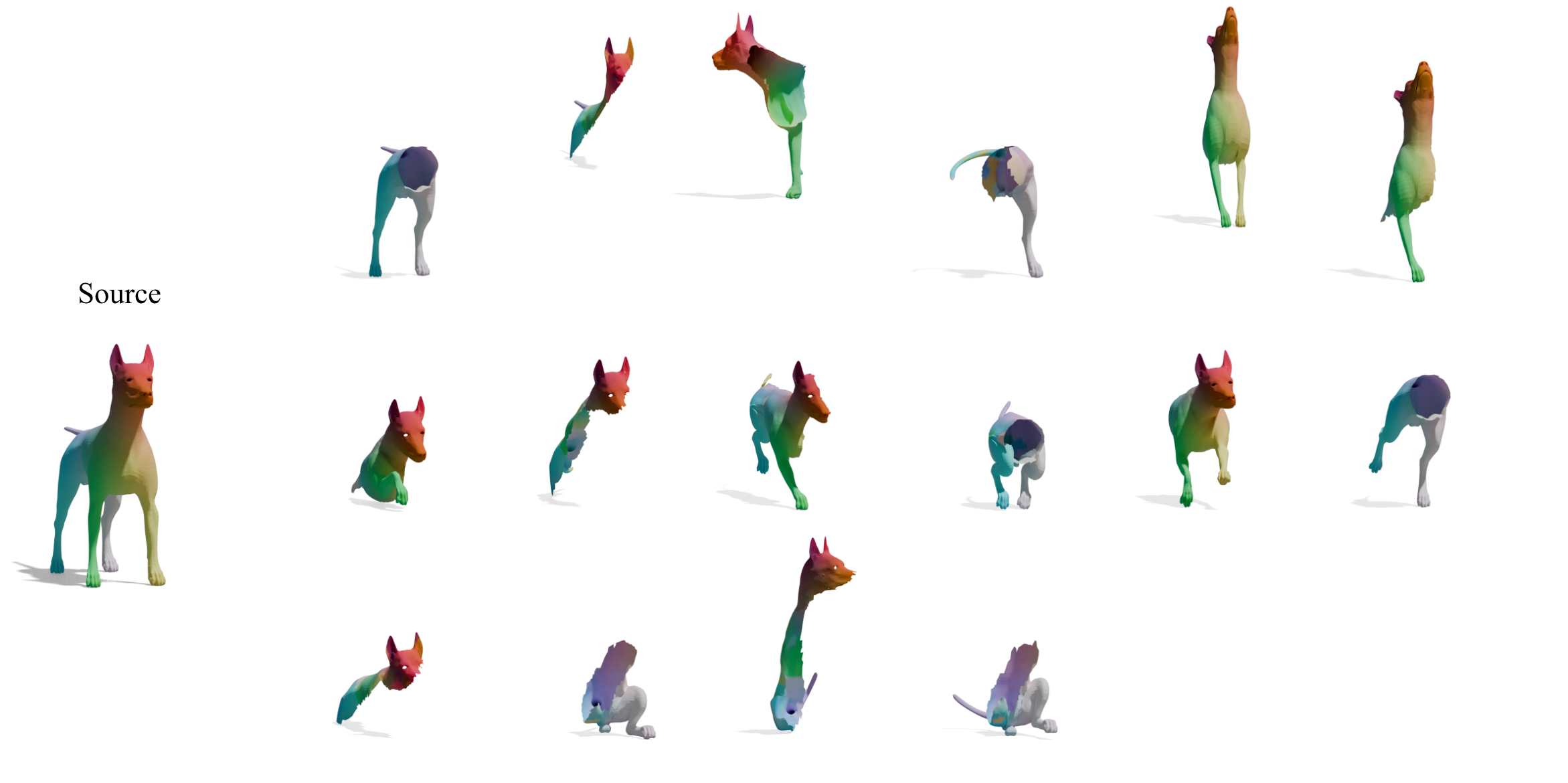}
    \caption{SHREC16-Partiality CUTS dog}
    \label{fig:cuts-dog}
\end{figure*}

\begin{figure*}
    \centering
    \includegraphics[width=\linewidth]{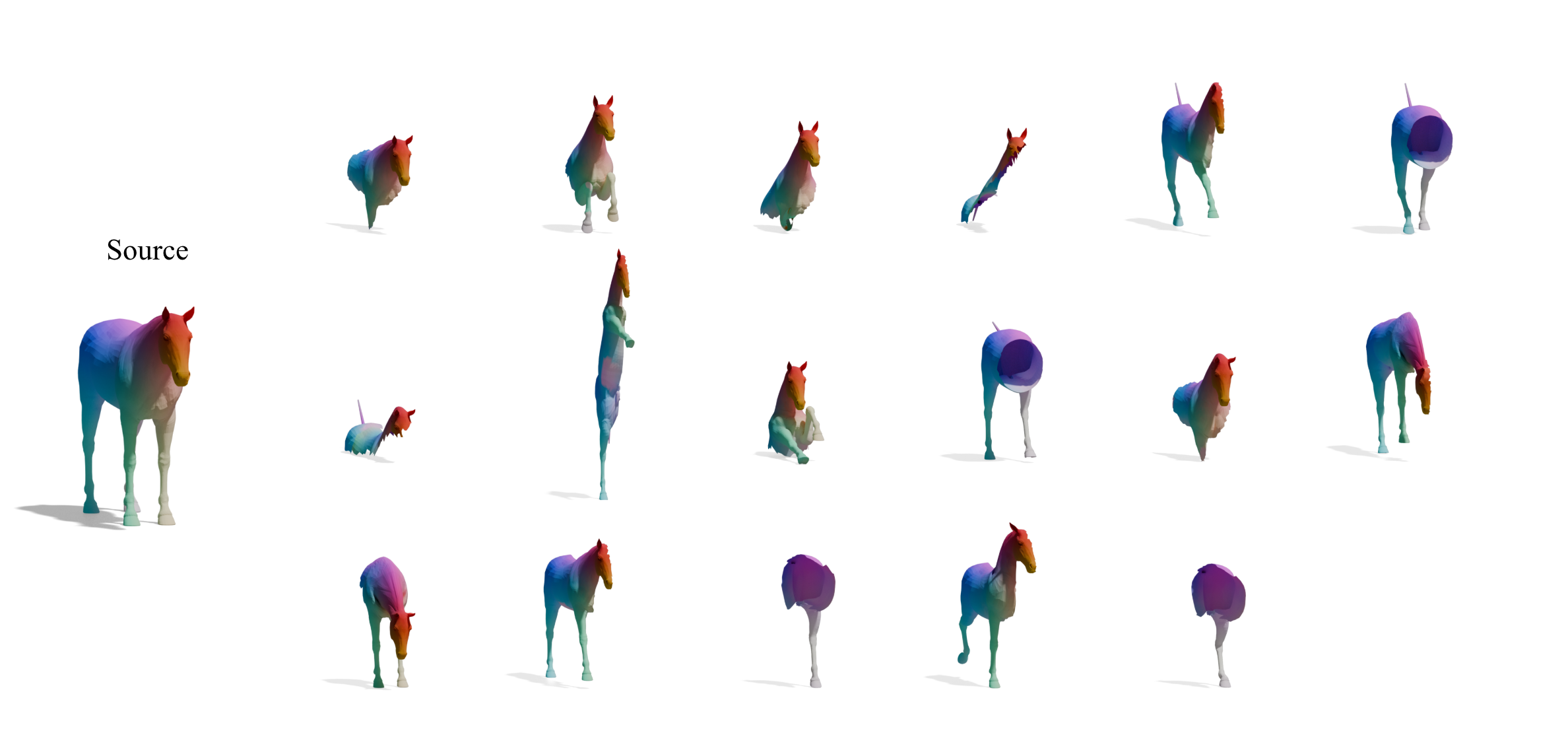}
    \caption{SHREC16-Partiality CUTS horse}
    \label{fig:cuts-horse}
\end{figure*}

\begin{figure*}
    \centering
    \includegraphics[width=\linewidth]{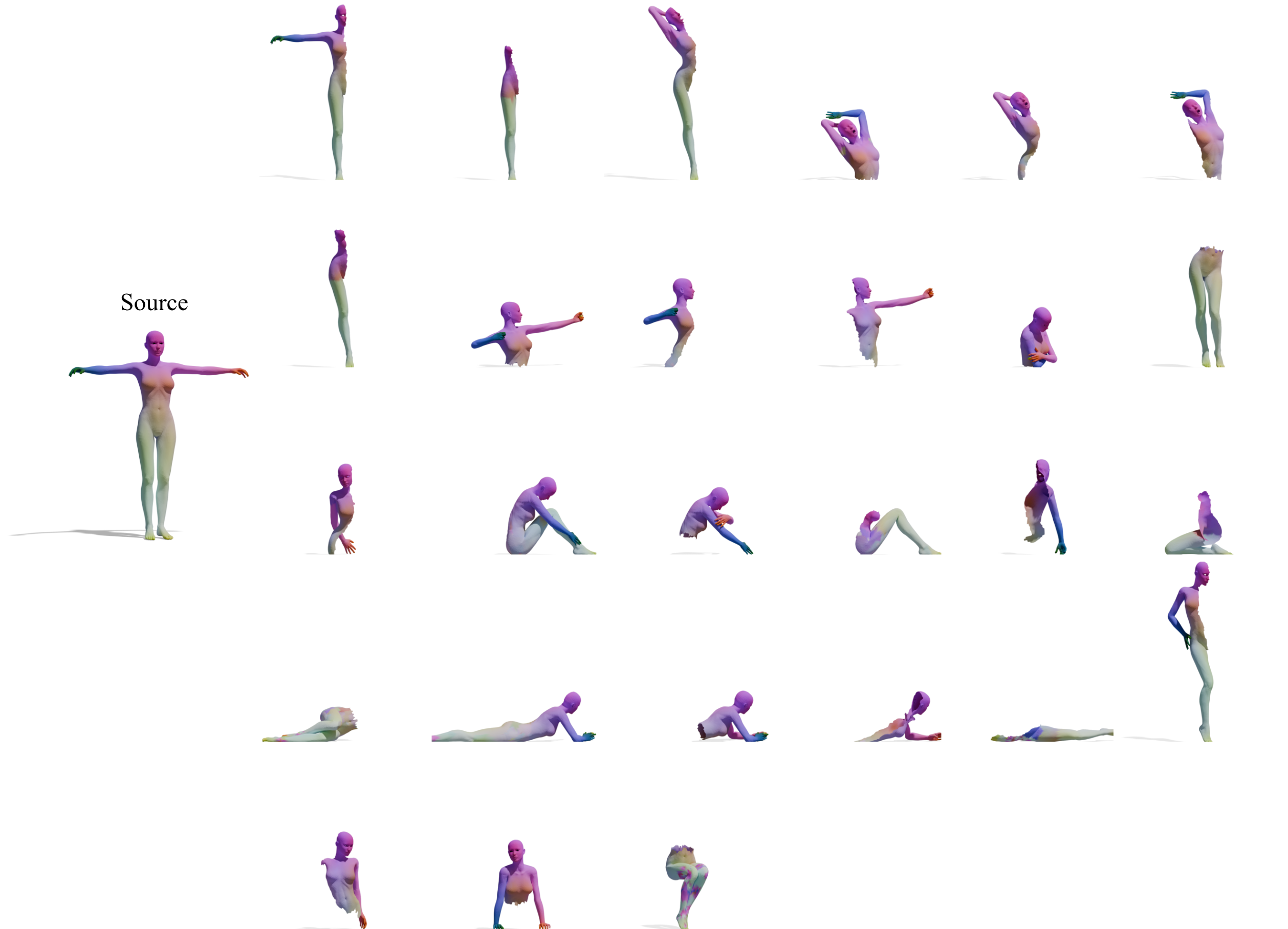}
    \caption{SHREC16-Partiality CUTS victoria}
    \label{fig:cuts-victoria}
\end{figure*}

\begin{figure*}
    \centering
    \vspace*{-2em}
    \hspace*{-3em}
    \includegraphics[width=1.1\linewidth]{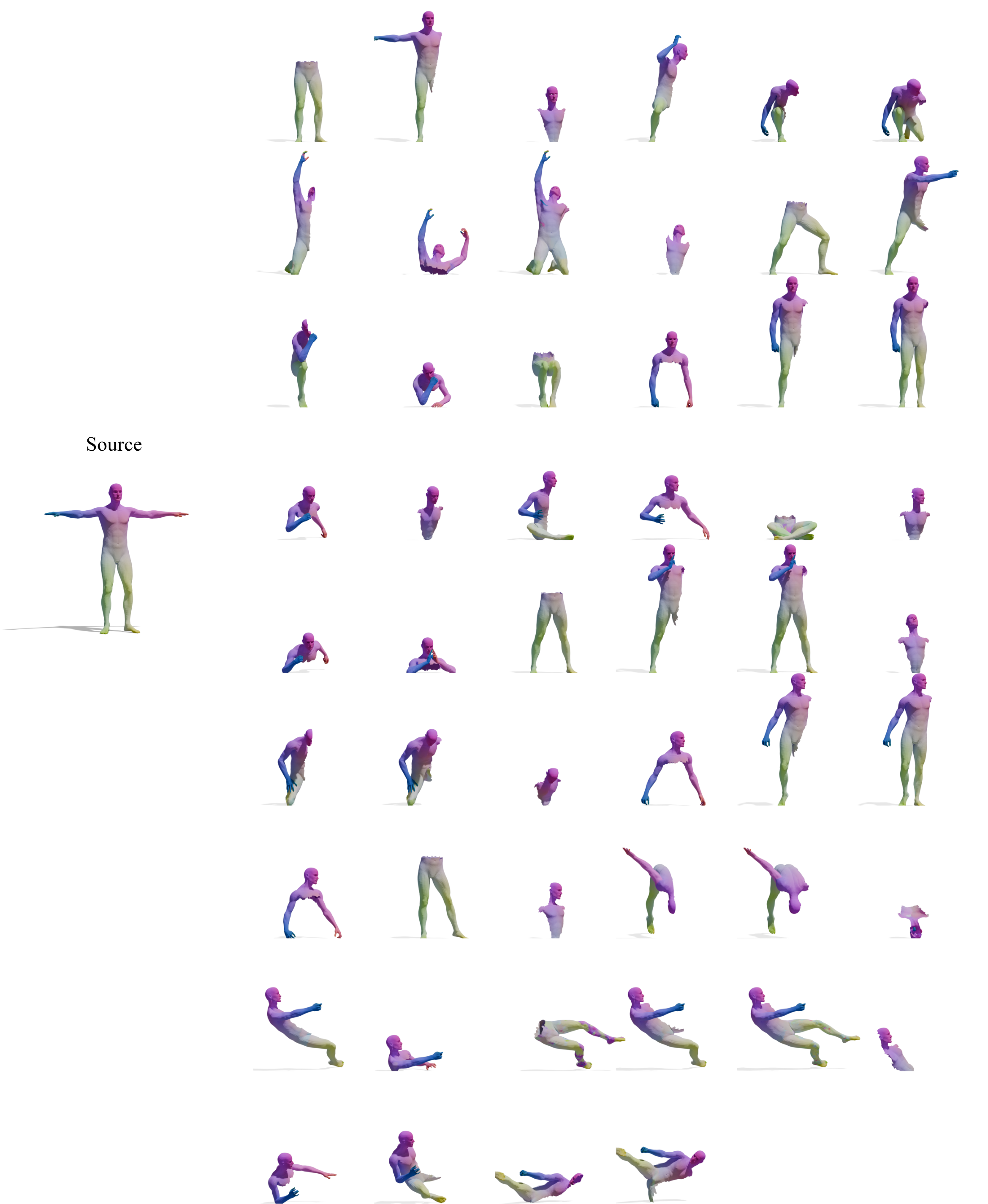}
    \caption{SHREC16-Partiality CUTS michael}
    \label{fig:cuts-michael}
\end{figure*}

\begin{figure*}
    \centering
    \hspace{-3em}
    \includegraphics[width=\linewidth]{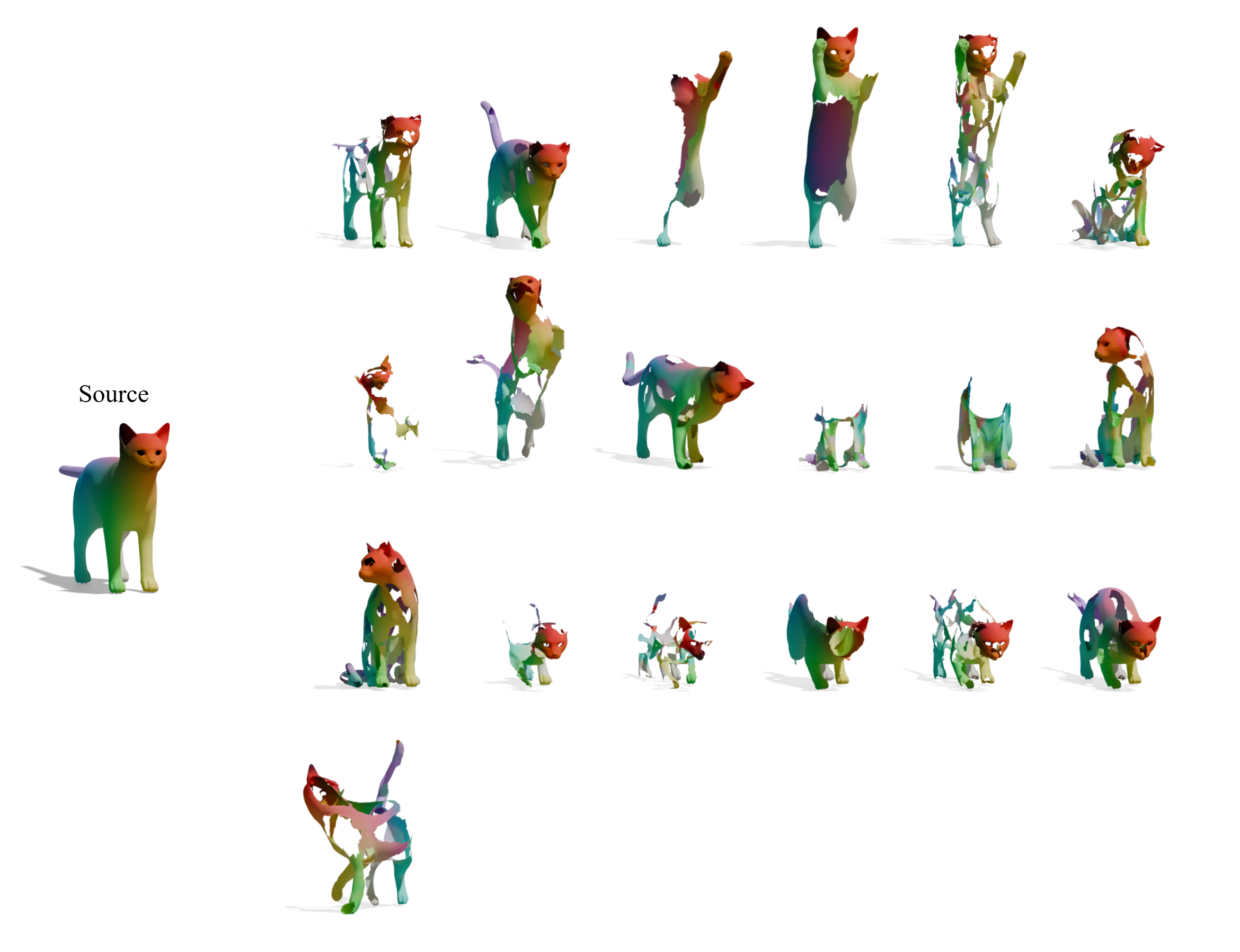}
    \caption{SHREC16-Partiality HOLES cat}
    \label{fig:holes-cat}
\end{figure*}

\begin{figure*}
    \centering
    \hspace{-3em}
    \includegraphics[width=\linewidth]{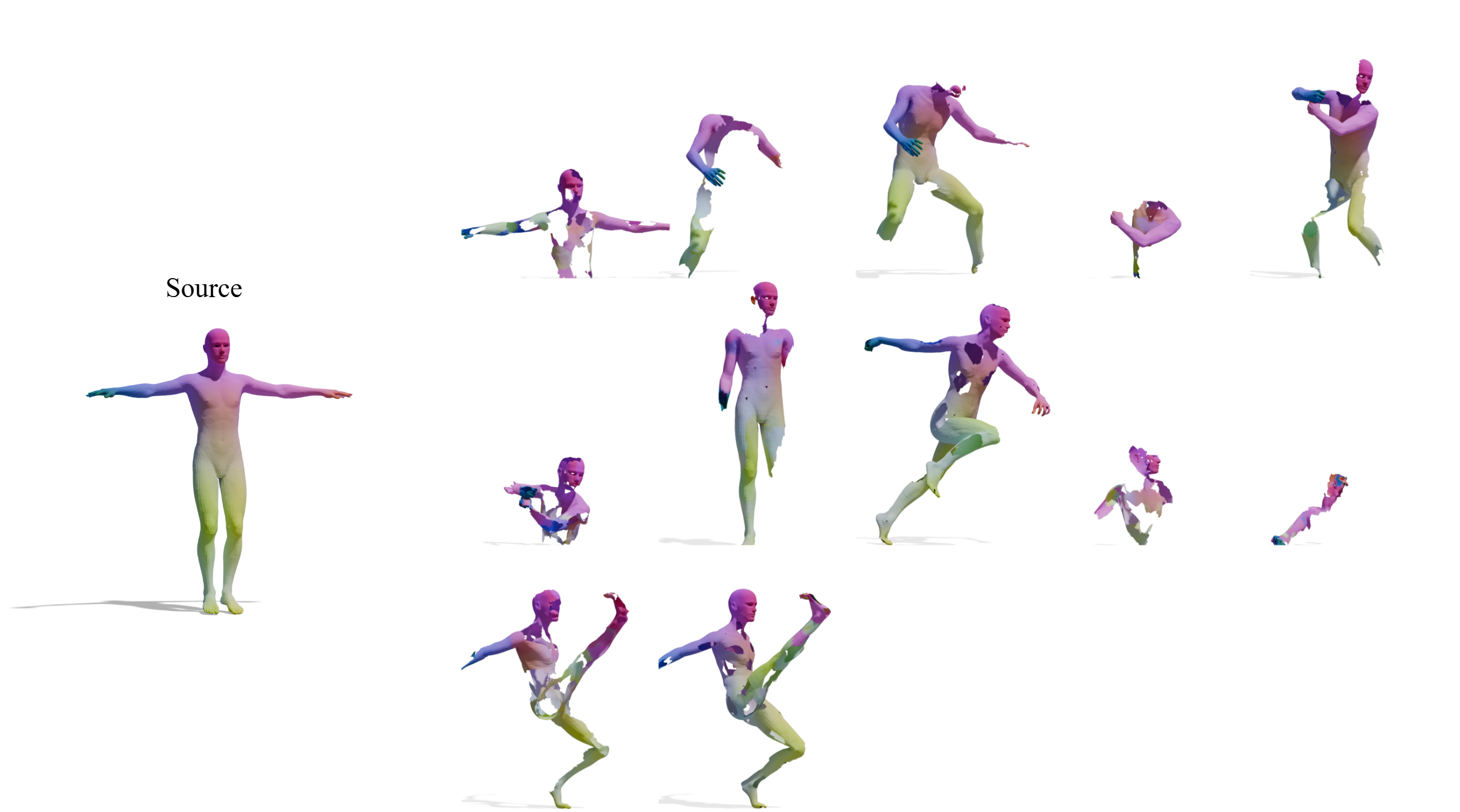}
    \caption{SHREC16-Partiality HOLES david}
    \label{fig:holes-david}
\end{figure*}

\begin{figure*}
    \centering
    \hspace*{-1em}
    \includegraphics[width=0.95\linewidth]{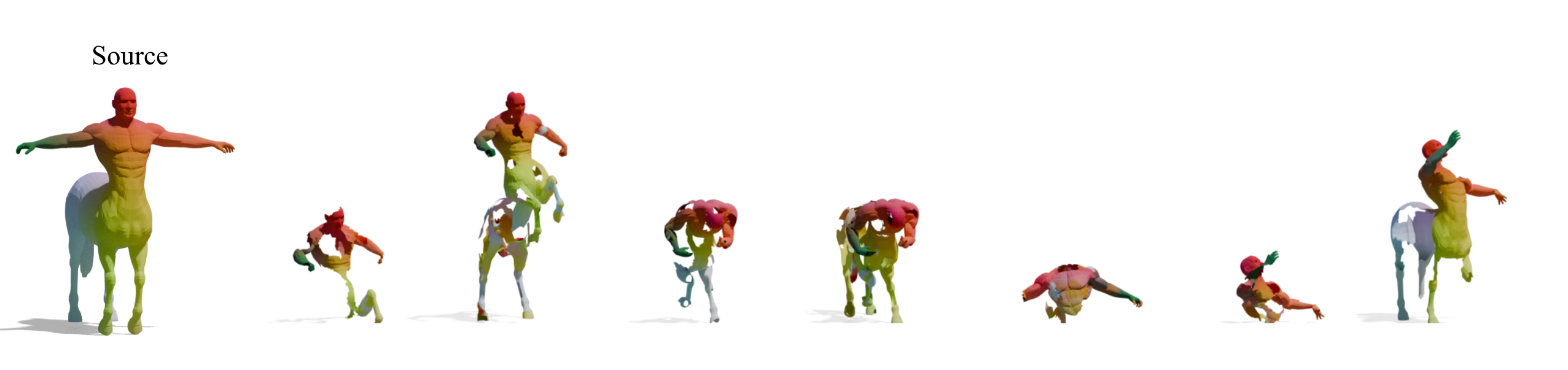}
    \caption{SHREC16-Partiality HOLES centaur}
    \label{fig:holes-centaur}
\end{figure*}

\begin{figure*}
    \centering
    \includegraphics[width=\linewidth]{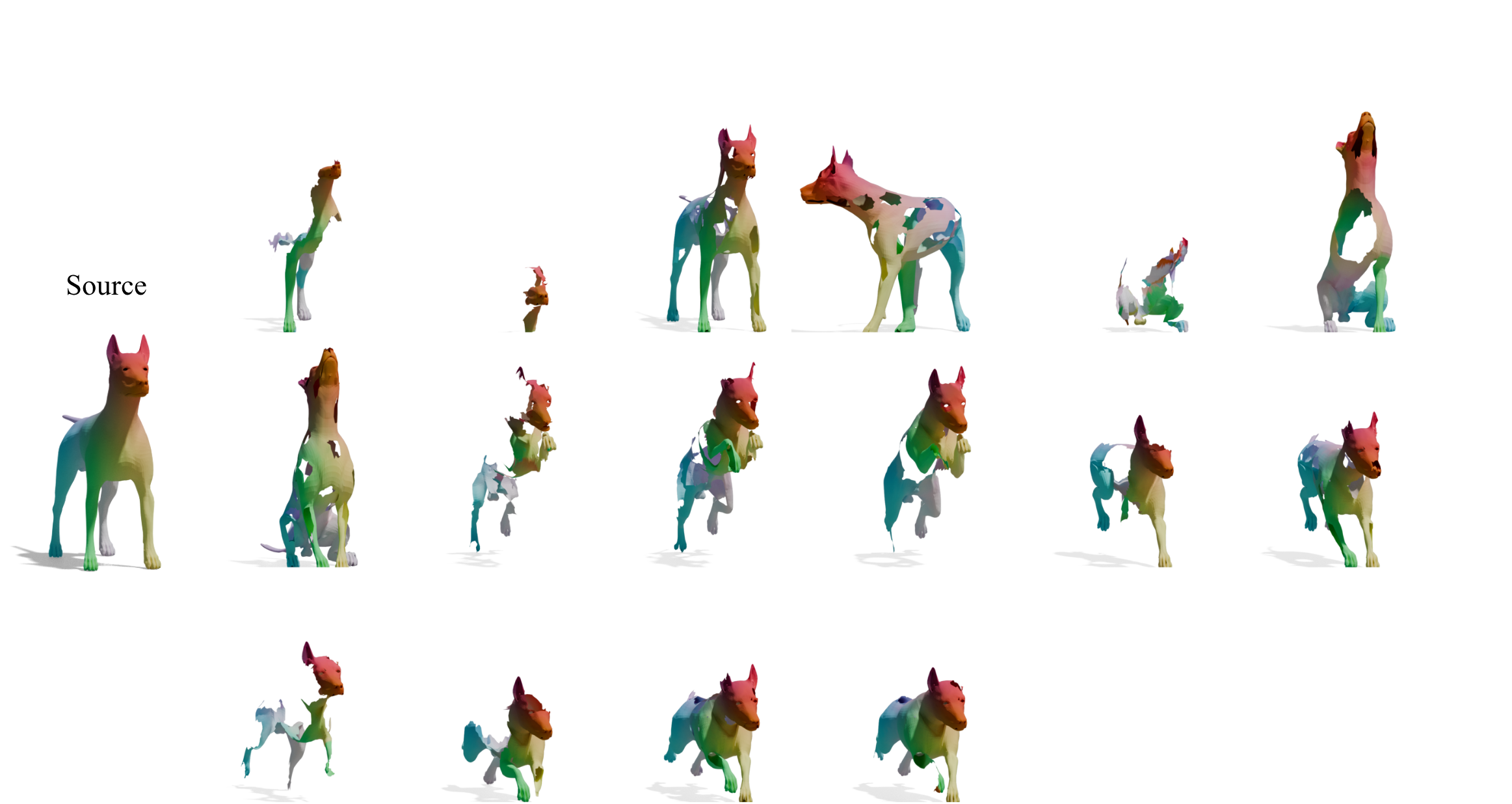}
    \caption{SHREC16-Partiality HOLES dog}
    \label{fig:holes-dog}
\end{figure*}

\begin{figure*}
    \centering
    \includegraphics[width=\linewidth]{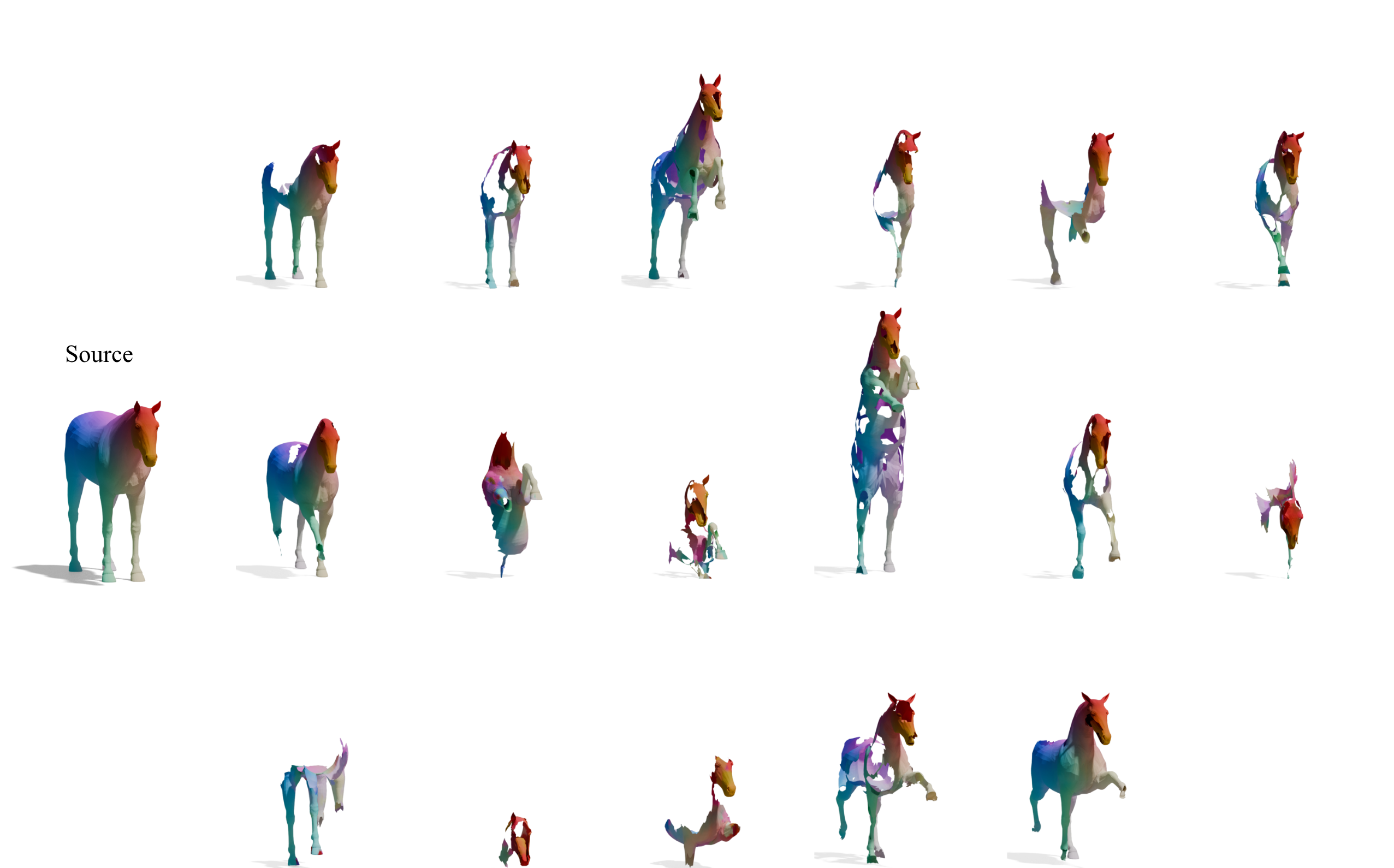}
    \caption{SHREC16-Partiality HOLES horse}
    \label{fig:holes-horse}
\end{figure*}

\begin{figure*}
    \centering
    \hspace{-3em}
    \includegraphics[width=\linewidth]{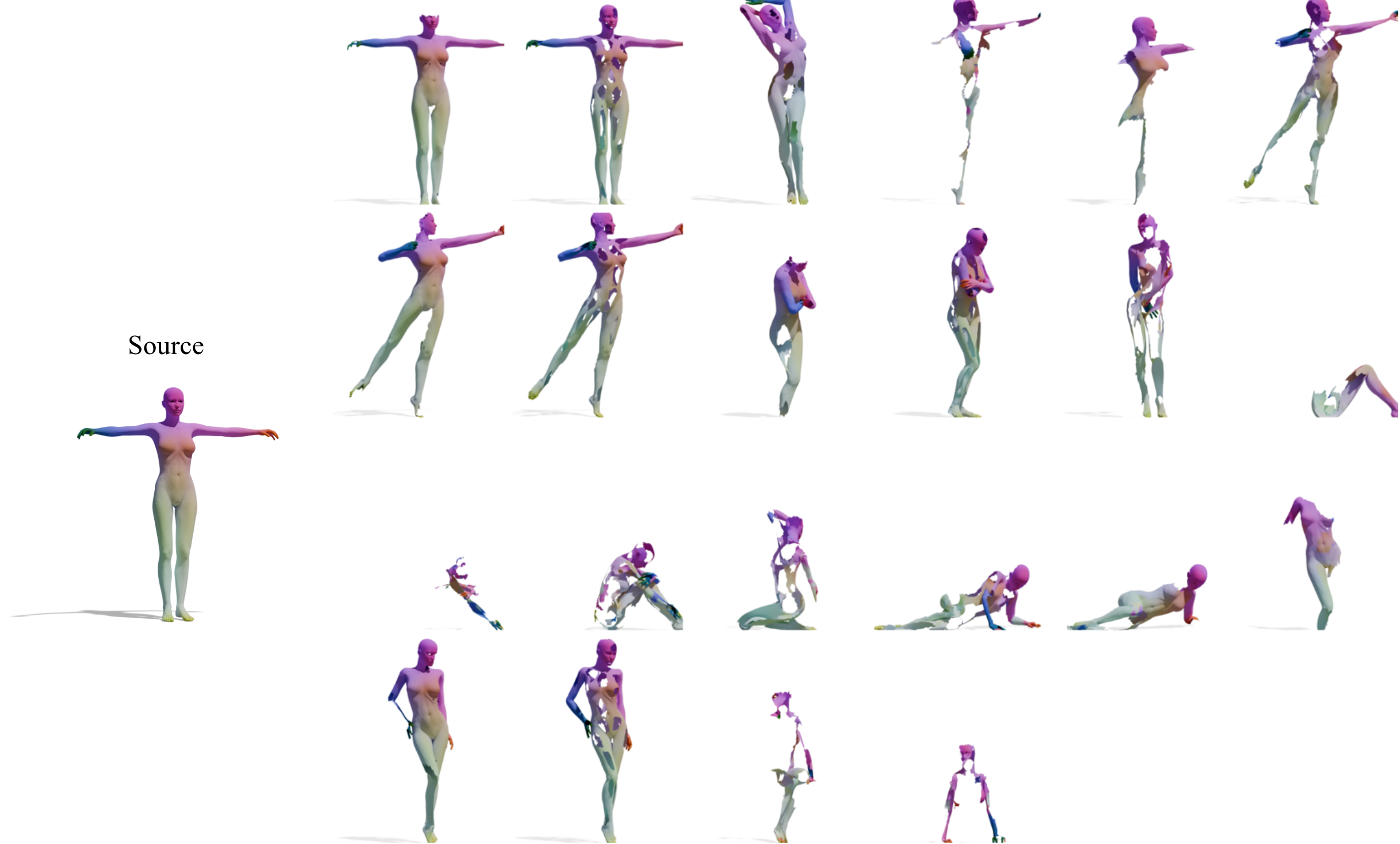}
    \caption{SHREC16-Partiality HOLES victoria}
    \label{fig:holes-victoria}
\end{figure*}

\begin{figure*}
    \centering
    \vspace*{-2em}
    \hspace*{-3em}
    \includegraphics[width=1.1\linewidth]{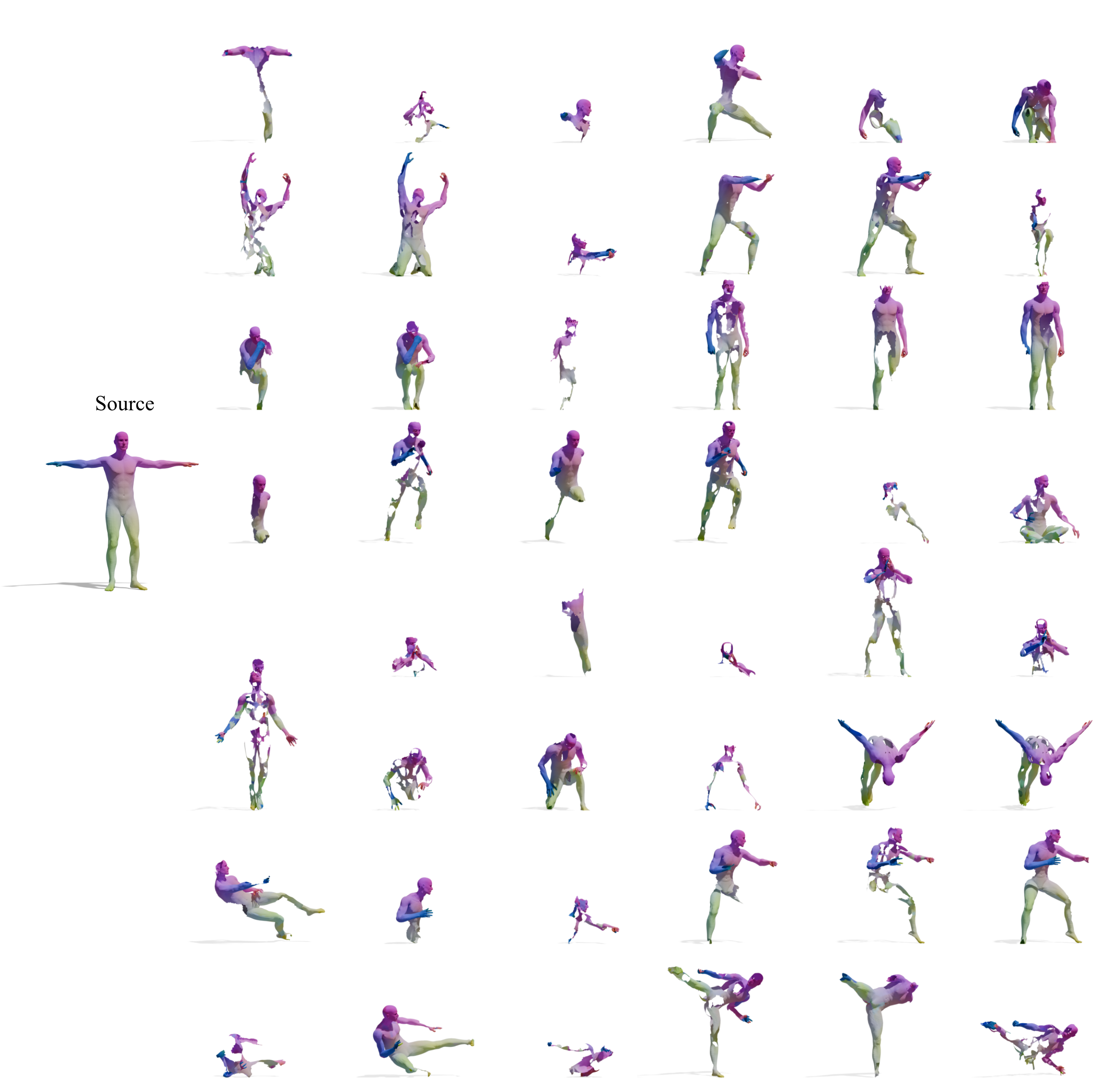}
    \caption{SHREC16-Partiality HOLES michael}
    \label{fig:holes-michael}
\end{figure*}

%% file: figs/wks/boxplot_wks.tex
\definecolor{cPLOT0}{RGB}{28,213,227}
\definecolor{cPLOT1}{RGB}{80,150,80}
\definecolor{cPLOT2}{RGB}{90,130,213}
\definecolor{cPLOT3}{RGB}{247,179,43}
\definecolor{cPLOT5}{RGB}{242,64,0}
\pgfplotsset{%
    label style = {font=\Large},
    tick label style = {font=\large},
    title style =  {font=\Large},
    legend style={  fill= gray!10,
                    fill opacity=0.6, 
                    font=\large,
                    draw=gray!20, %
                    text opacity=1}
}
\begin{tikzpicture}[scale=0.6, transform shape]
    \begin{axis}[
		width=1\columnwidth,
		height=0.7\columnwidth,
		boxplot/draw direction=y,
		grid=major,
		ymin=0,
		ymax=4,
		xmin=0.65,
		xmax=8.35,
	    ytick={0,1,2,3,4},
		yticklabels={0,1, 2, 3, 4,},
		xtick={1, 2, 3, 4, 5, 6, 7, 8, 9},
		xticklabels={$\phantom{1}$, 5, $\phantom{1}$, 10,$\phantom{1}$, 15, $\phantom{1}$, 20, $\phantom{1}$},
		xticklabel style={xshift=-15pt},
        xlabel={Feat. Channel (wks)},
		ylabel={Feat. Mean},
		every boxplot/.style={mark=x,every mark/.append style={mark size=5pt}},
		boxplotcolor/.style={color=#1,solid,very thick,fill=#1!50,mark options={color=#1,fill=#1!70}},
            boxplot/box extend=0.5%
		]
        
    \addplot+[
        boxplotcolor=cPLOT3,
        boxplot prepared={
            lower whisker=0.053429,
            lower quartile=0.4398,
            median=0.78941,
            upper quartile=2.0388,
            upper whisker=3.7972
        },
    ] coordinates {};

    \addplot+[
        boxplotcolor=cPLOT1,
        boxplot prepared={
            lower whisker=0.050434,
            lower quartile=0.12766,
            median=0.15951,
            upper quartile=0.48131,
            upper whisker=1.0108
        },
    ] coordinates {};

    \addplot+[
        boxplotcolor=cPLOT3,
        boxplot prepared={
            lower whisker=0.068584,
            lower quartile=0.44106,
            median=0.86255,
            upper quartile=2.6128,
            upper whisker=2.8763
        },
    ] coordinates {};

    \addplot+[
        boxplotcolor=cPLOT1,
        boxplot prepared={
            lower whisker=0.060452,
            lower quartile=0.12605,
            median=0.22767,
            upper quartile=0.49638,
            upper whisker=1.0501
        },
    ] coordinates {};
    
    \addplot+[
        boxplotcolor=cPLOT3,
        boxplot prepared={
            lower whisker=0.063119,
            lower quartile=0.40389,
            median=0.82369,
            upper quartile=2.8798,
            upper whisker=3.2385
        },
    ] coordinates {};

    \addplot+[
        boxplotcolor=cPLOT1,
        boxplot prepared={
            lower whisker=0.048993,
            lower quartile=0.15735,
            median=0.31463,
            upper quartile=0.79528,
            upper whisker=1.7493
        },
    ] coordinates {};
    
    \addplot+[
        boxplotcolor=cPLOT3,
        boxplot prepared={
            lower whisker=0.058317,
            lower quartile=0.33721,
            median=0.77712,
            upper quartile=2.9412,
            upper whisker=3.5723
        },
    ] coordinates {};

    \addplot+[
        boxplotcolor=cPLOT1,
        boxplot prepared={
            lower whisker=0.094227,
            lower quartile=0.19173,
            median=0.39379,
            upper quartile=0.89093,
            upper whisker=1.9358
        },
    ] coordinates {};

	\end{axis}
\end{tikzpicture}

%% file: figs/wks/boxplot_xyz.tex
\definecolor{cPLOT0}{RGB}{28,213,227}
\definecolor{cPLOT1}{RGB}{80,150,80}
\definecolor{cPLOT2}{RGB}{90,130,213}
\definecolor{cPLOT3}{RGB}{247,179,43}
\definecolor{cPLOT5}{RGB}{242,64,0}
\pgfplotsset{%
    label style = {font=\Large},
    tick label style = {font=\large},
    title style =  {font=\Large},
    legend style={  fill= gray!10,
                    fill opacity=0.6, 
                    font=\large,
                    draw=gray!20, %
                    text opacity=1}
}
\begin{tikzpicture}[scale=0.6, transform shape]
    \begin{axis}[
		width=1\columnwidth,
		height=0.7\columnwidth,
		boxplot/draw direction=y,
		grid=major,
		ymin=-0.7,
		ymax=0.6,
		xmin=0.65,
		xmax=6.35,
	    ytick={-0.6, -0.3, 0, 0.3, 0.6},
		yticklabels={-0.6, -0.3, 0, 0.3, 0.6},
		xtick={1, 2, 3, 4, 5, 6, 7, 8, 9},
		xticklabels={$\phantom{1}$, X, $\phantom{1}$, Y,$\phantom{1}$, Z, $\phantom{1}$, 20, $\phantom{1}$},
		xticklabel style={xshift=-15pt},
        xlabel={Feat. Channel (xyz)},
		every boxplot/.style={mark=x,every mark/.append style={mark size=5pt}},
		boxplotcolor/.style={color=#1,solid,very thick,fill=#1!50,mark options={color=#1,fill=#1!70}},
            boxplot/box extend=0.5%
		]
        
    \addplot+[
        boxplotcolor=cPLOT3,
        boxplot prepared={
            lower whisker=-0.16713,
            lower quartile=-0.081806,
            median=0.015068,
            upper quartile=0.072461,
            upper whisker=0.098933
        },
    ] coordinates {};

    \addplot+[
        boxplotcolor=cPLOT1,
        boxplot prepared={
            lower whisker=-0.057712,
            lower quartile=-0.016124,
            median=0.00046633,
            upper quartile=0.021131,
            upper whisker=0.053947
        },
    ] coordinates {};

    \addplot+[
        boxplotcolor=cPLOT3,
        boxplot prepared={
            lower whisker=-0.24452,
            lower quartile=-0.089113,
            median=0.095405,
            upper quartile=0.26477,
            upper whisker=0.29558
        },
    ] coordinates {};

    \addplot+[
        boxplotcolor=cPLOT1,
        boxplot prepared={
            lower whisker=-0.61832,
            lower quartile=-0.21584,
            median=0.044799,
            upper quartile=0.28689,
            upper whisker=0.54048
        },
    ] coordinates {};
    
    \addplot+[
        boxplotcolor=cPLOT3,
        boxplot prepared={
            lower whisker=-0.041645,
            lower quartile=-0.012556,
            median=-0.0021148,
            upper quartile=0.008481,
            upper whisker=0.03694
        },
    ] coordinates {};

    \addplot+[
        boxplotcolor=cPLOT1,
        boxplot prepared={
            lower whisker=-0.050331,
            lower quartile=-0.022025,
            median=-0.0043905,
            upper quartile=0.009749,
            upper whisker=0.053343
        },
    ] coordinates {};
    
	\end{axis}
\end{tikzpicture}